\documentclass{article} 

\usepackage{amsmath,amsfonts,bm}

\def\eqref#1{equation~\ref{#1}}

\def\1{\bm{1}}

\DeclareMathAlphabet{\mathsfit}{\encodingdefault}{\sfdefault}{m}{sl}
\SetMathAlphabet{\mathsfit}{bold}{\encodingdefault}{\sfdefault}{bx}{n}

\usepackage{hyperref}
\usepackage{url}
\usepackage{tikz} 
\usepackage{bbm} 
\usepackage{algorithm}
\usepackage{algpseudocode}
\usepackage{fullpage} 
\usepackage{natbib} 

\algblock{ParFor}{EndParFor}
\algnewcommand\algorithmicparfor{\textbf{parfor}}
\algnewcommand\algorithmicpardo{\textbf{do}}
\algnewcommand\algorithmicendparfor{\textbf{end\ parfor}}
\algrenewtext{ParFor}[1]{\algorithmicparfor\ #1\ \algorithmicpardo}
\algrenewtext{EndParFor}{\algorithmicendparfor}

\newcommand{\pp}[1]{\vspace{6pt}\noindent\textbf{#1}} 

\usepackage{mathtools} 
\usepackage{cleveref} 
\usepackage{nicefrac} 
\usepackage{amsmath, amssymb, amsthm}
\usepackage{wrapfig} 
\usepackage{subcaption} 
\usepackage{stmaryrd}
\usepackage{enumitem} 

\newcommand{\gfn}{\mathfrak{g}}

\newtheorem{theorem}{Theorem}[section] 
\newtheorem{proposition}[theorem]{Proposition}
\newtheorem{corollary}[theorem]{Corollary}
\newtheorem{lemma}[theorem]{Lemma} 

\theoremstyle{definition} 

\newtheorem{challenge}[theorem]{Challenge}
\newtheorem{definition}[theorem]{Definition}
\newtheorem{example}[theorem]{Example} 
\newtheorem{remark}[theorem]{Remark} 
\newtheorem{assumption}[theorem]{Assumption}

\title{Learning a Mixture of GFlowNets}

\author{Tiago da Silva \\
MBZUAI, UAE \\
\texttt{tiago.dasilva@mbzuai.ac.ae} \\
\and 
Amauri H. Souza \\
Avra, Brazil \& Federal Institute of Ceará, Brazil \\
\texttt{amauri.holanda@avra.ai} 
\and 
Salem Lahlou \\
MBZUAI, UAE \\
\texttt{salem.lahlou@mbzuai.ac.ae}
}

\begin{document}

\maketitle

\begin{abstract}
    Learning an ensemble of GFlowNets to sample from a discrete target distribution has become a common approach for achieving better state space exploration and convergence than that of a monolithic sampler.
    However, these methods often add a substantial runtime overhead to the base model, and their conceptual connection remains elusive.
    To address this, we first propose a general-purpose theoretical framework for describing a mixture of GFlowNets, which we specialize into continuously (CI) and discretely indexed (DI) collections.
    On the one hand, we show CI GFlowNets can be interpreted through the lens of a \emph{random features expansion}, provably boosting the sampler's expressivity in graph-structured tasks and reducing learning instability via spectral shifting. 
    On the other hand, we demonstrate DI GFlowNets encompass prior approaches for GFlowNet training and provide the foundation for the newly proposed \emph{Stratum-Conditioned} (SC) GFlowNets.  
    This method, which is inspired by the Doob's $h$-transform of Markov chains, decomposes the state space according to a prescribed modular function and restricts each component to sample from a distinct subset of it. 
    Importantly, SC GFlowNets support centralized and component-wise embarrassingly parallel training, 
    and we show both of them significantly speed up learning convergence and mode coverage without introducing any non-negligible extra computation. 
    \looseness=-1 
\end{abstract}

\section{Introduction}

Suppose we are given an unnormalized distribution $R$ over a discrete space $\mathcal{X}$. 
A common problem in both statistical inference and probabilistic machine learning is to draw samples from $\mathcal{X}$ distributed according to $R$ \citep{bishop2007, neal2011mcmc, blei2017}.
When $\mathcal{X}$ is intractably large---e.g., the space of graphs containing $N$ nodes or of subsets of a set with $N$ elements for largish $N$---addressing this challenge often 
requires the use of approximate algorithms such as Markov chain Monte Carlo (MCMC) or variational inference (VI).
Notably, MCMC suffers from slow convergence and correlated samples, especially in discrete spaces, and its correctness cannot be tractably assessed in general \citep{mcbook}.
VI, in contrast, reframes the sampling problem as that of finding the best approximation to $R$ within a 
tractable family, thereafter enabling the generation of independent samples from a measurably correct distribution \citep{malkin2023gflownets, zimmermann2023a}.  
\looseness=-1 


As a major milestone of VI for discrete spaces, GFlowNets \citep{bengio2021, theory} learn a sequential decision process that iteratively constructs each $x \in \mathcal{X}$ from an initial (empty) state in proportion to $R(x)$. 
Since their conception, they have been successfully deployed into applications ranging from causal discovery \citep{deleu2022bayesian, deleu2023joint} to LLM fine-tuning \citep{hu2023amortizing, venkatraman2024amortizingintractableinferencediffusion} and phylogenetic inference \citep{zhou2024phylogfn}.
Despite their potential, they suffer from insufficient state space exploration \citep{kim2025adaptive, madan2025towards}, slow training convergence \citep{liu2024generativemarginalizationmodels}, and learning instability (\Cref{fig:hvariances}), often failing to simulate even simple multi-modal distributions defined over relatively small $\mathcal{X}$ (\Cref{fig:randomwalksa}). \looseness=-1 


\begin{wrapfigure}[9]{l}{.4\textwidth}
    \vspace{-12pt} 
    \includegraphics[width=.8\linewidth, page=1]{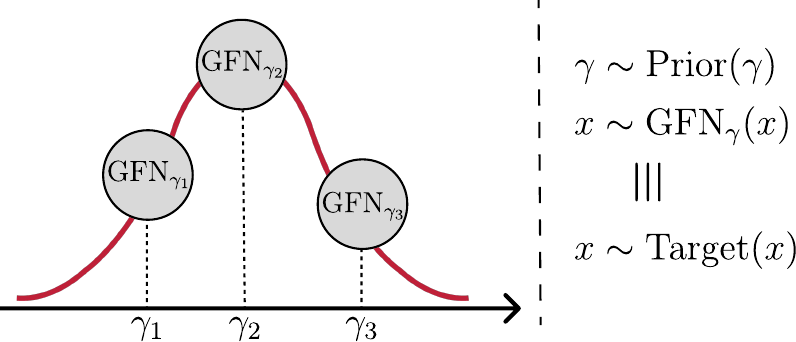}
    \caption{An ensemble of GFlowNets that, when averaged, matches the target.}
    \label{fig:theorysa} 
\end{wrapfigure}
To mitigate this issue, recent works have proposed learning an \emph{ensemble} of GFlowNets in which each component focuses on a nearly-unique subset of the state space \citep{yu2026exploringmultiplehighscoringsubspaces, yu2026partialgflownetacceleratingconvergence, dallantonia2025boostedgflownetsimprovingexploration}; in doing so, they incentivize the exploration of unvisited high-probability regions and accelerate learning convergence.
However, they often incur a substantial computational overhead---requiring several neural networks to be 
separately trained---and, despite being similar, lack a shared conceptual foundation.
In this scenario, our first contribution is to introduce a unifying theory 
to describe a mixture of GFlowNets (\Cref{fig:theorysa}), shedding light on the applicability and limitations of such an approach.
Drawing on this framework, we propose distinct algorithms that we show to both stabilize and accelerate learning convergence.   

In this context, we separate our presentation into continuously (CI) and discretely indexed (DI) mixtures of GFlowNets. 
On the one hand, CI GFlowNets learn an uncountable collection of samplers, each of which is associated to a real-valued vector whose value serves as an extra input to the underlying neural network parameterizing the sampler.
We demonstrate this formulation is connected to the theory of random features for graph learning \citep{sato2021random, abboud2021surprising}, providing a computationally simple approach to provably boost the sampler's expressivity in graph-structured tasks (\Cref{sec:continuousaa,sec:continuousa}).
In addition, we show CI GFlowNets stabilize learning by shifting the spectrum of the loss function's Hessian, reducing the dependence of the learned model on exogenous factors such as the initial value of the pseudorandom number generator (Proposition~\ref{prop:regularizationsa}). 

On the other hand, DI GFlowNets consist of a finite (or countably infinite) mixture of samplers.
Each sampler learns in proportion to a \emph{re-weighted} target; for the $k$th component, this is $x \mapsto R(x) q(k|x)$ for a \emph{posterior mixing distribution} $q(\cdot | x)$ (see \Cref{sec:discretesa}).
In this sense, DI GFlowNets support two learning modalities: centralized, with an embedding of $k$ being used as extra input to a component-wise shared model, and embarrassingly parallel \citep{neiswanger2014}, with distinct components being independently trained across non-communicating processes. 
We prove this framework encompasses two prior methods, Subgraph Asynchronous Learning \citep{silva2025generalization} and Boosted GFlowNets \citep{dallantonia2025boostedgflownetsimprovingexploration}, unifying otherwise disparate algorithms. 
\looseness=-1 

As the simplest non-trivial instantiation of DI GFlowNets, we propose setting $q(\cdot|x)$ as a sparse distribution assigning zero probability to all but a few components. 
This effectively corresponds to defining a \emph{covering} of the state space. 
We show, however, that conditioning a sampler to exclusively explore an arbitrary subset of $\mathcal{X}$ is NP-complete in general.
We consequently introduce \emph{Stratum-Conditioning} (SC) as a tractable approach for learning DI GFlowNets.
Simply put, SC partitions the state space by binning the outputs of a modular and real-valued function defined on $\mathcal{X}$, with each bin being called a \emph{stratum} (hence the name). 
This function can be set to the trajectory length required for reaching $x \in \mathcal{X}$, for instance, or as the distance between $x \in \mathcal{X}$ and a chosen reference point according to a prescribed metric (\Cref{sec:scgflownetsa}). 
We demonstrate the set of samplers learned in this way are connected via Doob's $h$-transform of a base GFlowNet (\Cref{sec:doobsa}).  
\looseness=-1 

Our experiments highlight SC GFlowNets substantially improve both learning convergence and state space exploration across diverse benchmark tasks.
Importantly, contrarily to recent algorithmic advances in the GFlowNet literature, SC \emph{adds no computational overhead} to training, being as fast to train as a single-component sampler (see \Cref{fig:runtimesa} and \Cref{fig:compssetsa,fig:compshypergridsa,fig:compsdiffusionsa}). 
As with most of machine learning research, which historically started from simple algorithms that have been incrementally enhanced over time, we believe SC GFlowNets set a minimal baseline for learning a mixture of GFlowNets. 
In summary, our contributions are as follows. 
\looseness=-1 

\begin{enumerate}[leftmargin=12pt]
    \item We introduce an abstract framework for learning an ensemble of GFlowNets, demonstrating its usefulness in providing an unifying view of prior techniques and in designing novel algorithms. 
    \item We define 
    continuously (CI) and discretely indexed (DI) GFlowNets, outlining their empirical and theoretical advantages over a single-component sampler.
    \item We propose Stratum-Conditioned (SC) GFlowNets as an effective form of learning DI GFlowNets, supporting both centralized and parallel training. 
    \item We show that SC GFlowNets speed up convergence and mode discovery across benchmark tasks. \looseness=-1 
\end{enumerate}

\section{Preliminaries} \label{sec:preliminariesa} 

\pp{Definitions.} 
We let $\mathcal{X}$ and $\mathcal{S}$ be finite spaces, with $\mathcal{S}$ corresponding to \emph{partial realizations} of $\mathcal{X}$.
We also let $s_{o} \in \mathcal{S}$ be the \emph{initial state} from which any $x \in \mathcal{X}$ can be constructed. 
An example the reader should keep in mind is when $\mathcal{X}$ is the set of subsets of size $K$ of a given finite set and $\mathcal{S}$ is the set of subsets of size less than $K$, with $s_{o} = \emptyset$. 
We call a \emph{state graph} a directed acyclic graph having $s_{o}$ as the only state without incoming edges and $x \in \mathcal{X}$ as the only states without outgoing edges, which we call \emph{terminal}, and such that there is a path from $s_{o}$ to every $x \in \mathcal{X}$ containing only states from $\mathcal{S}$. 
\looseness=-1 

Notationally, we let $[N] = \{1, \dots, N\}$ for each positive integer $N \in \mathbb{N}$ and $s \rightsquigarrow s'$ be the set of trajectories $\tau = \{(s, s_{1}), (s_{1}, s_{2}), \dots, (s_{d}, s')\}$ from $s \in \mathcal{S}$ to $s' \in \mathcal{S} \cup \mathcal{X}$. 


\pp{GFlowNets.} 
A GFlowNet learns to stochastically construct $x \in \mathcal{X}$ from $s_{o}$ by following a policy $p_{F} \colon \mathcal{S} \times (\mathcal{S} \cup \mathcal{X}) \rightarrow [0, 1]$ such that $p_{F}(s, s') > 0$ if and only if $s \rightarrow s'$ in the state graph. 
The objective is for the marginal distribution of $p_{F}$ over $\mathcal{X}$ to match the unnormalized distribution $R \colon \mathcal{X} \rightarrow \mathbb{R}_{+}$ from which we aim to sample, i.e., 
\begin{equation} \label{eq:marginalsa} 
    m_{F}(x) \coloneqq \sum_{\tau \in s_{o} \rightsquigarrow x} \prod_{(s_{i}, s_{i + 1}) \in \tau} p_{F}(s_{i}, s_{i + 1}) \coloneqq \sum_{\tau \in s_{o} \rightsquigarrow x} p_{F}(s_{o}, \tau) \propto R(x). 
\end{equation}
As $m_{F}(x)$ is generally intractable, we introduce a \emph{backward policy} $p_{B}(x, \tau)$ representing the probability of having followed $\tau$ given that $x$ was sampled. 
We then search for $(p_{F}, p_{B})$ satisfying a \emph{consistency condition}, with a popular choice being the 
trajectory balance \citep[TB]{malkin2022trajectory}: $Z \cdot p_{F}(s_{o}, \tau) = R(x) \cdot p_{B}(x, \tau)$ and $Z$ as a learned constant. 
In practice, $p_{F}(s, \cdot)$ is parameterized as a neural network and this search is carried out via stochastic gradient descent. 
Please refer to \Cref{sec:backgroundsa,sec:relatedsa} for a comprehensive background and recent advances on GFlowNets. \looseness=-1 

\section{Learning a Mixture of GFlowNets}

Simply put, the central challenges in achieving \Cref{eq:marginalsa} are the problem's \emph{under-determination}, owning to the existence (in general) of infinitely many pairs $(p_{F}, p_{B})$ abiding by any given consistency condition, and \emph{sample inefficiency}, requiring extensive state graph exploration for finding a resolutive $(p_{F}, p_{B})$.
We will show a mixture formulation can naturally address both of them, focusing on the latter in the main text and elaborating on the former in \Cref{sec:continuousa} of the supplement. 
\looseness=-1












Towards this objective, we develop an abstract framework for learning a mixture of GFlowNets. 
For starters, we let $\Gamma$ be an \emph{index space} and $\{p_{F}^{\smash[t]{(\gamma)}}, p_{B}^{\smash[t]{(\gamma)}}\}_{\gamma \in \Gamma}$ be the corresponding set of policies. 
Our goal is for the marginal distribution of these policies on each $x \in \mathcal{X}$, when weightedly averaged over $\gamma \in \Gamma$, to match the target $R(x)$.
Formally, we assume $\Gamma$ to be embedded in a topological measurable space equipped with a measure $\nu$ over its Borel $\sigma$-algebra.
This allows us to set aside the specific properties of $\Gamma$, while accounting for common use cases via a unified language.
\looseness=-1 

\begin{remark}[$\Gamma$ can be either discrete or continuous] 
    Common choices for $\Gamma$ and $\nu$ are $\Gamma = \mathbb{R}^{d_{\gamma}}$ with $\nu$ as the Lebesgue measure and $\Gamma = [K]$ with $\nu$ as the counting measure (\Cref{sec:continuousa,sec:discretesa}). \looseness=-1 
\end{remark} 

\pp{Designing a mixture of GFlowNets.} 
With this setup, we let $p_{M}$ be a probability measure on $\Gamma$ absolutely continuous with respect to $\nu$.
The subscript $M$ indicates $p_{M}$ is a \emph{mixing (prior)} distribution over $\Gamma$. 
Our goal is for the marginal identity analogous to \Cref{eq:marginalsa} \looseness=-1 
\begin{equation} \label{eq:marginalss} 
    \int_{\Gamma} p_{M}(\mathrm{d}\gamma) \sum_{\tau \in s_{o} \rightsquigarrow x} p_{F}^{\smash[t]{(\gamma)}}(s_{o}, \tau) \coloneqq \int_{\Gamma} p_{M}(\mathrm{d}\gamma) m_{F}^{\smash[t]{(\gamma)}}(x) \propto R(x)
\end{equation}
to be satisfied. 
As in \Cref{eq:marginalsa}, we call $m_{F}^{\smash[t]{(\gamma)}}(x)$ the marginal of $p_{F}^{\smash[t]{(\gamma)}}(s_{o}, \cdot)$ over $\mathcal{X}$. 
Under this condition, we can sample from $R$ by first drawing $\gamma \sim p_{M}(\mathrm{d}\gamma)$ and then sampling $x \sim m_{F}^{\smash[t]{(\gamma)}}(x)$. 
Nevertheless, \Cref{eq:marginalss} is doubly intractable: neither $m_{F}^{\smash[t]{(\gamma)}}$ nor its average under $\gamma \sim p_{M}$ can be computed efficiently. 
We circumvent this issue by introducing a \emph{mixing (posterior) distribution} $q_{M}(\mathrm{d} \gamma | x)$ and rewriting \Cref{eq:marginalss} as\footnote{Recall $f(\gamma) \mu(\mathrm{d}\gamma) = g(\gamma) \nu(\mathrm{d}\gamma)$ means $\int_{B} f(\gamma) \mu(\mathrm{d}\gamma) = \int_{B} g(\gamma) \nu(\mathrm{d}\gamma)$ for any measurable set $B$.}
\begin{equation} \label{eq:mmtbsa}
    p_{M}(\mathrm{d}\gamma) m_{F}^{\smash[t]{(\gamma)}}(x) \propto R(x) q_{M}(\mathrm{d} \gamma | x). 
\end{equation}
The role of $q_{M}$ in the equation above is akin to that of \emph{responsibilities} in expectation-maximization algorithms \citep{dempster1977}, representing the probability a given $x \in \mathcal{X}$ was drawn from a $\gamma$-indexed sampler. 
As in \Cref{sec:preliminariesa}, we turn \Cref{eq:mmtbsa} into a learning objective---referred to as \emph{mixed trajectory balance} (MTB)---by replacing the intractable marginal with tractable policies. \looseness=-1 

\begin{definition}[MTB condition] \label{def:mtbsa} 
    Let $\gfn_{\gamma} = \left(p_{F}^{\smash[t]{(\gamma)}}, p_{B}^{\smash[t]{(\gamma)}}\right)$ and $\{\gfn_{\gamma}\}_{\gamma \in \Gamma}$ be a $\Gamma$-indexed set of GFlowNets, and $p_{M}(\mathrm{d}\gamma)$ and $q_{M}(\mathrm{d}\gamma | x)$ be \emph{prior} and \emph{posterior} mixing distributions over $\Gamma$. 
    Then, 
    \begin{equation*}
        p_{M}(\mathrm{d}\gamma) p_{F}^{\smash[t]{(\gamma)}}(s_{o}, \tau) = \frac{R(x)}{Z} q_{M}(\mathrm{d} \gamma | x) p_{B}^{\smash[t]{(\gamma)}}(x, \tau)
    \end{equation*}
    for each $x \in \mathcal{X}$ and $\tau \in s_{o} \rightsquigarrow x$ is the \emph{mixed trajectory balance} condition, with $Z = \sum_{x \in \mathcal{X}} R(x)$. 
\end{definition}

The sufficiency of Definition~\ref{def:mtbsa} for \Cref{eq:marginalss} is a direct consequence of \cite[Proposition 1]{malkin2022trajectory}. 
As it will soon become clear, it will be 
notationally 
convenient to define 
\begin{equation*}
    Z(\mathrm{d}\gamma) = Z \cdot p_{M}(\mathrm{d}\gamma) \text{ and } R(x, \mathrm{d}\gamma) = R(x) q_{M}(\mathrm{d} \gamma | x) 
\end{equation*}
as unnormalized measures over $\Gamma$ and $\mathcal{X} \times \Gamma$.
This allows us to rewrite MTB as 
\begin{equation*}
    Z(\mathrm{d}\gamma) \cdot p_{F}^{\smash[t]{(\gamma)}}(s_{o}, \tau) = R(x, \mathrm{d}\gamma) \cdot p_{B}^{\smash[t]{(\gamma)}}(x, \tau), 
\end{equation*}
resembling the familiar TB condition.
We also deduce from this that other balance conditions, such as Detailed Balance (DB) \citep{foundations} and SubTB \citep{madan2022learninggf}, remain valid in our setting. 
Given a function $F \colon s \in \mathcal{S} \mapsto F(s, \mathrm{d}\gamma)$ mapping $s$ to a measure in $\Gamma$, for instance, the DB condition can be written as, for $s \in \mathcal{S}$, $s' \in \mathcal{S} \cup \mathcal{X}$, and $x \in \mathcal{X}$ 
\begin{equation*}
    F(s, \mathrm{d}\gamma) \cdot p_{F}^{\smash[t]{(\gamma)}}(s, s') = F(s', \mathrm{d}\gamma) p_{B}^{\smash[t]{(\gamma)}}(s', s) \text{ and } F(x, \mathrm{d}\gamma) = R(x, \mathrm{d}\gamma). 
\end{equation*}


This raises the question of how to select both $Z(\mathrm{d}\gamma)$ and $R(x, \mathrm{d}\gamma)$, which will be addressed in \Cref{sec:discretesa}. 
Under our abstract setting, however, we show both measures are intimately related. 
\looseness=-1 

\begin{proposition}[Relationship between $Z(\mathrm{d}\gamma)$ and $R(x, \mathrm{d}\gamma)$] \label{prop:relsa} 
    Let $\{\gfn_{\gamma}\}_{\gamma \in \Gamma}$ and $p_{M}$ and $q_{M}$ be as in Definition~\ref{def:mtbsa}, and $Z(\mathrm{d}\gamma)$ and $R(x, \mathrm{d}\gamma)$ be as above. 
    Then, $Z(\mathrm{d}\gamma)$ and $R(x, \mathrm{d}\gamma)$ jointly satisfy
    \begin{equation*}
        Z(\mathrm{d}\gamma) = \sum_{x \in \mathcal{X}} R(x, \mathrm{d}\gamma) \text{ and } R(x, \mathrm{d}\gamma)  = Z(\mathrm{d}\gamma) \cdot m_{F}^{\smash[t]{(\gamma)}}(x). 
    \end{equation*}
\end{proposition}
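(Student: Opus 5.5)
The plan is to derive both identities from the MTB condition of Definition~\ref{def:mtbsa}, written in the measure form $Z(\mathrm{d}\gamma)\, p_{F}^{(\gamma)}(s_{o}, \tau) = R(x, \mathrm{d}\gamma)\, p_{B}^{(\gamma)}(x, \tau)$. Here, as in the footnote, an equality of $\gamma$-indexed measures means equality after integrating over every measurable $B \subseteq \Gamma$. The only facts about the policies I need are normalization conditions: $p_{B}^{(\gamma)}(x, \cdot)$ is a probability distribution over $s_{o} \rightsquigarrow x$, and $p_{F}^{(\gamma)}(s_{o}, \cdot)$ is a probability distribution over complete trajectories. The latter holds because the state graph is a finite DAG whose sinks are exactly $\mathcal{X}$, so $\sum_{x} m_{F}^{(\gamma)}(x) = 1$.

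\textbf{Second identity.} I fix $x$ and a measurable $B$. Integrating the MTB condition over $B$ and summing over $\tau \in s_{o} \rightsquigarrow x$ is legitimate because the sum is finite, so it commutes with the integral. On the right, $\sum_{\tau} p_{B}^{(\gamma)}(x,\tau) = 1$ for every $\gamma$, leaving $\int_{B} R(x, \mathrm{d}\gamma)$. On the left, the sum over $\tau$ of $p_{F}^{(\gamma)}(s_{o},\tau)$ is by definition $m_{F}^{(\gamma)}(x)$, leaving $\int_{B} m_{F}^{(\gamma)}(x)\, Z(\mathrm{d}\gamma)$. Since $B$ is arbitrary, this gives $R(x, \mathrm{d}\gamma) = Z(\mathrm{d}\gamma)\, m_{F}^{(\gamma)}(x)$.

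\textbf{First identity.} I sum the second identity over the finite set $\mathcal{X}$ and use $\sum_{x} m_{F}^{(\gamma)}(x) = 1$. This gives $\sum_{x} R(x, \mathrm{d}\gamma) = Z(\mathrm{d}\gamma)$, again tested on arbitrary $B$ with the finite sum exchanged with the integral. As a consistency check, evaluating at $B = \Gamma$ gives $\sum_{x} R(x)\, q_{M}(\Gamma \mid x) = Z$. This matches $Z = \sum_{x} R(x)$, because each $q_{M}(\cdot \mid x)$ is a probability measure and $p_{M}(\Gamma) = 1$.

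\textbf{Main obstacle.} The one step needing care is measure-theoretic: measurability of $\gamma \mapsto m_{F}^{(\gamma)}(x)$. I would assume, as is implicit in Definition~\ref{def:mtbsa}, that $\gamma \mapsto p_{F}^{(\gamma)}(s,s')$ is measurable, so finite sums and products of these maps are measurable too. Alternatively, one can read the MTB condition directly as equality of densities with respect to $\nu$, since $p_{M} \ll \nu$. Everything else is finite summation and normalization of the policies.
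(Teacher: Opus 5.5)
Your proposal is correct and follows the paper's proof: sum the MTB condition over $\tau \in s_{o} \rightsquigarrow x$, using $\sum_{\tau} p_{B}^{(\gamma)}(x,\tau) = 1$, to get $R(x,\mathrm{d}\gamma) = Z(\mathrm{d}\gamma)\, m_{F}^{(\gamma)}(x)$, then sum over $x$, using $\sum_{x} m_{F}^{(\gamma)}(x) = 1$, to get the first identity. Your measure-theoretic additions make explicit what the paper leaves implicit, namely testing on measurable sets $B$, exchanging finite sums with integrals, and the measurability of $\gamma \mapsto p_{F}^{(\gamma)}$.
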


Under Proposition~\ref{prop:relsa}, $Z(\mathrm{d}\gamma)$ may be understood as the fraction of total probability mass ($Z$) being assigned to the model indexed by $\gamma$, while $x \mapsto R(x, \mathrm{d}\gamma)$ represents the (unnormalized) marginal distribution of such model over $\mathcal{X}$. 
This supports the interpretation of $\Gamma$-indexing as a \emph{soft partition} of $\mathcal{X}$: each $\gfn_{\gamma}$ primarily focuses on a (potentially) distinct subset of $\mathcal{X}$.

\pp{Learning a GFlowNet ensemble.} 
We learn a MTB-abiding $\gfn_{\Gamma} \!= \! (p_{M}, q_{M}, \{\gfn_{\gamma}\}_{\gamma \in \Gamma})$ by optimizing \looseness=-1 
\begin{equation} \label{eq:tblossa} 
    \mathcal{L}_{\mathrm{TB}}(\tau, \mathrm{d}\gamma ; \gfn_{\Gamma}) = \left( \log \frac{Z(\mathrm{d}\gamma) \cdot p_{F}^{\smash[t]{(\gamma)}}(s_{o}, \tau)}{R(x, \mathrm{d}\gamma) \cdot p_{B}^{\smash[t]{(\gamma)}}(x, \tau)} \right)^{2}   
\end{equation}
for all $\tau, \gamma$; we call $\mathcal{L}_{\mathrm{TB}}$ the \emph{TB loss function} \citep{malkin2022trajectory}. 
As an aside, we notice in Definition~\ref{def:lossesa} that similar learning objectives from \citet{foundations, deleu2024discreteprobabilisticinferencecontrol}, such as SubTB and DB, can also be seamlessly extended to our setting. 

From this, learning continues business as usual: we parameterize $p_{F}^{\smash[t]{(\gamma)}}(s, \cdot)$ and (optionally) $p_{B}^{\smash[t]{(\gamma)}}(s, \cdot)$ as neural networks receiving both $s$ and $\gamma$ as inputs and, defining exploratory (full support) distributions $\rho_{M}(\mathrm{d}\gamma)$ and $\rho^{\smash[t]{(\gamma)}}(s_{o}, \tau)$, we solve the stochastic program 
\begin{equation} \label{eq:optstochasticsa}  
    \min_{\gfn_{\Gamma}} \mathbb{E}_{B \sim \rho_{M}(\cdot), \tau \sim \rho^{\smash[t]{(\gamma)}}(s_{o}, \cdot)} \left[ \mathcal{L}_{\mathrm{TB}}\left( \tau, B ; \gfn_{\Gamma} \right) \right] 
\end{equation}
via stochastic gradient descent \citep{jordan2024muon}. 
In the following sections, we present both specific instantiations and an empirical validation of the proposed framework. 

\begin{remark}[Modularity]
    We also notice methods for 
    learning both $p_{F}$ and $\rho$ in \Cref{eq:optstochasticsa} can be directly applied to our setting \citep{kim2025adaptive, madan2025towards, dallantonia2026avoidknowdivergenttrajectory}. 
\end{remark}

\looseness=-1

\section{Continuously Indexed GFlowNets} 
\label{sec:continuousaa} 

The most straightforward form of \Cref{eq:optstochasticsa} assumes $q(\mathrm{d}\gamma|x) = p_{M}(\mathrm{d}\gamma)$, i.e., each component is equally likely to generate $x \in \mathcal{X}$. 
In this case, the ratio $\nicefrac{Z(\mathrm{d}\gamma)}{R(x, \mathrm{d}\gamma)} = \nicefrac{Z}{R(x)}$ becomes $\gamma$-independent, and our problem reduces to minimizing the familiar TB loss averaged over $\gamma \in \Gamma$ \citep{malkin2022trajectory}.   
Although deceptively simple, this approach corresponds to \emph{augmenting} our model with \emph{random features} (RFs) when $\Gamma = \mathbb{R}^{d_{\gamma}}$. 
This is a well-established procedure for boosting the expressivity of graph neural networks \citep{abboud2021surprising}, often-used 
by GFlowNets in graph-structured spaces. 
Drawing on this, we show \emph{continuously indexed} (CI) GFlowNets---as we call the RF-augmented sampler---can arbitrarily approximate any graph distribution. \looseness=-1 
\begin{remark}[CI GFlowNets as Universal Approximators]
    CI GFlowNets are strictly more expressive, in terms of which distributions they can realize, than GFlowNets (see Example~\ref{example:graphsa}). 
    In particular, when considering graph-structured tasks based on edge-additive generative processes, CI GFlowNets are \emph{universal approximators} (Proposition~\ref{prop:universalsa}). 
\end{remark}
We formalize this discussion in \Cref{sec:continuousa}, where we also demonstrate CI GFlowNets can have a smoothing effect on the loss surface \citep[as in][]{bishop1995training}, significantly reducing training instability. 


\section{Discretely Indexed GFlowNets} \label{sec:discretesa} 



To start our discussion on discrete GFlowNet mixtures, we first notice that the TB loss function in \Cref{eq:tblossa} does not impose a preference of a non-trivial solution over a collapsed sampler in which $Z(\mathrm{d}\gamma) = Z$ and $R(x, \mathrm{d}\gamma) = R(x)$ or, more generally, in which the ratio $\nicefrac{R(x, \mathrm{d}\gamma)}{Z(\mathrm{d}\gamma)}$ does not depend on $\gamma$.
Although a $\gamma$-independent target can exhibit beneficial properties, such as greater expressivity and a smoother loss surface (as in \Cref{sec:continuousaa}), this should be a design decision---achieved by setting $q_{M}(\mathrm{d}\gamma | x) = p_{M}(\mathrm{d}\gamma)$---rather than a consequence of training.
In practice, learning a non-collapsed mixture from \Cref{eq:optstochasticsa} requires constraining the search space for the mixing distributions $q_{M}$ and $p_{M}$ either via soft regularization or hard constraints.

We advocate for the latter in this section, suggesting to fix $\Gamma$ as a finite set and $q_{M}(\mathrm{d}\gamma|x)$ as a \emph{sparse distribution} assigning zero probability to most $\gamma \in \Gamma$. 
We also briefly elaborate on using a countably infinite $\Gamma$ when discussing Boosted GFlowNets \citep{dallantonia2025boostedgflownetsimprovingexploration} in \Cref{sec:boostingsa}. 
\looseness=-1 

\pp{Discretely Indexed GFlowNets.}
We refer to the corresponding sampler as Discretely Indexed (DI) GFlowNets. 
As notational shortcuts, we let $\Gamma = [K]$, and write the measures $Z$ and $R$ and $q_{M}$ as $Z(\{k\}) = Z_{k}$ and $R(x, \{k\}) = R_{k}(x)$ and $q_{M}(\{k\} | x) = q_{k}(x)$, respectively. 
In addition, we let $\mathbf{Z} = (Z_{k})_{k \in [K]}$ and $\rho_{k}$ be the exploration policy for the mixture's $k$th component.
Our objectives are to define $q_{k}$ and $\rho_{k}$; as with single-component GFlowNets, we set $\mathbf{Z}$ as a learnable parameter.   
\looseness=-1 

\looseness=-1 


\subsection{Stratum-Conditioned GFlowNets} \label{sec:scgflownetsa} 


In this context, we propose setting $q_{k}(x)$ according to a hand-crafted \emph{covering} $\{X_{k}\}_{k \in [K]}$ of $\mathcal{X}$: under this condition, $q_{k}(x)$ corresponds to a uniform distribution over the indices $k$ for which $\mathcal{X}_{k}$ contains $x$. 
We call each $\mathcal{X}_{k}$ a \emph{stratum} 
due to its construction as the pre-image of an interval with respect to a real-valued function on $\mathcal{X}$; see Remark~\ref{remark:modulosa}. 
This section formalizes the ensuing algorithm, called Stratum-Conditioned (SC) GFlowNets, explains how it can be implemented, and describes sufficient conditions for which sampling from $\mathcal{X}_{k}$ is computationally tractable.

\looseness=-1 

\pp{Stratum Conditioning.}
The first requirement for tractability is for us to know how many sets $N(x)$ in $\{\mathcal{X}_{k}\}_{k \in [K]}$ contain a given $x$, allowing for the efficient computation of $q_{k}(x)$.  
As such, we define a \emph{tractable $K$-cover} of $\mathcal{X}$, denoted as $\mathrm{TC}(\mathcal{X}, K)$. 

\begin{definition}[Tractable $K$-Cover]
    Let $K$ be a positive integer and $\mathcal{X}$ be a set. We say $\mathrm{TC}(\mathcal{X}, K) = \{\mathcal{X}_{k}\}_{k=1}^{K}$ is a \emph{Tractable $K$-Cover} of $\mathcal{X}$ if the following conditions are satisfied. 
    \begin{enumerate}[leftmargin=18pt]
        \item $\mathrm{TC}(\mathcal{X}, K)$ is an exact cover of $\mathcal{X}$, i.e., each $\mathcal{X}_{k} \subseteq \mathcal{X}$ and $\bigcup_{k=1}^{K} \mathcal{X}_{k} = \mathcal{X}$.  
        \item For each $x \in \mathcal{X}$, the quantity $N(x) \coloneqq \#\{k \in [K] \colon x \in \mathcal{X}_{k}\}$ is known\footnote{An  \emph{intractable} cover, in contrast, could be built upon an expensive multiclass classifier $f \colon \mathcal{X} \rightarrow [K]$.}. 
    \end{enumerate}
\end{definition}

The second condition allows efficient querying of $q_{k}(x)$, 
defined as $q_{k}(x) = N(x)^{-1}$ if $x \in \mathcal{X}_{k}$ and $q_{k}(x) = 0$ otherwise. 
In practice, we often let $\mathrm{TC}(\mathcal{X}, K)$ be a disjoint cover---a partition---of $\mathcal{X}$; in this case, $q_{k}(x) = 1$ for the only $k$ with $x \in \mathcal{X}_{k}$. 
A 
corollary of Proposition~\ref{prop:relsa} cleanly characterizes both $Z_{k}$ and $R_{k}(x)$ as a fraction of the probability masses assigned to $\mathcal{X}_{k}$ and $x$ under a TC. 
\looseness=-1 

\begin{corollary}[$Z_{k}$ as the mass within $\mathcal{X}_{k}$] \label{col:pmassa} 
    Let $\mathrm{TC}(\mathcal{X}, K) = \{\mathcal{X}_{k}\}_{k=1}^{K}$ and $\gfn_{[K]}$ be a GFlowNet ensemble satisfying the MTB condition in Definition~\ref{def:mtbsa}. 
    Then, denoting by $\mathbbm{1}[\cdot]$ the indicator function, \looseness=-1 
    \begin{equation*}
        Z_{k} = \sum_{x \in \mathcal{X}_{k}} \frac{R(x)}{N(x)} \text{ and } R_{k}(x) = \frac{R(x)}{N(x)} \cdot \mathbbm{1}[x \in \mathcal{X}_{k}]. 
    \end{equation*}
\end{corollary}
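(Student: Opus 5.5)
The plan is to specialize Proposition~\ref{prop:relsa} to the discrete setting $\Gamma = [K]$ with $\nu$ the counting measure, and to substitute the sparse posterior $q_{k}(x)$ induced by the tractable cover. In this setting every measure on $\Gamma$ is determined by its values on singletons. Hence the identities of Proposition~\ref{prop:relsa} may be evaluated on each $\{k\}$ separately.

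First I would handle $R_{k}(x)$, which follows from the definition alone. By construction, $R(x, \mathrm{d}\gamma) = R(x)\, q_{M}(\mathrm{d}\gamma | x)$. Evaluating at $\{k\}$ gives $R_{k}(x) = R(x)\, q_{k}(x)$. Recall that $q_{k}(x) = N(x)^{-1}$ when $x \in \mathcal{X}_{k}$ and $q_{k}(x) = 0$ otherwise. Substituting yields $R_{k}(x) = \frac{R(x)}{N(x)} \mathbbm{1}[x \in \mathcal{X}_{k}]$.

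Two small points need checking here:
\begin{itemize}
    \item $q_{k}(\cdot|x)$ is a genuine probability distribution on $[K]$. It has exactly $N(x)$ nonzero entries, each equal to $N(x)^{-1}$, so it sums to one.
    \item $N(x) \geq 1$ for every $x$. This holds by the covering condition $\bigcup_{k} \mathcal{X}_{k} = \mathcal{X}$, so the division is well defined.
\end{itemize}

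Second, for $Z_{k}$ I would invoke the first identity of Proposition~\ref{prop:relsa}, $Z(\mathrm{d}\gamma) = \sum_{x \in \mathcal{X}} R(x, \mathrm{d}\gamma)$, evaluated at $\{k\}$. This gives $Z_{k} = \sum_{x \in \mathcal{X}} R_{k}(x)$. Plugging in the expression from the previous step, the indicator restricts the sum to $\mathcal{X}_{k}$, so $Z_{k} = \sum_{x \in \mathcal{X}_{k}} R(x)/N(x)$.

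As a sanity check, summing over $k$ and exchanging the order of summation gives $\sum_{k} Z_{k} = \sum_{x} R(x) \sum_{k} \mathbbm{1}[x \in \mathcal{X}_{k}]/N(x) = \sum_{x} R(x) = Z$. This is consistent with $p_{M}$ being a probability measure.

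The only potential obstacle is whether the identity $Z(\mathrm{d}\gamma) = \sum_{x} R(x, \mathrm{d}\gamma)$ of Proposition~\ref{prop:relsa} really holds under the MTB condition, rather than being assumed. If it must be rederived, I would do so as follows:
\begin{enumerate}
    \item Sum the MTB condition over $\tau \in s_{o} \rightsquigarrow x$. Since $p_{B}^{(k)}(x, \cdot)$ is a distribution over these trajectories, this gives $Z_{k}\, m_{F}^{(k)}(x) = R_{k}(x)$.
    \item Sum over $x \in \mathcal{X}$, using that $m_{F}^{(k)}$ is a probability distribution on $\mathcal{X}$ because the state graph's sinks are exactly $\mathcal{X}$.
\end{enumerate}
This recovers $Z_{k} = \sum_{x} R_{k}(x)$ and completes the argument.
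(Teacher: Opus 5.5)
Your proposal is correct and follows the paper's proof. Both arguments read off $R_{k}(x) = R(x)\, q_{k}(x)$ from the definition and then obtain $Z_{k}$ by summing $R_{k}$ over $\mathcal{X}$ via the first identity of Proposition~\ref{prop:relsa}. Your extra checks ($N(x) \ge 1$, the normalization of $q_{k}(\cdot \mid x)$, the rederivation of that identity from MTB, and $\sum_{k} Z_{k} = Z$) are sound, and the paper leaves them implicit.
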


In particular, when $N(x)$ is constant, the learned policy $p_{F}^{\smash[t]{(k)}}$ generates samples in proportion to the restriction of $R(x)$ to $\mathcal{X}_{k}$, a property that might be of independent interest when the covering has a meaningful semantic value for the problem at hand.  

\begin{remark}[$R_{k}(x)$ for a disjoint cover]
    Under the conditions of Corollary~\ref{col:pmassa}, assume $N(x) = N$ (e.g., $N = 1$ for a disjoint cover). Then, $m_{F}^{\smash[t]{(k)}}(x) \propto R(x) \cdot \mathbbm{1}[x \in \mathcal{X}_{k}]$. 
\end{remark}

We emphasize $\{p_{F}^{\smash[t]{(k)}}\}_{k \in [K]}$ are parameterizable via a shared neural network architecture receiving that receives an embedding of $k$ as input; they need not be completely separate samplers. 
To understand the benefits of partitioning, 
we consider 
the 
\textsc{Lines} domain 
\citep{silva2026pathdependent}, described next. 
\looseness=-1 

\begin{figure}
    \centering
    \includegraphics[width=.8\linewidth]{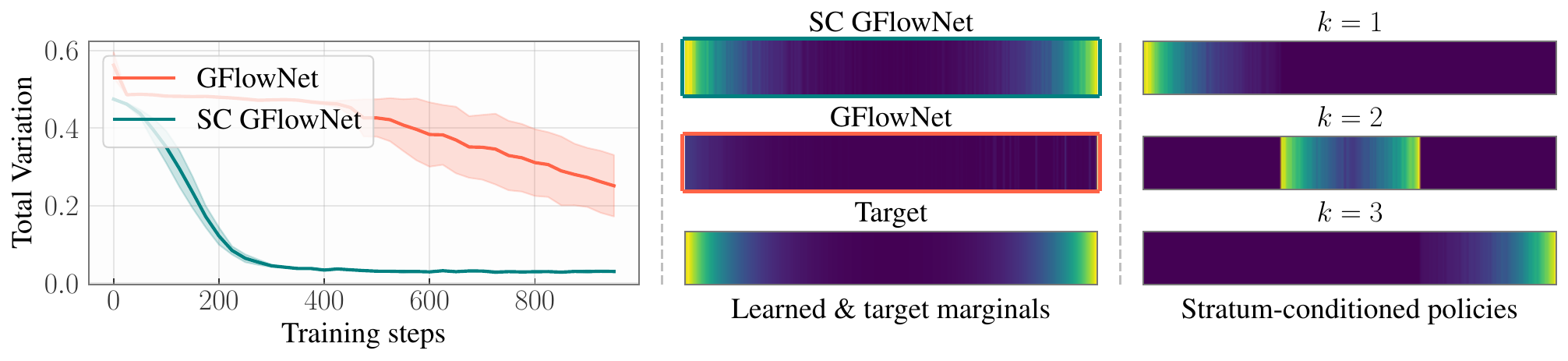}
    \caption{SC GFlowNets for the \textsc{Lines} environment ($N = 256$). (Left) We show the TV distance between the learned and target distributions, emphasizing the faster convergence promoted by stratum-conditioning. (Middle) The marginal distribution of both models after 1000 training steps, and the target. (Right) The marginal distribution of SC GFlowNets for each stratum $k \in \{1, 2, 3\}$.}
    \label{fig:linesa}
    \vspace{-12pt} 
\end{figure}

\begin{example}[\textsc{Lines} domain] \label{example:linesa}  
    We define $\mathcal{S} = \{1, \dots, N\}$ and $\mathcal{X} = \{1, \dots, N\} \times \{\top\}$ with $N \ge 2$ and $\top$ indicating the state is in $\mathcal{X}$ and not in $\mathcal{S}$. 
    The initial state is $s_{o} = 1$ and each transition corresponds to adding $1$ to the current state if it is less than $N$ or attaching $\top$ and terminating. Let 
    \begin{equation*}
        \mathcal{X}_{1} = \{1, \dots, \lfloor \nicefrac{N}{3} \rfloor\}, \, 
        \mathcal{X}_{2} = \{\lfloor\nicefrac{N}{3}\rfloor + 1, \dots, \lfloor \nicefrac{2 \cdot N}{3} \rfloor\}, \text{ and } \mathcal{X}_{3} = \{\lfloor \nicefrac{2 \cdot N}{3} \rfloor + 1, \dots, N\} 
    \end{equation*}
    be the partition. 
    We design $p_{F}^{\smash[t]{(k)}}(s, \cdot)$ by masking out the terminating transition if $s \notin \mathcal{X}_{k}$ and masking out the successor transition if $s + 1 \notin \mathcal{X}_{k}$, i.e., 
    \begin{equation*}
        p_{F}^{\smash[t]{(k)}}(s, s + 1) \propto \mathbbm{1} [s + 1 \in \mathcal{X}_{k}] \cdot 
        e^{\phi(s, k)^{\top} w_{1}} 
        \text{ and } p_{F}^{\smash[t]{(k)}}(s, (s, \top)) \propto \mathbbm{1}[s \in \mathcal{X}_{k}] \cdot 
        e^{\phi(s, k)^{\top} w_{2}}
        , 
    \end{equation*}
    with $\phi \colon [N] \times [K] \rightarrow \mathbb{R}^{d}$ and $w_{1}, w_{2} \in \mathbb{R}^{d}$ being learnable parameters. 
    \Cref{fig:linesa} shows the error curves measured in terms of total variation distance (TV) for (i) a GFlowNet and (ii) a SC GFlowNet for the target distribution $R(x) = \exp\{- (\beta N)^{-1} \cdot \min_{y \in \{3, N - 2\}} |x - y| \}$ with $N = 256$ and $\beta = 0.1$. 
    Notably, while SC GFlowNet quickly converges to the target distribution, a GFlowNet struggles to achieve a comparable goodness-of-fit, requiring over three times more training steps. \looseness=-1 
\end{example}



\pp{How to design a tractable covering?}
A pivotal question for 
SC GFlowNets is how to define the mapping $\mathrm{TC}(\mathcal{X}, K)$ for each $\mathcal{X}$ and $K$.
Although distinct problems may benefit from idiosyncratic strategies, we provide next a blueprint for designing $\mathrm{TC}(\mathcal{X}, K)$ in commonly encountered tasks.

\begin{remark}[Modulo-based $\mathrm{TC}$] \label{remark:modulosa} 
    Example~\ref{example:linesa} hints on how to construct $\mathrm{TC}$: we let $\mathcal{X}_{k}$ be the set of states whose distance to a reference point is within a prescribed interval. 
    As most state spaces of interest lack an inherent geometric structure, however, we use a \emph{modular function} as a general proxy for distance. 
    To make sense of such definition, recall a \emph{compositional} object can be described as a \emph{multisubset} of a \emph{component set} $\mathcal{C}$, i.e., each $s \in \mathcal{S} \cup \mathcal{X}$ can be identified with $s \subseteq \mathcal{C}$. 
    In Example~\ref{example:linesa}, for instance, $\mathcal{C} = \{1, \top\}$ and $s \equiv \{\!\{1, \dots, 1\}\!\}$ ($s$ times). 
    In this scenario, let $\psi$ be a bounded modular function with $\psi(\{c\}) \ge 0$ for all $c \in \mathcal{C}$. 
    We let \looseness=-1 
    \begin{equation*}
        \mathcal{X}_{k} = \left\{ x \in \mathcal{X} \colon \frac{k - 1}{K} (\psi(\mathcal{C}) + 1) \le \psi(x) < \frac{k}{K} ( \psi(\mathcal{C}) + 1) \right\}. 
    \end{equation*}
    When $\psi(s) = \#s$, we call $\mathrm{TC}(\mathcal{X}, K)$ to be a \emph{length-based covering}. 
    In general, we let $\psi(s) = \# (s \cap r)$ for a given \emph{reference set} $r \subseteq \mathcal{C}$; if $r = \emptyset$, the SC GFlowNet reduces to a GFlowNet. 
    \looseness=-1 
\end{remark}


Importantly, while easily solved for some $\psi$ such as the length-based one in Example~\ref{example:linesa}, we show next 
that the \emph{decision problem} of inferring whether $\mathcal{X}_{k}$ is \emph{reachable} from a state in $\mathcal{S}$ is NP-complete in general.
This is required for masking out from $p_{F}^{\smash[t]{(k)}}$ transitions leading to states for which $\mathcal{X}_{k}$ becomes unreachable, ensuring the sampler does not become deadlocked in a zero-probability region.
\looseness=-1 

\begin{proposition}[Sampling from a general TC is NP-complete] \label{prop:npsa} 
    Let $\mathrm{TC}(\mathcal{X}, K)$ be a modulo-based TC 
    built upon a modular function $\psi$ and $\mathcal{X}_{k} = \{x \in \mathcal{X} \colon \psi(x) \in [a_{k}, b_{k})\}$ for numbers $\{(a_{k}, b_{k})\}_{k=1}^{K}$. 
    Let $s \in \mathcal{S} \subset 2^{\mathcal{C}}$.
    Given $k \in [K]$, we consider the following decision problem, 
    \begin{equation} \label{eq:decisionsa} 
        \exists s' \in 2^{\mathcal{C} \setminus s} \text{ s.t. } s \cup s' \in \mathcal{X} \text{ and } \psi(s \cup s') \in [a_{k}, b_{k}), 
    \end{equation}
    i.e., whether there is a terminal $x \in \mathcal{X}_{k}$ reachable from $s \in \mathcal{S}$. Then, if $\psi$ is an arbitrary modular function evaluable in polynomial time on the size of its input, \Cref{eq:decisionsa} is NP-complete. 
\end{proposition}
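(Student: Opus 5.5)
Proving NP-completeness requires two things: membership in NP and NP-hardness. For hardness I would reduce from \textsc{Subset Sum}, which is the natural source because a modular function is exactly an additive weight on components.

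\textbf{Membership in NP.} The certificate is the set $s' \subseteq \mathcal{C} \setminus s$ itself. Its size is at most $|\mathcal{C}|$, which is polynomial in the input. Given $s'$, the verifier does two checks in polynomial time.
\begin{itemize}
\item It checks $s \cup s' \in \mathcal{X}$. I will assume membership in $\mathcal{X}$ is polynomial-time decidable, as it is in every environment of interest; this is the implicit assumption for the state spaces considered.
\item It checks $\psi(s \cup s') \in [a_k, b_k)$. By hypothesis $\psi$ is evaluable in polynomial time.
\end{itemize}

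\textbf{NP-hardness.} Take a \textsc{Subset Sum} instance: positive integers $w_1, \dots, w_n$ and a target $T$. Build the environment as follows.
\begin{itemize}
\item Let $\mathcal{C} = \{c_1, \dots, c_n\}$.
\item Let $\mathcal{X}$ be all subsets of $\mathcal{C}$, each marked as terminated. Concretely, use $\mathcal{C} \cup \{\top\}$ and take $\mathcal{X} = \{ u \cup \{\top\} : u \subseteq \mathcal{C}\}$, with $\psi(\{\top\}) = 0$, so that every $x$ is reachable from $s_o = \emptyset$.
\item Let $\mathcal{S} = 2^{\mathcal{C}}$ and $s = s_o = \emptyset$.
\item Define the modular function $\psi(u) = \sum_{c_i \in u} w_i$. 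It is nonnegative, bounded, and computable in polynomial time.
\item Choose the intervals so that one stratum is $[a_k, b_k) = [T, T+1)$. For example, take $K = 3$ with endpoints $0, T, T+1, \psi(\mathcal{C}) + 1$. This is a valid modulo-based partition of $\mathcal{X}$ by $\psi$-value.
\end{itemize}

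Because the weights are integers, $\psi(s \cup s') \in [T, T+1)$ holds if and only if $\sum_{i \in s'} w_i = T$. So \Cref{eq:decisionsa} holds exactly when the \textsc{Subset Sum} instance is a yes-instance. The construction is polynomial in the bit length of the instance, which gives NP-hardness.

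\textbf{Expected obstacle.} The difficulty is mainly one of formalization, not computation: the reduction has to fit the paper's definitions.
\begin{itemize}
\item The intervals in the statement are arbitrary $(a_k, b_k)$, which allows the singleton stratum $[T, T+1)$. If one insists on the equal-width binning of Remark~\ref{remark:modulosa}, I would instead rescale the weights, adding dummy components to pad $\psi(\mathcal{C})$, so that the target value falls alone in one bin.
\item The state graph must be a valid DAG with $s_o$ as its unique source. Adding one component at a time and terminating with $\top$ satisfies this.
\end{itemize}
Hardness holds for this specific family of instances, which is enough because the proposition allows $\psi$ to be an arbitrary modular function.
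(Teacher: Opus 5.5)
Your proposal is correct and follows essentially the same route as the paper: both reduce from \texttt{SubsetSum} by taking $\psi$ as a sum of nonnegative integer weights, fixing $s = \emptyset$, and choosing a unit-width stratum $[T, T+1)$, so that membership forces the subset sum to equal $T$ exactly. Your version is slightly more complete than the paper's, since you also argue membership in NP and explicitly construct $\mathcal{X}$ (via the terminal marker $\top$) so that the constraint $s \cup s' \in \mathcal{X}$ holds for every subset; the paper's proof leaves both points implicit.
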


We demonstrate Proposition~\ref{prop:npsa} by reducing \Cref{eq:decisionsa} to the \texttt{SubsetSum} problem, which is NP-complete \citep{clrs2009}. 
Rather than a strict limitation of SC GFlowNets, however, Proposition~\ref{prop:npsa} should be read as a guiding design principle for $\mathrm{TC}(\mathcal{X}, K)$, suggesting there are partitions for which sampling cannot be efficiently implemented.
In contrast, drawing on Example~\ref{example:linesa}, we show there is a specific class of functions $\psi$ for which \Cref{eq:decisionsa} is decidable in polynomial time. 
\looseness=-1 

\begin{proposition}[Efficient sampling from symmetric junta-based TC] \label{prop:npsaa} 
    We call a function $\psi \colon 2^{\mathcal{C}} \rightarrow \mathbb{R}$ a \emph{symmetric junta} if there are a \emph{reference set} $r \subseteq 2^{\mathcal{C}}$ and a function $\xi \colon \mathbb{Z} \rightarrow \mathbb{R}$ for which $\psi(x) = \xi(\# (x \cap r))$ for all $x \subseteq \mathcal{C}$.
    We assume $\psi$ can be evaluated in polynomial time. 
    Then, if \looseness=-1 
    \begin{equation*}
        \mathcal{X}_{k} = \{x \in \mathcal{X} \colon \psi(x) \in [a_{k}, b_{k}) \} 
    \end{equation*}
   for $k \in [K]$ is our $\mathrm{TC}(\mathcal{X}, K)$, the decision problem in \Cref{eq:decisionsa} can be resolved in polynomial time with respect to $\# \mathcal{C}$.
   This remains true when we replace $2^{\mathcal{C}}$ by the bounded multisubsets of $\mathcal{C}$. \looseness=-1  
\end{proposition}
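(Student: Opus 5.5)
The idea is to exploit that a symmetric junta $\psi$ depends on a candidate completion $s'$ only through the integer $j = \#((s \cup s') \cap r)$. Since $s'$ is disjoint from $s$, we have $j = \#(s \cap r) + \#(s' \cap r)$. So the search over exponentially many $s' \in 2^{\mathcal{C} \setminus s}$ collapses to a search over at most $\#\mathcal{C} + 1$ values of $j$. The decision problem in \Cref{eq:decisionsa} then becomes: does there exist $j \in \{\#(s \cap r), \dots, \#(s\cap r) + \#((\mathcal{C}\setminus s)\cap r)\}$ with $\xi(j) \in [a_k, b_k)$ such that some terminal $x \supseteq s$ reachable from $s$ has $\#(x \cap r) = j$?

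The plan proceeds in three steps.

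\textbf{Step 1.} Compute $c_0 = \#(s\cap r)$ and $m = \#((\mathcal{C}\setminus s)\cap r)$ in time linear in $\#\mathcal{C}$.

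\textbf{Step 2.} For each $j \in \{c_0, \dots, c_0 + m\}$, evaluate $\xi(j)$ and test membership in $[a_k,b_k)$. Since $\psi$ is polynomial-time evaluable, we can realise $\xi(j)$ by evaluating $\psi$ on any set with exactly $j$ elements of $r$, e.g.\ $s$ plus $j - c_0$ further elements of $r\setminus s$. This costs polynomially many evaluations.

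\textbf{Step 3.} For each admissible $j$, decide whether some $s'$ with $\#(s'\cap r) = j - c_0$ satisfies $s\cup s' \in \mathcal{X}$. For the model case of the paper (e.g.\ $\mathcal{X}$ consisting of sets of a fixed size $n$, as in the running example of $K$-subsets), this is a counting check. We need nonnegative integers $u \le m$ and $v \le \#((\mathcal{C}\setminus s)\setminus r)$ with $u = j - c_0$ and $\#s + u + v = n$. This is decidable in constant time per $j$, and the existence of such $u,v$ directly exhibits a witness $s'$. The answer is the disjunction over $j$, so the total cost is polynomial in $\#\mathcal{C}$.

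The main obstacle is Step 3 in full generality. Membership in $\mathcal{X}$ must itself be characterised by cardinality constraints, or at least checkable by a polynomial routine constrained on $\#(s'\cap r)$. Otherwise one could encode hard structure into $\mathcal{X}$ and the statement would fail independently of $\psi$. I would therefore make explicit the standing assumption that terminality is determined by the size profile $(\#x,\ \#(x\cap r))$, or more generally by polynomially checkable counts. This matches the compositional spaces considered in Remark~\ref{remark:modulosa}, where $\mathcal{X}$ is defined by a size or termination rule. Under that assumption, the contrast with Proposition~\ref{prop:npsa} is exactly that weights are now all $0/1$ on $r$, so the \texttt{SubsetSum} instance degenerates to a unary count.

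For bounded multisubsets, let each component $c$ have multiplicity at most $\mu_c$, and write $\#(x\cap r)=\sum_{c\in r}\mathrm{mult}_x(c)$. The achievable values of $\#(s'\cap r)$ then form the full integer interval $[0, \sum_{c\in r}(\mu_c - \mathrm{mult}_s(c))]$, since multiplicities can be incremented one at a time. The same enumeration therefore applies. Its length is at most $\sum_c \mu_c$, which is polynomial in $\#\mathcal{C}$ when the bounds are polynomial (or treated as part of the input size). The cardinality check in Step 3 is modified analogously, with the total multiplicity replacing $\#$.
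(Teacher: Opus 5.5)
Your proposal is correct and follows the same route as the paper's proof. Both reduce the search over completions $s'$ to the single integer $\#(s\cap r)+\#(s'\cap r)$, note that $\#(s'\cap r)$ sweeps a full interval of polynomially many values, and test $\xi$ on each; this holds for sets and for bounded multisubsets, where the paper uses a total-size bound $B$ and you use per-element multiplicities. The only real difference is that the paper's proof silently drops the constraint $s\cup s'\in\mathcal{X}$, effectively treating every completion as terminal, so your explicit assumption that terminality is decided by polynomially checkable counts is a legitimate tightening the statement needs for general $\mathcal{X}$, not a gap in your argument.
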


As we will see in \Cref{sec:experimentsa,sec:experimentsaa}, symmetric junta-based TCs can be applied to a broad range of problems in the GFlowNet literature.
Building on the solution to \Cref{eq:decisionsa}, we can efficiently mask out undesirable transitions from each policy. 
In practice, this operation requires a negligible amount of arithmetic operations, adding an imperceptible extra cost to the sampling process. 
In Example~\ref{example:linesa}, as an illustration, we solely verify whether $s + 1$ is within the boundaries of $\mathcal{X}_{k}$. 
\looseness=-1  


\subsection{Complementary Characterization of SC GFlowNets} \label{sec:doobsa} 

\pp{Connection to Doob's $h$-transform.}\footnote{
This section is fairly technical and can be skipped by readers solely interested in our method.}
A natural question raised by our exposition is whether there is a relationship between each $p_{F}^{\smash[t]{(k)}}$ for $k \in [K]$. 
We show in this section that, given a solution $p_{F}$ to the original problem of sampling from $x \in \mathcal{X}$ in proportion to $R(x)$ and \emph{any} partition $\{\mathcal{X}_{k}\}_{k=1}^{K}$, a solution $p_{F}^{\smash[t]{(k)}}$ to the optimization problem in \Cref{eq:optstochasticsa} can be derived from a Doob's $h$-transform of $p_{F}$ when $\mathrm{TC}$ is a disjoint cover. 
To understand this, we 
review 
Doob's $h$-transform of 
Markov chains.
\looseness=-1 

\begin{definition}[Doob's $h$-transform] \label{def:doobsa} 
    Let $\{S_{t}\}_{t \ge 1}$ be a Markov chain with state space $\bar{\mathcal{S}} \coloneqq \mathcal{S} \cup \mathcal{X}$ and transition kernel $\kappa \colon \bar{\mathcal{S}} \times \bar{\mathcal{S}} \rightarrow [0, 1]$. 
    Let also $h \colon \bar{\mathcal{S}} \rightarrow \mathbb{R}$ be a \emph{harmonic function}, i.e., $h(s) = \mathbb{E}_{s' \sim \kappa(s, \cdot)} \left[ h(s') \right]$. 
    We call $\kappa^{\smash[t]{(h)}}(s, s') = \kappa(s, s') \cdot \nicefrac{h(s')}{h(s)}$ the \emph{Doob's $h$-transform} of $\kappa$. 
\end{definition}

We refer the reader to \citep[Chapter III]{rogers2000} for a comprehensive overview of Definition~\ref{def:doobsa} and its applications to stochastic processes. 
This section shows that there is a harmonic function $h_{k}$ for which $p_{F}^{\smash[t]{(k)}}$ is the Doob's $h_{k}$-transformation of $p_{F}$ for each $k \in [K]$.
To see this, we 
describe the generative process of a GFlowNet as 
a Markov chain having $\mathcal{X}$ as the absorbing set. \looseness=-1 
\looseness=-1

\begin{proposition}[GFlowNet as an absorbing Markov chain] \label{prop:markovsa} 
    Let $p_{F}$ be a GFlowNet's policy abiding by any balance condition and $\kappa_{F}$ be a Markov kernel on $\bar{\mathcal{S}}$ defined as $\kappa_{F}(s, s') = p_{F}(s, s')$ if $s \in \mathcal{S}$ and $\kappa_{F}(x, s') = \mathbbm{1}[s' = x]$ if $x \in \mathcal{X}$ for each $s' \in \bar{\mathcal{S}}$. 
    Let $\{S_{t}^{\smash[t]{(f)}}\}_{t \ge 1}$ be a Markov chain following $\kappa_{F}$, and let $T_{\mathcal{X}} = \min \{t \colon S_{t}^{\smash[t]{(f)}}\in \mathcal{X}\}$ be its hitting time on $\mathcal{X}$. 
    When $p_{F}$ satisfies the TB, 
    \begin{equation*}
        \mathbb{P}_{\kappa_{F}}\left[ T_{\mathcal{X}} < \infty | s_{o} \right] = 1 \text{ and } \mathbb{P}_{\kappa_{F}}[ S_{T_{\mathcal{X}}}^{\smash[t]{(f)}} = x \in \mathcal{X} | s_{o} ] = \frac{R(x)}{\sum_{x \in \mathcal{X}} R(x)} 
    \end{equation*}
    with $\mathbb{P}_{\kappa_{F}}$ being the probability measure induced by the transition kernel $\kappa_{F}$. 
    Equivalently, let $p_{B}$ be a GFlowNet's backward policy and $\kappa_{B}$ a Markov kernel on $\bar{\mathcal{S}}$ such that $\kappa_{B}(s, s') = p_{B}(s, s')$ for $s \in \bar{\mathcal{S}} \setminus \{s_{o}\}$ and $s' \in \bar{\mathcal{S}}$ and $\kappa_{B}(s_{o}, s) = \mathbbm{1}[s = s_{o}]$. 
    As such, let $\{S_{t}^{\smash[t]{(b)}}\}_{t \ge 1}$ follow $\kappa_{B}$; if 
    \begin{equation*} 
        T_{\{s_{o}\}} = \min \{t \colon S_{t}^{\smash[t]{(b)}} = s_{o}\}, \text{ then } \mathbb{P}_{\kappa_{B}} \left[ T_{\{s_{o}\}} < \infty | x\right] = 1 \text{ for every } x \in \mathcal{X}.
    \end{equation*} 
\end{proposition}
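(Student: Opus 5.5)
The plan rests on one structural fact: the state graph is a finite DAG. Because of this, every path in it has length at most $L$, the length of its longest path, so both chains are forced into their absorbing sets within finitely many steps. The distributional claim then reduces to the marginal identity of \citet[Proposition 1]{malkin2022trajectory}.

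\emph{Forward chain, almost-sure absorption.} The kernel $\kappa_{F}$ coincides with $p_{F}$ on $\mathcal{S}$, and $p_{F}(s, s') > 0$ only if $s \rightarrow s'$ in the state graph. Hence, started from $s_{o}$, every realization of $S^{(f)}_{1}, S^{(f)}_{2}, \dots$ follows edges of the DAG for as long as it stays in $\mathcal{S}$. Each state in $\mathcal{S}$ has at least one outgoing edge, because the only sinks are the terminal states. Since $p_{F}(s, \cdot)$ is a probability distribution supported exactly on these out-neighbours, the chain never gets stuck in $\mathcal{S}$. Acyclicity means no state repeats, so within at most $L + 1$ steps the chain must reach a sink, i.e.\ a state of $\mathcal{X}$. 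This gives $T_{\mathcal{X}} \le L + 1$ surely, and in particular $\mathbb{P}_{\kappa_{F}}[T_{\mathcal{X}} < \infty \mid s_{o}] = 1$.

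\emph{Forward chain, exit distribution.} The event $\{S^{(f)}_{T_{\mathcal{X}}} = x\}$ is the disjoint union, over $\tau \in s_{o} \rightsquigarrow x$, of the events that the chain's path up to time $T_{\mathcal{X}}$ equals $\tau$. Here I use that the trajectories in $s_{o} \rightsquigarrow x$ have intermediate states in $\mathcal{S}$, which matches the first-hitting structure, since $\mathcal{X}$ consists of sinks. By the Markov property, each such event has probability $p_{F}(s_{o}, \tau)$, so the probability of hitting $x$ equals $m_{F}(x)$.

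Summing the TB identity $Z p_{F}(s_{o}, \tau) = R(x) p_{B}(x, \tau)$ over $\tau$ gives $Z m_{F}(x) = R(x) \sum_{\tau} p_{B}(x, \tau)$. The remaining sum equals $1$ by the backward argument below: $p_{B}(x, \cdot)$ is a distribution over trajectories ending at $s_{o}$, because the backward chain is absorbed at $s_{o}$ almost surely. It follows that $m_{F}(x) = R(x)/Z$. Finally, $\sum_{x} m_{F}(x) = 1$ by absorption, so $Z = \sum_{x} R(x)$, which is the stated exit distribution.

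\emph{Backward chain.} The same argument applies with all edges reversed. The kernel $p_{B}(s, \cdot)$ is supported on the parents of $s$, and every state other than $s_{o}$ has a parent, since $s_{o}$ is the unique source. Reversed paths in a DAG also have length at most $L$, so from any $x$ the chain reaches $s_{o}$ within $L + 1$ steps. Therefore $\mathbb{P}_{\kappa_{B}}[T_{\{s_{o}\}} < \infty \mid x] = 1$.

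The main obstacle I anticipate is bookkeeping rather than depth. One point is to state precisely that $p_{B}(s, \cdot)$ has support on the parents of $s$, which the paper only implies. The other is to identify the chain's random path up to the hitting time with the trajectory sums in \Cref{eq:marginalsa}. That identification is what the exit-distribution step relies on, and it also yields $\sum_{\tau} p_{B}(x, \tau) = 1$ for the backward chain.
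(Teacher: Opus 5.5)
Your proposal is correct and follows essentially the same route as the paper. Bounded path length in the finite DAG gives almost-sure absorption for both chains, the exit probability is identified with $m_{F}(x)$ by summing path probabilities, and TB then yields $R(x)/Z$. Your direct decomposition over trajectories is a cleaner version of the paper's conditioning on $\{T_{\mathcal{X}} = t\}$, and you make explicit the step $\sum_{\tau} p_{B}(x, \tau) = 1$ (via backward absorption), which the paper only invokes through the balance condition.
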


Under Proposition~\ref{prop:markovsa}, we show that there is a transition kernel $\kappa_{F}^{\smash[t]{(k)}}$ for which (i) $\mathcal{X}_{k}$ is the only absorbing set, (ii) each $x \in \mathcal{X}_{k}$ is an absorbing state being arrived at with probability proportional to $R(x)$, (iii) and that $\kappa_{F}^{\smash[t]{(k)}}$ is a Doob's transformation of $\kappa_{F}$. 
For this, 
let $\{S_{t}^{\smash[t]{(f)}}\}_{t \ge 1}$ be as above and \looseness=-1 
\begin{equation*}
    T_{k} = \min \left\{ t \colon S_{t}^{\smash[t]{(f)}} \in \mathcal{X}_{k} \right\}
\end{equation*}
be the hitting time on $\mathcal{X}_{k}$. 
We 
define $h_{k}(s) = \mathbb{P}_{\kappa_{F}}\left[ T_{k} = \min_{k'} T_{k'} | s \right]$ 
as the probability that the Markov chain $\{S_{t}^{\smash[t]{(f)}}\}_{t \ge 1}$ reaches $\mathcal{X}_{k}$ before any other $\mathcal{X}_{k'}$ when starting at $s$. 
The next lemma shows $h_{k}$ is a harmonic function in the sense of Definition~\ref{def:doobsa}. 

\begin{lemma} \label{lemma:harmonicsa} 
    The functions $h_{k} \colon \bar{\mathcal{S}} \rightarrow \mathbb{R}$, $k \in [K]$, are harmonic in $\mathcal{S}$ with respect to $\kappa_{F}$. 
    They also satisfy $h_{k}(x) = \mathbbm{1}[x \in \mathcal{X}_{k}]$ for $x \in \mathcal{X}$ and, defining $Z_{k}$ and $Z$ as in Corollary~\ref{col:pmassa}, $h_{k}(s_{o}) = \nicefrac{Z_{k}}{Z}$. 
    
    \looseness=-1 
\end{lemma}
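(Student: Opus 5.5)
We prove the three claims in turn, starting with the boundary values and harmonicity, and then reading off $h_{k}(s_{o})$ from Proposition~\ref{prop:markovsa}. Throughout we take $\{\mathcal{X}_{k}\}_{k=1}^{K}$ to be a disjoint cover (partition) of $\mathcal{X}$, as assumed in this section. Since every trajectory of $\kappa_{F}$ first enters $\mathcal{X}$ at the single time $T_{\mathcal{X}}$, and the blocks are disjoint, we have $\min_{k'} T_{k'} = T_{\mathcal{X}}$. Moreover, $T_{k} = T_{\mathcal{X}}$ exactly when $S^{(f)}_{T_{\mathcal{X}}} \in \mathcal{X}_{k}$. Hence we can rewrite
\begin{equation*}
    h_{k}(s) = \mathbb{P}_{\kappa_{F}}\left[ T_{\mathcal{X}} < \infty,\ S^{(f)}_{T_{\mathcal{X}}} \in \mathcal{X}_{k} \mid s \right],
\end{equation*}
which is the basic absorption probability we will manipulate. (The event $T_{\mathcal{X}}=\infty$ has probability zero by the acyclicity and finiteness of the state graph, and we note this explicitly.)

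\textbf{Boundary values.} For $x \in \mathcal{X}$ the chain starts in $\mathcal{X}$, so $T_{\mathcal{X}} = 1$ and $S^{(f)}_{T_{\mathcal{X}}} = x$. Therefore $h_{k}(x) = \mathbbm{1}[x \in \mathcal{X}_{k}]$. The convention for the time index ($t \ge 1$ with $S_{1} = s$) has to be fixed consistently with the definition of the hitting time.

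\textbf{Harmonicity on $\mathcal{S}$.} Fix $s \in \mathcal{S}$. Since $s \notin \mathcal{X}$, the hitting time satisfies $T_{\mathcal{X}} \ge 2$. We condition on the first step $S_{2} = s'$, which is drawn from $\kappa_{F}(s,\cdot) = p_{F}(s,\cdot)$, and apply the Markov property. The shifted chain started at $s'$ enters $\mathcal{X}$ at the same state as the original chain, so its absorption event is the same event. This gives
\begin{equation*}
    h_{k}(s) = \sum_{s'} p_{F}(s,s')\, h_{k}(s') = \mathbb{E}_{s' \sim \kappa_{F}(s,\cdot)}[h_{k}(s')].
\end{equation*}
This is a first-step analysis, and the only care needed is the shift invariance of the absorption event.

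\textbf{Value at $s_{o}$.} Proposition~\ref{prop:markovsa} gives $T_{\mathcal{X}}<\infty$ almost surely and $\mathbb{P}[S_{T_{\mathcal{X}}} = x \mid s_{o}] = R(x)/Z$. Summing over $x \in \mathcal{X}_{k}$ yields
\begin{equation*}
    h_{k}(s_{o}) = \sum_{x\in\mathcal{X}_{k}} \frac{R(x)}{Z}.
\end{equation*}
Corollary~\ref{col:pmassa} with $N(x)=1$ gives $Z_{k} = \sum_{x\in\mathcal{X}_{k}} R(x)$, so $h_{k}(s_{o}) = Z_{k}/Z$.

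The main subtlety is the identification $\min_{k'}T_{k'} = T_{\mathcal{X}}$. For a non-disjoint cover the event "reach $\mathcal{X}_{k}$ first" is ambiguous, and the identity $h_{k}(s_{o}) = Z_{k}/Z$ would need ties split with weights $1/N(x)$. We would therefore state explicitly that the partition assumption of this section is in force.
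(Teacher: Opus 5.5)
Your proof is correct and follows the paper's route. The paper rewrites the ``first-hit'' event as $T_{k}<\infty$, while you rewrite it as $S^{(f)}_{T_{\mathcal{X}}}\in\mathcal{X}_{k}$; both then use a first-step/total-probability argument for harmonicity and read $h_{k}(s_{o})$ off the terminal distribution of Proposition~\ref{prop:markovsa}. Your explicit use of the disjoint-cover assumption, so that $N(x)=1$ in Corollary~\ref{col:pmassa}, is exactly what the paper relies on implicitly when it writes $Z_{k}=\sum_{x\in\mathcal{X}_{k}}R(x)$.
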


Lemma~\ref{lemma:harmonicsa} ensures Doob's $h_{k}$-transform of $\kappa_{F}$ is well-defined. 
We show in the next proposition that the kernels resulting from such transform sample each $x \in \mathcal{X}_{k}$ in proportion to $R(x)$. 
\looseness=-1 

\begin{proposition}[$p_{F}^{\smash[t]{(k)}}$ is a Doob's $h_{k}$-transform of $p_{F}$] \label{prop:doobsaa} 
    Let $\kappa_{F}$ be as in Proposition~\ref{prop:markovsa} and $h_{k} \colon \bar{\mathcal{S}} \rightarrow \mathbb{R}$ be the harmonic function of Lemma~\ref{lemma:harmonicsa} for $k \in [K]$. 
    We define 
        $\kappa_{F}^{\smash[t]{(k)}}(s, s') = \kappa_{F}(s, s') \cdot \nicefrac{h_{k}(s')}{h_{k}(s)} \text{ and } \{S_{t}^{\smash[t]{(k)}}\}_{t \ge 1}$ 
    as a Markov chain following $\kappa_{F}^{\smash[t]{(k)}}$. 
    Then, 
    if $T_{m}^{\smash[t]{(k)}} = \min \left\{ t \colon S_{t}^{\smash[t]{(k)}} \in \mathcal{X}_{m} \right\}$, 
    \begin{equation*}
        \mathbb{P}_{\kappa_{F}^{\smash[t]{(k)}}}\left[ T_{k}^{(k)} < \infty = T_{m}^{(k)} \right] = 1 \text{ for } m \neq k
        \text{ and }  \mathbb{P}_{\kappa_{F}^{\smash[t]{(k)}}}\left[ S_{T_{k}^{(k)}}^{\smash[t]{(k)}} = x \right] = \frac{R(x)}{\sum_{x \in \mathcal{X}_{k}} R(x)}
    \end{equation*}
    for all $x \in \mathcal{X}_{k}$. 
    In particular, the policy function $p_{F}^{\smash[t]{(k)}}$ defined as $p_{F}^{\smash[t]{(k)}}(s, s') = \kappa_{F}^{\smash[b]{(k)}}(s, s')$ for $s \in \mathcal{S}$ and $s' \in \bar{\mathcal{S}}$ satisfies \Cref{eq:marginalss}, i.e., the marginal of $p_{F}^{\smash[t]{(k)}}$ on $\mathcal{X}$ matches $R_{k}$ up to a constant. 
    \looseness=-1 
\end{proposition}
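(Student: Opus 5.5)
The plan is to use the standard path-measure description of Doob's $h$-transform and then apply Lemma~\ref{lemma:harmonicsa} and Proposition~\ref{prop:markovsa}.

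\textbf{Step 1: $\kappa_{F}^{(k)}$ is a Markov kernel.} This needs to be settled on the set $\{s \colon h_{k}(s) > 0\}$. Harmonicity of $h_{k}$ on $\mathcal{S}$ gives $\sum_{s'} \kappa_{F}(s,s') h_{k}(s')/h_{k}(s) = 1$. On an absorbing $x \in \mathcal{X}_{k}$ we have $h_{k}(x)=1$ and $\kappa_{F}(x,\cdot)=\mathbbm{1}[\cdot = x]$, so the row sums to one there as well. The chain started from $s_{o}$ never visits a state with $h_{k}=0$, because every such transition receives weight $0$. Also $h_{k}(s_{o}) = Z_{k}/Z > 0$ whenever $\mathcal{X}_{k}$ carries mass.

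\textbf{Step 2: telescoping along paths.} For a trajectory $\tau = (s_{o}, s_{1}, \dots, s_{d}, x)$ that reaches $\mathcal{X}$ for the first time at its last step, the ratios telescope:
\begin{equation*}
    \prod_{i} \kappa_{F}^{(k)}(s_{i}, s_{i+1}) = p_{F}(s_{o}, \tau) \cdot \frac{h_{k}(x)}{h_{k}(s_{o})} = p_{F}(s_{o}, \tau) \cdot \frac{Z \, \mathbbm{1}[x \in \mathcal{X}_{k}]}{Z_{k}}.
\end{equation*}
Here we use $h_{k}(x)=\mathbbm{1}[x\in\mathcal{X}_{k}]$ and $h_{k}(s_{o})=Z_{k}/Z$ from Lemma~\ref{lemma:harmonicsa}.

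\textbf{Step 3: the hitting probabilities.} Sum the identity in Step 2 over all $\tau \in s_{o} \rightsquigarrow x$. By Proposition~\ref{prop:markovsa} the sum of $p_{F}(s_{o},\tau)$ equals $R(x)/Z$. Hence
\begin{equation*}
    \mathbb{P}_{\kappa_{F}^{(k)}}\bigl[ S_{T_{\mathcal{X}}}^{(k)} = x \bigr] = \frac{R(x)\,\mathbbm{1}[x\in\mathcal{X}_{k}]}{Z_{k}},
\end{equation*}
where $Z_{k}=\sum_{x\in\mathcal{X}_{k}} R(x)$ by Corollary~\ref{col:pmassa} with $N(x)=1$. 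These probabilities sum to one over $x \in \mathcal{X}_{k}$, which yields three conclusions. First, $T_{\mathcal{X}}^{(k)}<\infty$ almost surely. Second, the chain is absorbed in $\mathcal{X}_{k}$, because the covering is disjoint and so the first entry into $\mathcal{X}$ is an entry into exactly one stratum. Third, for $m\neq k$, absorption in $\mathcal{X}_{k}$ means $\mathcal{X}_{m}$ is never visited, so $T_{m}^{(k)}=\infty$. Finiteness of all trajectories follows from acyclicity of the state graph. The marginal of $p_{F}^{(k)}$ on $\mathcal{X}$ is therefore proportional to $R_{k}(x) = R(x)\mathbbm{1}[x\in\mathcal{X}_{k}]$, which is the statement of \Cref{eq:marginalss} for component $k$.

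\textbf{Main obstacle.} The delicate point is the bookkeeping at the absorbing set. A trajectory might pass through a terminal state and then continue, but this cannot happen: in the state graph terminals have no outgoing edges, so the first hitting time of $\mathcal{X}$ is also the end of the path. The set of states with $h_{k}=0$ must also be handled, either by restricting the kernel to $\{h_{k}>0\}$ or by defining it arbitrarily there. I would dispose of both points before carrying out Steps 2 and 3.
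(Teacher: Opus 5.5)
Your proposal is correct and follows essentially the paper's route. Both arguments check the row sums through harmonicity, telescope $\prod \kappa_{F}^{(k)}$ along a trajectory to $p_{F}(s_{o},\tau)\, h_{k}(x)/h_{k}(s_{o})$, and then combine $m_{F}(x) = R(x)/Z$ with $h_{k}(s_{o}) = Z_{k}/Z$. The differences are minor: you deduce $T_{m}^{(k)} = \infty$ from the total mass on $\mathcal{X}_{k}$ being one rather than from the paper's first-zero-of-$h_{k}$ argument, you skip the paper's unnecessary conditioning on $T_{k} = t$, and you state the handling of states with $h_{k} = 0$ more explicitly than the paper does.
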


In other words, Proposition~\ref{prop:doobsaa} states $p_{F}^{\smash[t]{(k)}}$ solves SC GFlowNet's learning problem. 


\subsection{Sampling Partitions \& Parallel Learning for SC GFlowNets}

Although Proposition~\ref{prop:doobsaa} 
clearly connects 
each $p_{F}^{\smash[t]{(k)}}$ to a base policy $p_{F}$, the quantity $h_{k}(x)$ cannot be efficiently computed in general even for known $\kappa_{F}$---let alone for the unknown solution to the GFlowNet's balance equations. 
Besides, learning $h_{k}$ from Lemma~\ref{lemma:harmonicsa} can be impractical, as it would severely increase the per-transition computational cost. 
In practice, we parameterize our policies as in Example~\ref{example:linesa} and learn them by minimizing the objective function 
\begin{equation} \label{eq:optmixsa} 
    \mathbb{E}_{k \sim \rho_{M}, \, \tau \sim \rho_{k}} \left[ \mathcal{L}_{\mathrm{TB}}\left( \tau, \{k\} ; \gfn_{[K]} \right) \right].  
\end{equation}
\looseness=-1

\pp{Active Partition Sampling.}
 \Cref{eq:optmixsa} begs the question of how to sample $k$ from $[K]$ during training. 
Intuitively, we should select most often the $k$ for which the probability mass under $\mathcal{X}_{k}$ is largest, while also serendipitously picking the $k$ such that $\mathcal{X}_{k}$ is thought to only include low-probability states. 
As $Z_{k}$ serves as a surrogate for the probability mass under $\mathcal{X}_{k}$ (Corollary~\ref{col:pmassa}), 
this intuition can be naturally operationalized by drawing $k$ according to 
\begin{equation*}
    k \sim \rho_{M} \coloneqq \mathrm{Categorical}\left( (1 - \epsilon) \cdot \nicefrac{\mathbf{Z}}{\mathbf{1}^{\intercal} \mathbf{Z}} + \epsilon \cdot K^{-1} \right) 
\end{equation*}
with $\epsilon \in [0, 1]$ and $\mathbf{1}$ as a $K$-sized vector of ones. 
We emphasize more sophisticated approaches, such as curriculum-based sampling \citep{bengio2009curriculuml, laajil2025curriculumaugmentedgflownetsmrnasequence} focused on high-loss instead of high-mass components, could also be developed; however, we leave this for future research. 
We summarize both the training and inference procedures for SC GFlowNets in Algorithms~\ref{alg:trainingsa} and~\ref{alg:inferencesa}. 
\looseness=-1 

\looseness=-1 




\pp{Embarrassingly Parallel Learning.} 
Besides being trainable as a monolithic model through active partition sampling, SC GFlowNets enable distributed learning of GFlowNets without inter-process communication. 
We call this \emph{embarrassingly parallel learning}: 
we learn $K$ independent models, each of which generates samples from $\mathcal{X}_{k}$ according to $R_{k}(x)$. 
Afterwards, during inference, we pick each model in proportion to its learned partition function to generate samples from $\mathcal{X}$. 
To 
separate SC GFlowNet from their asynchronous variants, we 
refer to the latter as \emph{ASC GFlowNets}.

\begin{figure}
    \begin{minipage}{.7\textwidth}
        \centering
        \includegraphics[width=\textwidth]{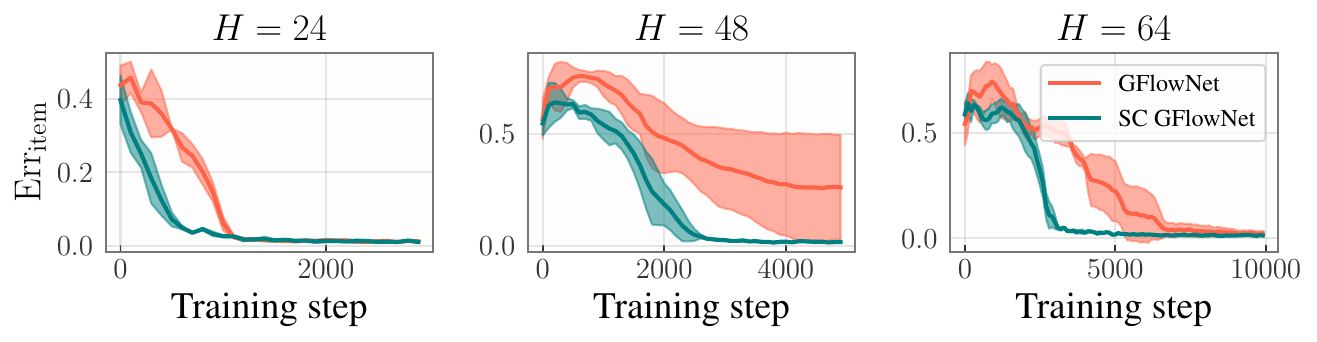}
        \caption{
            \textbf{Length-based SC GFlowNets trains faster than conventional GFlowNets} in the \textsc{Set Generation} domain, requiring less than half the number of iterations to achieve 
            the same accuracy. \looseness=-1 
        }
        \label{fig:setsconvergences}
    \end{minipage} 
    \hspace{12pt} 
    \begin{minipage}{.24\textwidth}
        \centering 
        \includegraphics[width=.9\linewidth]{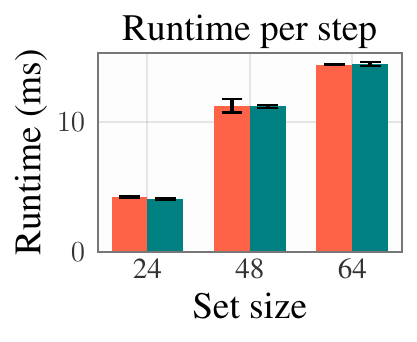}
        \caption{Average wall-clock times are similar for both samplers.} 
        \label{fig:runtimesa} 
    \end{minipage}
    \vspace{-6pt} 
\end{figure}

\begin{figure}
    \begin{subfigure}{.39\textwidth}
        \centering 
        \includegraphics[width=.95\linewidth]{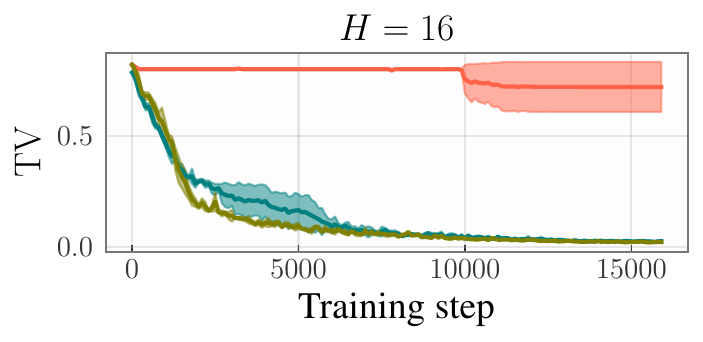}
    \end{subfigure}
    \begin{subfigure}{.39\textwidth}
        \centering 
        \includegraphics[width=.95\linewidth]{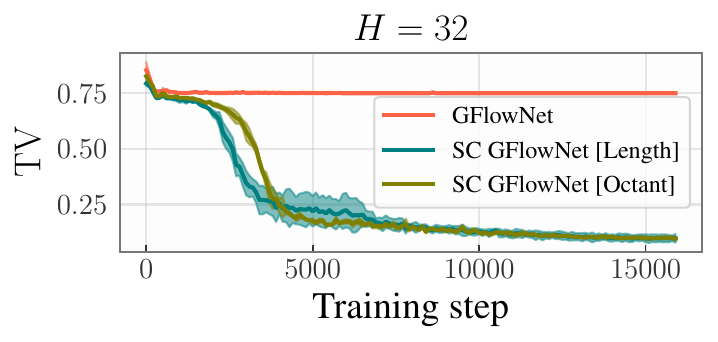}
    \end{subfigure}
    \begin{subfigure}{.19\textwidth}
        \centering 
        \includegraphics[width=.95\linewidth]{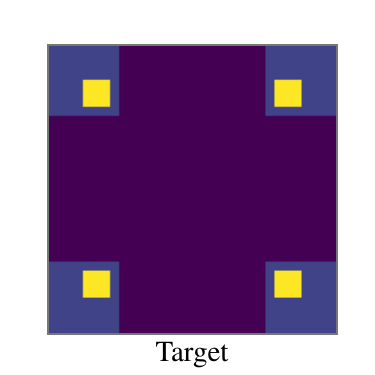}
    \end{subfigure}
    \caption{\textbf{SC GFlowNets converge substantially faster than GFlowNets} in the \textsc{Hypergrid} domain, with distinct partitioning strategies (\texttt{Length} \& \texttt{Octant}) yielding similar benefits.
    } 
    \label{fig:hypergridsaa} 
    \vspace{-12pt} 
\end{figure}

\section{Experiments} \label{sec:experimentsa}  

Our experiments 
evaluate the effectiveness of SC GFlowNets. 
We consider the tasks of \textsc{Set Generation}, \textsc{Hypergrid}, and \textsc{Ancestral Graphs}, 
representing both classical and recent GFlowNet applications with tractable assessment metrics \citep{malkin2022trajectory, malkin2023gflownets, kim2023local, kim2025adaptive}, and defer \textsc{Lazy Random Walk} 
and \textsc{Variable Selection}, along with an evaluation of ASC GFlowNets and a comparison against Boosted GFlowNets \citep{dallantonia2025boostedgflownetsimprovingexploration}, to \Cref{sec:experimentsaa}. \looseness=-1

\pp{Set generation.}
We let $\mathcal{S} = \{s \subseteq [H]\}$ for some $H \in \mathbb{N}$ and $\mathcal{X} = \mathcal{S} \times \{\top\}$ for the state space. 
We also define $u \colon [H] \rightarrow \mathbb{R}_{+}$ and let $R(x) = \prod_{e \in x} u(e)$ be the target distribution, which allows the probability $\pi_{\mathrm{item}}(e)$ of $e$ being a member of $x$ for $e \in [H]$ to be tractably computed. 
We evaluate the model 
via $\mathrm{Err}_{\mathrm{item}} = \frac{1}{2} \sum_{e \in [H]} |\hat{\pi}_{\mathrm{item}}(e) - \pi_{\mathrm{item}}(e)|$, with $\hat{\pi}_{\mathrm{item}}$ as the Monte Carlo estimate of $\pi_{\mathrm{item}}$. 
For SC GFlowNets, we use a length-based partitioning with $K = 3$. 
Results for $H \in \{24, 48, 64\}$ in \Cref{fig:setsconvergences} highlight the benefits of SC GFlowNet over its unpartitioned counterpart. 
In addition, we notice in \Cref{fig:runtimesa} the difference in runtime for both samplers is not significant. 
The main reason for this is that the cost of the extra steps required by the mixture model---active partition sampling and mask computation---is dwarfed by that of policy and reward evaluation.
We observed similar runtime patterns across all our experiments.
\looseness=-1 

\pp{Hypergrid.} 
The state space for \textsc{Hypergrid} is $\mathcal{S} = \{0, \dots, H - 1\}^{d}$ and $\mathcal{X} = \mathcal{S} \times \{\top\}$. 
The initial state is $s_{o} = \mathbf{0}$ and each transition from $s \in \mathcal{S}$ either adds $1$ to one of $s$'s coordinates, respecting the boundaries of $\mathcal{S}$, or interrupts the generative process, producing $x = (s, \top)$. 
The target distribution, described in \Cref{sec:experimentsaa}, is shown in \Cref{fig:hypergridsaa}. 
When designing SC GFlowNets, we consider both a length- ($K = 3$) and octant-based ($K = 4$) partitions, with the latter perfectly separating the four high-probability regions of $\mathcal{X}$; see \Cref{fig:hypergridsaaa} for illustrations.
Results in \Cref{fig:hypergridsaa} confirm SC GFlowNets's effectiveness even when using an imperfectly balanced partitioning. 
\looseness=-1



\begin{figure}
    \begin{subfigure}{.49\textwidth}
        \centering 
        \includegraphics[width=\linewidth]{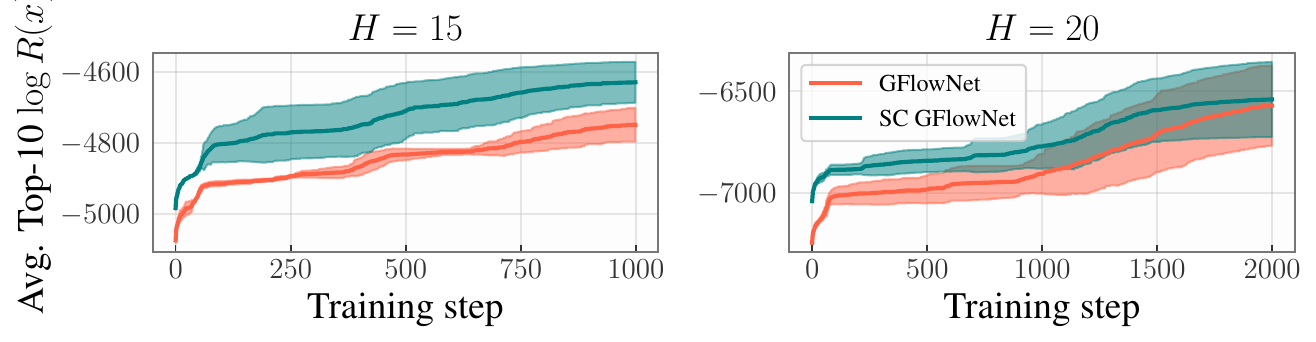}
        \caption{BIC (\Cref{eq:bicsa}).} 
    \end{subfigure}
    \begin{subfigure}{.49\textwidth}
        \centering 
        \includegraphics[width=\linewidth]{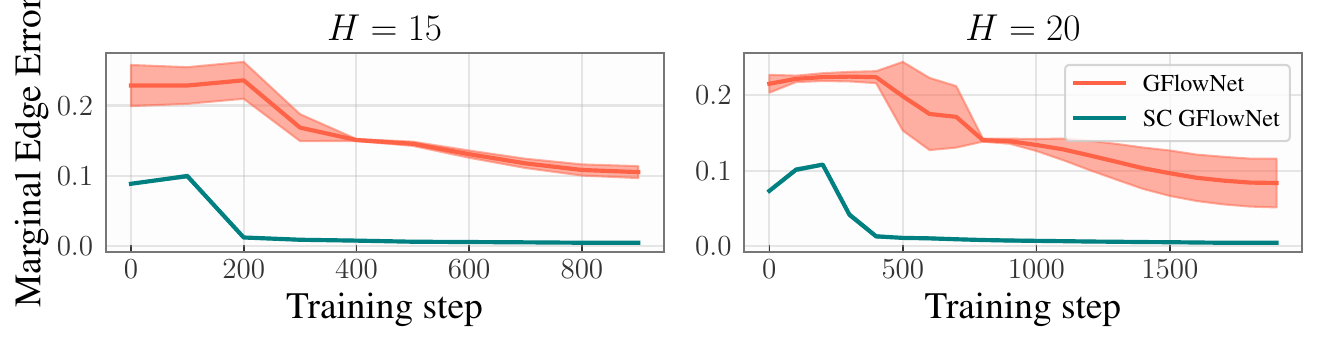}
        \caption{Erd\H{o}s-Renyi model.} 
    \end{subfigure}
    \caption{
        \textbf{SC GFlowNets enhance exploration (left) and speed up learning (right)} for the \textsc{Ancestral Graphs} domain. 
        We denote by $H$ the number of variables (nodes). 
    }
    \label{fig:graphsaa} 
    \vspace{-12pt} 
\end{figure}

\pp{Ancestral graphs.} 
This task 
was studied in \citep{silva2026causal} for addressing the problem of causal discovery under latent confounding.  
In a nutshell, $\mathcal{S}$ contains all \emph{partially edge-labelled} ancestral graphs \citep[AGs]{richardson2002ancestral} with $H$ nodes, while $\mathcal{X}$ contains all \emph{fully edge-labelled} AGs. 
The generative process starts at a graph with unlabelled edges, and each transition labels a node pair with either $\emptyset$, $\rightarrow$, $\leftarrow$, or $\leftrightarrow$; $\emptyset$ represents the absence of an edge. 
The objective is to generate high-scoring AGs $x$ according to the Bayesian information criterion (BIC) based on a dataset modelled as a linear Gaussian structural equation model \citep{drton2004iterative}; we set $R(x) = \exp\{-\mathrm{BIC}(x)\}$. 
As the distributional correctness of such method cannot be tractably measured for all but tiny $H$, we also consider an Erd\H{o}s-Renyi model, for which edge marginals can be efficiently computed using Robinson's formula \citep{robinson1973counting} and compared to that learned by our samplers (see \Cref{sec:experimentsaa}). 
In both cases, we partition $\mathcal{X}$ according to the number of non-$\emptyset$ edges for SC GFlowNets. 
\Cref{fig:graphsaa} shows that our mixture model improves both exploration and convergence. \looseness=-1 

\section{Conclusions} 

The central questions we addressed in this work are why and how to learn a mixture of GFlowNets.  
We first developed a general-purpose framework for answering them, outlining the capabilities and limitations of such approach. 
In particular, we showed SC GFlowNets, which learn an ensemble of samplers restricted to subsets of the state space, provide substantial empirical benefits at a negligible computational cost. 
Future research should investigate better ways of operationalizing our framework and, specifically, develop heuristics for automatic and tractable partitioning of the state space.  
\looseness=-1 


\bibliography{iclr2027/iclr2027_conference}
\bibliographystyle{iclr2027/iclr2027_conference}

\newpage 

\appendix


\section{GFlowNets: Background \& Challenges} \label{sec:backgroundsa} 

We provide a detailed background on GFlowNets, and review current challenges in effectively training these samplers, explaining how they can be naturally addressed by our mixture framework.  

\subsection{Background} 

This section reviews prior work on learning GFlowNets for discrete distributions, following \cite{malkin2023gflownets, theory}, to which we refer for a comprehensive overview. 
\looseness=-1 

\pp{GFlowNets.} 
Let $\mathcal{X}$ be a finite set; we say $\mathcal{X}$ is a \emph{compositional space} if there is a state graph (SG) $\mathcal{G} = (\mathcal{S}, \mathcal{E})$ with vertices $\mathcal{S} \supset \mathcal{X}$ and edges $\mathcal{E}$ having the following properties.

\begin{enumerate}[leftmargin=12pt]
    \item $\mathcal{G}$ is a directed acyclic graph (DAG) with a single source, $s_{o}$, and with $\mathcal{X}$ as the set of sink vertices. We call $s_{o}$ the \emph{initial state}. 
    \item There is a (possibly non-unique) path from from each $s_{o}$ to each $\mathcal{S} \setminus \{s_{o}\}$. 
\end{enumerate}

We further assume, for computational reasons, that the number of children of each $s \in \mathcal{S}$ on $\mathcal{G}$ and the maximum trajectory length in $\mathcal{G}$ are exponentially smaller than $|\mathcal{X}|$. 
(Otherwise, every finite set $\mathcal{X}$ would be compositional by considering $\mathcal{S} = \{s_{o}\} \cup \mathcal{X}$ and $\mathcal{E} = \{s_{o} \rightarrow x \colon x \in \mathcal{X}\}$). 
Our objective is to generate $x \in \mathcal{X}$ in proportion to a positive function $R \colon \mathcal{X} \rightarrow \mathbb{R}_{+}$. 
\looseness=-1 


\pp{Sampling as learning.}
Towards this objective, we define $p_{F} \colon \mathcal{S} \times (\mathcal{S} \cup \mathcal{X}) \rightarrow [0, 1]$ (resp. $p_{B} \colon (\mathcal{S} \cup \mathcal{X}) \times \mathcal{S} \rightarrow [0, 1]$) as the \emph{forward} (resp. \emph{backward}) policy on $\mathcal{G}$ representing the probability $p_{F}(s, s')$ of transitioning from $s$ to a child $s'$ of $s$. 
The reason for $p_{B}$'s existence will become clearer shortly. 
We also define $s \rightsquigarrow s'$ as the set of trajectories in $\mathcal{G}$ from $s$ to $s'$. 
Our objective is for the marginal probability of $p_{F}$ over $\mathcal{X}$, when starting at $s_{o}$, to match $R$, i.e., 
\begin{equation} \label{eq:marginals}
    m_{F}(x) \coloneqq \displaystyle \sum_{\substack{\tau \in s_{o} \rightsquigarrow x \\ \tau = \{(s_{o}, s_{1}), \dots, (s_{K}, x)\}}} \prod_{(s, s') \in \tau} p_{F}(s, s') \coloneqq \sum_{\tau \in s_{o} \rightsquigarrow x} p_{F}(s_{o}, \tau) \propto R(x). 
\end{equation}
The trajectory length $K$ in the equation above need not be the same for each $\tau$.
Importantly, we emphasize this equation merely integrates out from $p_{F}$ intermediate variables from the joint distribution $(s_{o}, \dots, x)$ over trajectories. 
Due to \Cref{eq:marginals}'s intractability, we estimate the summation above using $p_{B}$ as a proposal for an importance-sampling-like scheme, that is, denoting $Z = \sum_{x \in \mathcal{X}} R(x)$, 
\begin{equation*}
    \mu_{F}(s_{o}, x) = \mathbb{E}_{\tau \sim p_{B}(x, \cdot)} \left[ \frac{ p_{F}(s_{o}, \tau) } {p_{B}(x, \tau) } \right] = \frac{R(x)}{Z}, 
\end{equation*}
with $p_{B}(x, \tau) = \prod_{(s, s') \in \tau} p_{B}(s', s)$. The condition $\nicefrac{p_{F}(\tau | s_{o})}{p_{B}(\tau | x)} = \nicefrac{R(x)}{Z}$ is known as \emph{trajectory balance} (TB) \cite{malkin2022trajectory}. 
To search for policies satisfying it, we parameterize $p_{F}$ and (optionally) $p_{B}$ as 
neural networks and solve the stochastic program
\begin{equation} \label{eq:stochasticsa} 
    \min_{p_{F}, p_{B}} \mathcal{L}(p_{F}, p_{B} | \rho) \coloneqq \mathbb{E}_{\tau \sim \rho} \left( \log \frac{Z \cdot p_{F}(\tau | s_{o})}{p_{B}(\tau | x) \cdot R(x)} \right)^{2}  
\end{equation}
via stochastic gradient descent. 
As the reader may have noticed, 
$p_{B}$ was introduced arbitrarily; in general, there are infinitely many pairs $(p_{F}, p_{B})$ satisfying the TB condition, a fact that will come up later in \Cref{sec:modelsa} when we discuss mixtures of amortized samplers. 
Additional learning objectives, with distinct parameterizations and properties, have been proposed and extensively studied \citep{madan2022learninggf, malkin2023gflownets, tiapkin2024generative, pan2023better, hu2026squarederrorexploringloss}. 
In practice, we let $\theta$ be the (joint) parameters of $p_{F}$, $p_{B}$, and $\log Z$, $\{\alpha_{t}\}_{t \ge 0}$ be a sequence of step sizes, and iteratively minimize 
\begin{equation*}
    \theta^{\smash[t]{(t)}} = \min_{\theta} \theta^{\intercal} \nabla_{\theta^{\smash[t]{(t - 1)}}} \hat{\mathcal{L}}(\theta | \theta^{\smash[t]{(t - 1)}}) + \frac{1}{2 \alpha_{t}} \| \theta - \theta^{\smash[t]{(t - 1)}}\|_{2}^{2} 
\end{equation*}
(or a preconditioned variant of it \citep{khan2018fastsimplenaturalgradientdescent}), with $\hat{\mathcal{L}}(\theta | \theta^{\smash[t]{(t - 1)}})$ as an Monte Carlo estimate of $\mathcal{L}(p_{F}, p_{B} | \rho^{\smash[t]{(t - 1)}})$ when both $p_{F}$ and $p_{B}$ are instantiated according to $\theta$ and $\rho^{\smash[t]{(t - 1)}}$ as a function of $\theta^{\smash[t]{(t - 1)}}$ \citep{tiapkin2024generative, kim2025adaptive, madan2025towards, gritsaev2025optimizingbackwardpoliciesgflownets, dallantonia2026avoidknowdivergenttrajectory}. 
For example, the popular $\epsilon$-greedy exploration \citep{bengio2021, malkin2022trajectory, pan2023generative, zhou2024phylogfn, kim2025symmetryawaregflownets, dallantonia2025boostedgflownetsimprovingexploration} uses $\rho^{\smash[t]{(t - 1)}}(s, \cdot) = (1 - \epsilon) \cdot p_{F}^{\smash[t]{(t - 1)}}(s, \cdot) + \epsilon \cdot p_{U}(s, \cdot)$ 
with $p_{U}(s, \cdot)$ as the policy function which selects the children of $s$ uniformly at random and $p_{F}^{\smash[t]{(t - 1)}}(s, \cdot)$ as the $p_{F}$ corresponding to $\theta^{\smash[t]{(t - 1)}}$.  
\looseness=-1 

\subsection{Challenges in training GFlowNets} \label{sec:modelsa} 



To further motivate our work, we briefly describe current challenges in learning GFlowNets, and later argue they can be partly addressed through a mixture formulation of the sampling problem.

\begin{wrapfigure}[14]{l}{.3\textwidth}
    \vspace{-12pt} 
    \includegraphics[width=.9\linewidth, page=3]{figures/graphs_state_graphs.pdf}
    \caption{Any $\alpha \in [1, 2]$ solves the assignment problem in this state graph. The solution 
    for uniform $p_{B}$ is $\alpha = 1.5$.}
    \label{fig:alphasa} 
\end{wrapfigure}
The first challenge is that the optimization problem described in \Cref{eq:stochasticsa} is \emph{underdetermined}, as there is no pair $(p_{F}, p_{B})$ uniquely satisfying the TB condition in general (unless there is a single trajectory from $s_{o}$ to each $x$, as in autoregressive sequence generation tasks \cite{sequence}).
This is illustrated in \Cref{fig:alphasa}, which displays a family of $p_{F}$ for which the marginal constraint in \Cref{eq:marginals} is satisfied, highlighting the existence of a \emph{continuum} of solutions to the sampling problem. \looseness=-1  

\begin{challenge}[Underdetermination] \label{ch:uda}
    There are infinitely many pairs $(p_{F}, p_{B})$ for which TB is satisfied and the marginal of $p_{F}$ on $\mathcal{X}$ matches $R$, and it is unclear how to choose between them.
\end{challenge}

\begin{wrapfigure}[11]{r}{.4\textwidth}
    \vspace{-12pt} 
    \includegraphics[width=\linewidth]{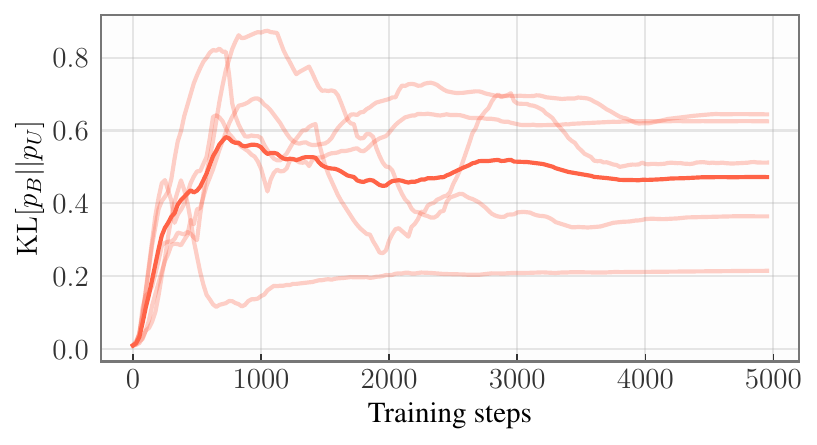}
    \caption{KL between $p_{B}$ and $p_{U}$ for the \textsc{Hypergrid} domain.}
    \label{fig:hypergridpbsa} 
\end{wrapfigure}
A common choice, inspired by the maximum entropy principle \citep{jaynes2003probability}, is to set $p_{B}(s, \cdot)$ as a uniform probability distribution over the parents of $s$ in the SG \citep{shen23gflownets}. 
Alternatively, both $p_{F}$ and $p_{B}$ can be jointly learned \citep{gritsaev2025optimizingbackwardpoliciesgflownets}. 
However, deriving a $p_{B}$ for which finding the corresponding $p_{F}$ is easiest is not possible in general, and it is unclear which pair $(p_{F}, p_{B})$ does gradient-based learning induce. 
We illustrate this in \Cref{fig:hypergridpbsa} for the \textsc{Hypergrid} domain, showing the variability of $\mathrm{KL}(p_{B}(s, \cdot) || p_{U}(s, \cdot))$ averaged over states $s \in \mathcal{S} \setminus \{s_{o}\}$ during training for 5 independent runs (the highlighted curve represents per-step averages). 
As we can observe, the learned $p_{B}$ vary drastically solely by 
modifying 
the pseudorandom seed for sample generation---while maintaining both the parameter initialization and optimizer unchanged. 
\looseness=-1 

A mixture model, as we will show, naturally addresses Challenge~\ref{ch:uda} by learning a \emph{family of backward policies}---hence avoiding strict commitment to a single, possibly unknown $p_{B}$. 
However, even when $p_{B}$ is uniquely defined---e.g., in autoregressive sequence generation \citep{sequence}---the underlying neural network might be incapable of finding an approximation 
to solution of the stochastic program in \Cref{eq:stochasticsa}. 
This problem, known as near state aliasing \citep{pardo2022timelimitsreinforcementlearning}, is particularly harmful for graph-structured domains \citep{silva2025when, kim2025symmetryawaregflownets}. 

\begin{challenge}[(Near) state aliasing] \label{ch:sia} 
    A solution $(p_{F}, p_{B})$ to \Cref{eq:stochasticsa} might not be (accurately) realizable by the chosen family of parametric models.    
\end{challenge}

In \Cref{sec:continuousa}, 
we show how our method alleviates it by establishing a relationship between Random Network Initialization (RNI) in Graph Neural Networks (GNNs) and mixture modelling. 
Even an universal approximator, however, may struggle to search for a compatible pair $(p_{F}, p_{B})$ simply due to the sheer size of the state space. 
The main reason for this is that, to accurately approximate the target distribution, the model has to reach its high-probability regions during training \cite{kim2023local, kim2025adaptive, dallantonia2026avoidknowdivergenttrajectory}
, an obstacle that typical $\epsilon$-greedy strategies fail to overcome.    

\begin{challenge}[Insufficient exploration] \label{ch:epa} 
    Accurately learning $(p_{F}, p_{B})$ requires visiting the target $R$'s high-probability regions during training, which are often diverse and sparsely distributed. 
\end{challenge}

To address Challenge~\ref{ch:epa}, recent works have proposed 
using boosting-style techniques \citep{dallantonia2025boostedgflownetsimprovingexploration} 
and 
artificial curiosity-inspired \cite{schmidhuber2020generative} 
\emph{exploratory} models---which are discarded after training. 
These approaches, however, increase the per-step computational cost of conventional samplers by severalfold, requiring the learning and evaluation of one \cite{kim2025adaptive, madan2025towards, dallantonia2026avoidknowdivergenttrajectory} or many additional models \cite{silva2025generalization, dallantonia2025boostedgflownetsimprovingexploration} throughout learning and inference.
In contrast, in \Cref{sec:experimentsa}, we showed improved exploration is achievable by a \emph{mixture} of GFlowNets---without relying on additional, independently parameterized costly models. 
\looseness=-1 

\section{Mixture of GFlowNets} \label{sec:mixturesa} 

We outline additional results on the learning of a mixture of GFlowNets: an extension of \cite{madan2022learninggf}'s SubTB condition and the associated family of loss functions. We start with the former. 
\looseness=-1 

\begin{proposition}[SubTB] \label{prop:subtbsa} 
    For each $s \in \mathcal{S} \cup \mathcal{X}$, let $F(s, \mathrm{d}\gamma)$ be a measure on $\Gamma$ such that $F(s, \mathrm{d}\gamma) = R(x, \mathrm{d}\gamma)$ for $x \in \mathcal{X}$. 
    We call $F$ a \emph{flow function.} 
    If, for $\tau \in s \rightsquigarrow s'$, \looseness=-1 
    \begin{equation*}
        F(s, \mathrm{d}\gamma) p_{F}^{\smash[t]{(\gamma)}}(s, \tau) = F(s', \mathrm{d}\gamma) p_{B}^{\smash[t]{(\gamma)}}(s', \tau), 
    \end{equation*}
    then \Cref{eq:marginalss} is satisfied. 
    In addition, for each $s \in \mathcal{S}$, 
    \begin{equation*}
        Z(\mathrm{d}\gamma) = F(s_{o}, \mathrm{d}\gamma) \text{ and } F(s, \mathrm{d}\gamma) = \sum_{x \in \mathcal{X}} \sum_{\tau \in s \rightsquigarrow x} R(x, \mathrm{d}\gamma) p_{B}^{\smash[t]{(\gamma)}}(x, \tau). 
    \end{equation*}
\end{proposition}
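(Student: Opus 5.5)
The argument first specializes the SubTB hypothesis to full trajectories, which gives the MTB condition of Definition~\ref{def:mtbsa} and hence \Cref{eq:marginalss}. It then uses the balance relation with $s' = x$, together with normalization of the backward policy, to obtain the explicit formula for $F(s, \mathrm{d}\gamma)$. The step needing care is the measure-theoretic bookkeeping: every identity below is an equality of measures on $\Gamma$, in the sense of the footnote, and should be read as holding when integrated over an arbitrary Borel set $B$. Since the SubTB identity holds for every measurable $B$ and all sums are finite, integrating termwise and exchanging finite sums with integrals is harmless.

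\emph{Step 1: Formula for $F(s, \mathrm{d}\gamma)$.} Fix $s \in \mathcal{S}$. For each $x \in \mathcal{X}$ and $\tau \in s \rightsquigarrow x$, apply the hypothesis with $s' = x$ and use $F(x, \mathrm{d}\gamma) = R(x, \mathrm{d}\gamma)$:
\begin{equation*}
F(s, \mathrm{d}\gamma)\, p_{F}^{\smash[t]{(\gamma)}}(s, \tau) = R(x, \mathrm{d}\gamma)\, p_{B}^{\smash[t]{(\gamma)}}(x, \tau).
\end{equation*}
Sum over all $x \in \mathcal{X}$ and $\tau \in s \rightsquigarrow x$. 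On the left, $\sum_{x}\sum_{\tau \in s \rightsquigarrow x} p_{F}^{\smash[t]{(\gamma)}}(s, \tau) = 1$. This holds because the state graph is a finite DAG whose sinks are exactly $\mathcal{X}$, so the chain started at $s$ reaches $\mathcal{X}$ in finitely many steps almost surely; this is the content of Proposition~\ref{prop:markovsa}, or it follows by induction on the maximal distance from $s$ to a sink. The left side therefore reduces to $F(s, \mathrm{d}\gamma)$, which is exactly the stated formula.

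\emph{Step 2: The identity $Z(\mathrm{d}\gamma) = F(s_{o}, \mathrm{d}\gamma)$.} Take $s = s_{o}$ in Step 1. The backward probabilities satisfy $\sum_{\tau \in s_{o} \rightsquigarrow x} p_{B}^{\smash[t]{(\gamma)}}(x, \tau) = 1$, because backward chains from $x$ are absorbed at $s_{o}$ (the backward half of Proposition~\ref{prop:markovsa}). Hence
\begin{equation*}
F(s_{o}, \mathrm{d}\gamma) = \sum_{x} R(x, \mathrm{d}\gamma).
\end{equation*}
To identify this with $Z(\mathrm{d}\gamma)$, note that the SubTB identity for $s = s_{o}$ and a complete trajectory $\tau \in s_{o} \rightsquigarrow x$ is precisely the MTB condition of Definition~\ref{def:mtbsa}, with $F(s_{o}, \mathrm{d}\gamma)$ in the role of $Z(\mathrm{d}\gamma)$. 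Proposition~\ref{prop:relsa} then gives $Z(\mathrm{d}\gamma) = \sum_{x} R(x, \mathrm{d}\gamma)$. Equivalently, one can define $Z(\mathrm{d}\gamma) \coloneqq F(s_{o}, \mathrm{d}\gamma)$ and check consistency with $Z \cdot p_{M}$ by evaluating at $B = \Gamma$, using $\int_{\Gamma} q_{M}(\mathrm{d}\gamma \mid x) = 1$.

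\emph{Step 3: The marginal identity \Cref{eq:marginalss}.} Sum the full-trajectory identity of Step 2 over $\tau \in s_{o} \rightsquigarrow x$, again using normalization of $p_{B}^{\smash[t]{(\gamma)}}$. This yields
\begin{equation*}
Z(\mathrm{d}\gamma)\, m_{F}^{\smash[t]{(\gamma)}}(x) = R(x)\, q_{M}(\mathrm{d}\gamma \mid x).
\end{equation*}
Integrating over $\Gamma$ and dividing by $Z$ gives
\begin{equation*}
\int_{\Gamma} p_{M}(\mathrm{d}\gamma)\, m_{F}^{\smash[t]{(\gamma)}}(x) = \frac{R(x)}{Z},
\end{equation*}
which is \Cref{eq:marginalss}.
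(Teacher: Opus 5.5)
Your proof is correct and follows essentially the paper's argument. Specializing the SubTB identity to $s' = x$ and summing out $\tau$ and $x$, using the normalizations of $p_{F}^{(\gamma)}$ and $p_{B}^{(\gamma)}$, gives both the formula for $F(s, \mathrm{d}\gamma)$ and $F(s_{o}, \mathrm{d}\gamma) = \sum_{x} R(x, \mathrm{d}\gamma)$. The complete-trajectory case is the MTB condition, whose marginalization yields \Cref{eq:marginalss}; the paper invokes the trajectory-balance theorem of Malkin et al.\ for this step, whereas your direct summation is slightly more self-contained.

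One point of care, which applies equally to the paper: the hypothesis never involves $p_{M}$, so the identification $Z(\mathrm{d}\gamma) = F(s_{o}, \mathrm{d}\gamma)$ is effectively a definition. Invoking Proposition~\ref{prop:relsa} with $F(s_{o}, \cdot)$ in the role of $Z(\cdot)$ presupposes this identification rather than proving it. Your check at $B = \Gamma$ only shows that $F(s_{o}, \cdot)/Z$ is a probability measure, not that it coincides with an independently prescribed $p_{M}$. With $p_{M} \coloneqq F(s_{o}, \cdot)/Z$, as both you and the paper implicitly take it, everything goes through.
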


We provide a demonstration for this and every other statement in the supplement in \Cref{sec:psaa}, jointly with the statements in the main text. 
From Proposition~\ref{prop:subtbsa}, the usual loss functions encountered in the GFlowNet literature can be readily derived. 

\begin{definition}[SubTB \& DB \& TB losses] \label{def:lossesa} 
    Let $Z(\cdot)$ and $F(s, \cdot)$ and $R(x, \cdot)$ be measures on $\Gamma$, as in Proposition~\ref{prop:subtbsa}, for $s \in \mathcal{S}$ and $x \in \mathcal{X}$, and $p_{F}^{\smash[t]{(\gamma)}}$ and $p_{B}^{\smash[t]{(\gamma)}}$ be $\Gamma$-indexed policies.
    We concisely let $\gfn_{\Gamma} = (Z, F, R, \{p_{F}^{\smash[t]{(\gamma)}}, p_{B}^{\smash[t]{(\gamma)}}\}_{\gamma \in \Gamma})$. 
    Given $\tau = (s_{o}, \dots, s_{N})$ with $s_{N} = x \in \mathcal{X}$, we define $\tau_{i:j} = (s_{i}, s_{i + 1}, \dots, s_{j})$ and  
    \begin{equation*}
        \mathcal{L}_{\mathrm{SubTB}}(\tau, \mathrm{d} \gamma ; \gfn_{\Gamma}) = \sum_{0 \le i < j \le N} \alpha_{ij} \left( \log \frac{F(s_{i}, \mathrm{d}\gamma) p_{F}^{\smash[t]{(\gamma)}}(s_{i}, \tau_{i:j})}{F(s_{j}, \mathrm{d}\gamma) p_{B}^{\smash[t]{(\gamma)}}(s_{j}, \tau_{j:i})} \right)^{2} 
    \end{equation*}
    as the \emph{SubTB} loss, with $(\alpha_{ij})_{i, j = 0}^{N}$ as positive numbers with $\sum_{j > i} \alpha_{ij} = 1$.
    Similarly, 
    \begin{equation*}
        \mathcal{L}_{\mathrm{DB}}(\tau, \mathrm{d} \gamma ; \gfn_{\Gamma}) = \frac{1}{N} \sum_{0 \le i \le N - 1} \left(\log \frac{F(s_{i}, \mathrm{d}\gamma) p_{F}^{\smash[t]{(\gamma)}}(s_{i}, s_{i + 1})}{F(s_{i + 1}, \mathrm{d}\gamma) p_{B}^{\smash[t]{(\gamma)}}(s_{i + 1}, s_{i})}\right)^{2} 
    \end{equation*}
    is the \emph{DB} loss.
    In conclusion, we let the \emph{TB} loss be  
    \begin{equation*}
        \mathcal{L}_{\mathrm{TB}}(\tau, \mathrm{d} \gamma ; \gfn_{\Gamma}) = \left( \frac{Z(\mathrm{d}\gamma) p_{F}^{\smash[t]{(\gamma)}}(s_{o}, \tau)}{R(x, \mathrm{d}\gamma) p_{B}^{\smash[t]{(\gamma)}}(x, \tau)} \right)^{2}.  
    \end{equation*}
\end{definition}

\section{Continuously Indexed GFlowNets} \label{sec:continuousa}

In this section, we elaborate on the design of a continuously indexed mixture of GFlowNets. 
As we will show, this approach can drastically boost the expressivity of GNN-parameterized samplers \citep{kim2025symmetryawaregflownets} without requiring expensive heuristics such as look-ahead search, as proposed in \citep{silva2025when}.  
In doing so, we designedly use a \emph{collapsed} mixing posterior distribution---$q(\mathrm{d}\gamma | x) = q(\mathrm{d}\gamma)$ for every $x \in \mathcal{X}$---an approach we refer to as \emph{random features} (RFs) for GFlowNets.
The reason for this is that either $Z(\mathrm{d}\gamma)$ or $q(\mathrm{d}\gamma | x)$ are generally intractable measures when $\Gamma$ is uncountable, as we also demonstrate.

\pp{Continuous mixtures of GFlowNets.} 
We henceforth consider $\Gamma = \mathbb{R}^{d}$ and $\nu$ as the Lebesgue measure, and we assume both $q_{M}(\mathrm{d}\gamma | x)$ (resp. $R(x, \mathrm{d}\gamma)$, $F(s, \mathrm{d}\gamma)$) and $p_{M}(\mathrm{d}\gamma)$ (resp. $Z(\mathrm{d}\gamma)$) are absolutely continuous with respect to it.
As such, we write $R(x, \gamma)$ and $F(s, \gamma)$ and $Z(\gamma)$ for their corresponding densities. 
It is an obvious consequence of Proposition~\ref{prop:subtbsa} that the balance conditions can be written as a function of these densities instead of the previously introduced measures. 
We highlight this in the following corollary. 

\begin{remark}[SubTB] \label{remark:subtbsa} 
    When $\Gamma = \mathbb{R}^{d}$ and $F(s, \mathrm{d}\gamma)$ and $Z(\mathrm{d}\gamma)$ are absolutely continuous with respect to the base Lebesgue measure $\nu$, the SubTB condition is 
    \begin{equation*}
        F(s, \gamma) p_{F}^{\smash[t]{(\gamma)}}(s, \tau) = F(s', \gamma) p_{B}^{\smash[t]{(\gamma)}}(s', \tau) 
    \end{equation*}
    for each $s, s' \in \mathcal{S} \cup \mathcal{X}$ and $\tau \in s \rightsquigarrow s'$ and $\nu$-almost surely for $\gamma \in \Gamma$. 
\end{remark}

As per Remark~\ref{remark:subtbsa}, we learn $F(s, \gamma)$ as a real-valued neural network-parameterized positive function 
receiving both $s \in \mathcal{S}$ and $\gamma \in \Gamma$ as inputs; recall $F(x, \gamma) = R(x, \gamma)$ for $x \in \mathcal{X}$.
The core issue in doing so, however, is that we may not be able to sample from the prior mixing distribution during inference when the function $Z(\gamma)$ is defined arbitrarily.
Conversely, if we let $p_{M}(\gamma)$ have a simple form (e.g., Gaussian) and $q_{M}(\gamma | x)$ be flexibly learned, the latter distribution may collapse into the former. 
This can be understood via the following corollary of Proposition~\ref{prop:subtbsa}. \looseness=-1 
\looseness=-1 

\begin{corollary}[Relationship between $p_{M}$ and $q_{M}$] \label{col:cmarginalsa} 
    Assume $p_{M}, q_{M}, p_{F}^{\smash[t]{(\gamma)}}, p_{B}^{\smash[t]{(\gamma)}}$ satisfy the SubTB condition in Remark~\ref{remark:subtbsa}. 
    Then, 
    \begin{equation*}
        p_{M}(\gamma) = \sum_{x \in \mathcal{X}} \frac{R(x)}{Z} \cdot  q_{M}(\gamma|x) \text{ and } q_{M}(\gamma | x) = \frac{Z \cdot p_{M}(\gamma) \cdot m_{F}^{\smash[t]{(\gamma)}}(x)}{R(x)}.   
    \end{equation*}
\end{corollary}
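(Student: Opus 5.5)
The plan is to specialize Proposition~\ref{prop:relsa}, or equivalently Proposition~\ref{prop:subtbsa} evaluated at $s = s_{o}$, to densities. Since $p_{M}$ and $q_{M}(\cdot|x)$ are absolutely continuous with respect to Lebesgue measure, so are $Z(\mathrm{d}\gamma) = Z\, p_{M}(\mathrm{d}\gamma)$ and $R(x, \mathrm{d}\gamma) = R(x)\, q_{M}(\mathrm{d}\gamma|x)$. Their densities are therefore $Z(\gamma) = Z\, p_{M}(\gamma)$ and $R(x,\gamma) = R(x)\, q_{M}(\gamma|x)$.

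First, I would justify the measure-level identities from the SubTB condition of Remark~\ref{remark:subtbsa}. Apply it to a full trajectory $\tau \in s_{o} \rightsquigarrow x$, using $F(s_{o},\gamma) = Z(\gamma)$ from Proposition~\ref{prop:subtbsa} and $F(x,\gamma) = R(x,\gamma)$. This gives, for $\nu$-a.e.\ $\gamma$,
\begin{equation*}
Z(\gamma)\, p_{F}^{(\gamma)}(s_{o},\tau) = R(x,\gamma)\, p_{B}^{(\gamma)}(x,\tau).
\end{equation*}
Summing over $\tau \in s_{o} \rightsquigarrow x$, and using that $p_{B}^{(\gamma)}(x,\cdot)$ is a probability distribution on these trajectories, yields $Z(\gamma)\, m_{F}^{(\gamma)}(x) = R(x,\gamma)$. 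Substituting the densities and dividing by $R(x) > 0$ gives the second identity,
\begin{equation*}
q_{M}(\gamma|x) = \frac{Z\, p_{M}(\gamma)\, m_{F}^{(\gamma)}(x)}{R(x)}.
\end{equation*}

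Next, I would sum the identity $Z(\gamma)\, m_{F}^{(\gamma)}(x) = R(x,\gamma)$ over $x \in \mathcal{X}$. Because the state graph is finite and acyclic, $p_{F}^{(\gamma)}$ started at $s_{o}$ terminates almost surely, so $\sum_{x} m_{F}^{(\gamma)}(x) = 1$. This gives $Z(\gamma) = \sum_{x} R(x,\gamma)$, i.e.\ $Z\, p_{M}(\gamma) = \sum_{x} R(x)\, q_{M}(\gamma|x)$. Dividing by $Z$ gives the first identity.

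The main subtlety is that these equalities hold only $\nu$-almost everywhere. Since the sums are finite, the countably many exceptional null sets (one per $x$ and $\tau$) can be unioned into a single null set. The statement should then be read as an equality of densities a.e., which is the natural meaning for density representatives.
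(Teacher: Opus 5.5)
Your proposal is correct and follows essentially the same route as the paper. Both arguments reduce SubTB to the full-trajectory identity $Z\,p_{M}(\gamma)\,p_{F}^{(\gamma)}(s_{o},\tau) = R(x)\,q_{M}(\gamma|x)\,p_{B}^{(\gamma)}(x,\tau)$, sum over $\tau \in s_{o} \rightsquigarrow x$ to get $Z\,p_{M}(\gamma)\,m_{F}^{(\gamma)}(x) = R(x)\,q_{M}(\gamma|x)$, isolate $q_{M}$, and then sum over $x$ using $\sum_{x} m_{F}^{(\gamma)}(x) = 1$; your remark that these are $\nu$-a.e.\ identities of densities, with finitely many null sets unioned, is a small rigor addition the paper omits.
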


The first equation in Corollary~\ref{col:cmarginalsa} shows that, if $q_{M}(\gamma|x)$ has a tractable, non-trivial form, then the 
solution for $p_{M}(\gamma)$ is a combinatorial mixture of all $q_{\gamma}(\gamma|x)$'s for $x \in \mathcal{X}$ weighted by $R(x)$, which is the distribution we are aiming to sample from in the first place. 
The second equation, on the other hand, suggests that if $p_{F}^{\smash[t]{(\gamma)}}(s, \cdot)$ is too flexible a function of $\gamma$ and $s$---able to perfectly mimic the target $R(x)$---then the optimal form for $q_{M}(\gamma|x)$ reduces to $p_{M}(\gamma)$. 
As we also discussed in \Cref{sec:discretesa}, 
reduction is not necessarily undesirable, as we will show later; however, we believe a $x$-independent $q_{M}$ should be an algorithmic design decision rather than an implicit consequence of training.
Should the goal be to have meaningfully different $p_{F}^{\smash[t]{(\gamma)}}$ for each $\gamma$ with the objective of, for instance, improving exploration, Corollary~\ref{col:cmarginalsa} suggests that sensible structural constraints should be imposed on the relationship between $p_{F}^{\smash[t]{(\gamma)}}$ and $\gamma$, a venue we pursued in \Cref{sec:discretesa}. 




\pp{Random Features (RFs) for GFlowNets.} 
In this scenario, we describe the potential benefits of setting $q_{M}(\gamma | x) = p_{M}(\gamma)$ by connecting the ensuing framework with the theory of RFs for GNNs \cite{abboud2021surprising, sato2021random}.
We first recall the expressivity limitations of message passing GNN-parameterized GFlowNets in the following example. \looseness=-1  

\begin{example}[Limitations of GNN-based GFlowNets] \label{example:graphsa} 
    We consider the following state graph (\Cref{fig:graphstategraphs}) with initial state $s_{o}$ being followed by $x_{1}$ and $x_{2}$, each of them a $6$-node graph.
    We notice that $x_{1}$ and $x_{2}$ correspond to $x_{o}$ with additional edges $(a, b)$ and $(a, c)$, respectively, and we write $x_{1} = s_{o} \cup \{(a, b)\}$ and $x_{2} = s_{o} \cup \{(a, c)\}$. 
    This is the simplest setting a GFlowNet can be applied to. \looseness=-1  
    \begin{figure}[!h]
        \centering
        \includegraphics[width=.7\linewidth, page=2]{figures/graphs_state_graphs.pdf}
        \caption{State graph for Example~\ref{example:graphsa}.}
        \label{fig:graphstategraphs}
    \end{figure}
    
    As in \citep{silva2025when}, we let $h \colon (\mathcal{G}, n) \mapsto \mathbb{R}^{d}$ be a 1-WL GNN \cite{kipf2016semi, xu2018powerful} yielding a representation for a node $n$ in a graph $\mathcal{G}$ and $\sigma \colon \mathbb{R}^{d} \rightarrow \mathbb{R}$ be a learnable real-valued function. 
    We parameterize the policy function according to
    \begin{equation} \label{eq:graphsa} 
        p_{F}(s, s \cup \{(n_{1}, n_{2})\}) \propto \exp\left\{ \sigma ( h(s, n_{1}) + h(s, n_{2})) \right\} 
    \end{equation}
    for nodes $n_{1}, n_{2}$ such that there is no edge between $n_{1}, n_{2}$ in $s$, and $0$ otherwise. 
    Under these conditions, unless the target $R$ satisfies $R(x_{1}) = R(x_{2})$, there is no instantiation of $p_{F}$ for which the marginal distribution of $p_{F}$ over $\{x_{1}, x_{2}\}$ matches $R$---as $h$ cannot distinguish node $b$ from node $c$. \looseness=-1   
\end{example}


When augmenting $h$ with 
random node-wise features $\gamma$, however, the policy function in \Cref{eq:graphsa} acquires the capability of approximating \emph{any target} 
with high-probability over the draw of $\gamma$. 
The intuition is that a RF-augmented 1-WL GNN will only be unable to separate nodes $b$ and $c$ in \Cref{fig:graphstategraphs} if their corresponding features match---an event of null probability since $\Gamma$ is continuous. 
\looseness=-1 

\begin{example}[RFs boost the expressivity of GNN-based GFlowNets] 
    We let $h \colon (\mathcal{G}, n, \gamma_{n}) \mapsto \mathbb{R}^{d}$ be a RF-augmented 1-WL GNN with $\gamma_{n} \sim \mathcal{N}(0, 1)$. 
    Then, if 
    \begin{equation} \label{eq:graphsaaa} 
        p_{F}^{\smash[t]{(\gamma)}}(s, s \cup \{(n_{1}, n_{2})\}) \propto \exp\left\{ \sigma ( h(s, n_{1}, \gamma_{n_{1}}) + h(s, n_{2}, \gamma_{n_{2}})) \right\} 
    \end{equation}
    as in \Cref{eq:graphsa}, it follows from \cite[Theorem 1]{abboud2021surprising} that for each $\epsilon > 0$ and $\delta \in (0, 1)$ there is a GNN $h$ such that $p_{F}$ approximates a given target with accuracy $\epsilon$ and probability $1 - \delta$ over the drawing of $\gamma$.  
\end{example}

More broadly, we show in the next 
that a GFlowNet parameterized as in \Cref{eq:graphsaaa} can arbitrarily approximate with high probability any given distribution defined on an edge-additive generative process for graph-structured objects. 
For this, we define $\mathrm{TV}(p, q)$ as the total variation distance between (possibly unnormalized) probability distributions $p$ and $q$. 
\looseness=-1 

\begin{proposition}[RFs GNN-based GFlowNets are Universal Approximators] \label{prop:universalsa} 
    Let $\{(p_{F}^{\smash[t]{(\gamma)}}, p_{B}^{\smash[t]{(\gamma)}})\}_{\gamma \in \Gamma}$ be a GFlowNet augmented with RFs $\gamma \in \Gamma$ and defined over a generative process over graphs such that each transition corresponds to either adding an edge or stopping.
    We define the following hypothesis classes. 
    \begin{enumerate} 
        \item $\mathcal{H}_{\mathrm{t}} = \{ h \colon (\mathcal{G}, n, \gamma_{n}) \mapsto \mathbb{R}^{d}, \, \sigma \colon \mathbb{R}^{d} \rightarrow \mathbb{R}, \, d \in \mathbb{N}\}$ containing 1-WL GNNs $h$ and a real-valued function $\sigma$. \looseness=-1 
        \item $\mathcal{H}_{\mathrm{s}} = \{\xi \colon (\mathcal{G}, \{\mathbf{x}_{n} \colon n \in \mathcal{G}\}) \mapsto y\}$ containing readout functions mapping a graph and its node embeddings to $y \in \mathbb{R}$. 
    \end{enumerate} 
    We parameterize $p_{F}^{\smash[t]{(\gamma)}}$ for edge addition, respectively, as in \Cref{eq:graphsaaa}, assigning also a probability proportional to $\exp\{ \xi( s, \{h(s, n) \colon n \in s\})\}$ for interrupting the generation at $s \in \mathcal{S}$, and fix $p_{B}^{\smash[t]{(\gamma)}}$. 
    Then, for any target distribution $R$, there is a $h \in \mathcal{H}_{\mathrm{t}}$ and a $\xi \in \mathcal{H}_{\mathrm{s}}$ such that the marginal distribution $m_{F}^{\smash[t]{(\gamma)}}$ satisfies $\mathrm{TV}(m_{F}^{\smash[t]{(\gamma)}}, R) \le \epsilon$ with probability larger than $1 - \delta$ over the draw of $\gamma$. \looseness=-1 
\end{proposition}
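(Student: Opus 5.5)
The plan is to reduce the statement to the universality theorem for random-feature-augmented GNNs \cite[Theorem 1]{abboud2021surprising}. That theorem states that, for graphs of bounded size $n$, any invariant function of a graph together with one of its nodes (or node pairs) can be approximated uniformly by an RNI 1-WL GNN with accuracy $\eta$ and probability $1-\delta$ over the draw of $\gamma$.

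\textbf{Step 1: fix an ideal policy.} Because the state graph is finite, acyclic and of bounded size, first fix a reference policy $p_{F}^{\star}$ whose marginal equals $R/Z$ exactly. For instance, take the policy induced by the flows $F(s)=\sum_{x}\sum_{\tau\in s\rightsquigarrow x}R(x)p_{B}(x,\tau)$ under the fixed $p_{B}$, which satisfies TB by construction. Choose $p_{F}^{\star}$ to be isomorphism-equivariant: take $p_{B}$ invariant, e.g.\ uniform over removable edges. Then its logits $\ell^{\star}(s,(n_1,n_2))=\log p_{F}^{\star}(s,s\cup\{(n_1,n_2)\})$ and $\ell^{\star}_{\top}(s)$ are invariant functions of $(s,n_1,n_2)$ and of $s$. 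Where a logit is $-\infty$ (masked or zero-probability transitions), replace it by a large negative constant and absorb the resulting error into $\epsilon$. This is possible because $R>0$ on $\mathcal{X}$.

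\textbf{Step 2: approximate the logits.} Apply Abboud et al.\ to obtain, with probability $1-\delta$, a GNN $h$ and readout $\sigma$ with
\begin{equation*}
\sup_{s,n_1,n_2}\bigl|\sigma(h(s,n_1,\gamma_{n_1})+h(s,n_2,\gamma_{n_2}))-\ell^{\star}(s,(n_1,n_2))\bigr|\le\eta ,
\end{equation*}
and a $\xi$ with $|\xi-\ell^{\star}_{\top}|\le\eta$. Since the state space is finite, a union bound over states keeps the failure probability at most $\delta$ after rescaling.

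The edge logit must be expressed through the sum $h(\cdot,n_1)+h(\cdot,n_2)$. Here I would use the fact that, once the random features individualize nodes, $h$ can injectively encode each node's identity together with the whole graph (the RNI universality construction). Injective encodings can be chosen so that sums of pairs remain injective on unordered pairs, for example through a Deep-Sets-style sum decomposition. An arbitrary continuous $\sigma$ then decodes the pair.

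\textbf{Step 3: propagate to the marginal.} Logit errors of size $\eta$ perturb each softmax transition probability by a factor in $[e^{-2\eta},e^{2\eta}]$. With maximum trajectory length $L$, each trajectory probability changes by a factor at most $e^{2L\eta}$. The marginal $m_{F}^{(\gamma)}(x)$ is a sum of trajectory probabilities, so it inherits the same factor. Hence
\begin{equation*}
\mathrm{TV}(m_{F}^{(\gamma)},R/Z)\le e^{2L\eta}-1 ,
\end{equation*}
and choosing $\eta$ small gives $\epsilon$.

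\textbf{Main obstacle.} I expect Step 2 to be the hardest. The cited theorem concerns graph- or node-level functions, whereas here a pair function must factor through a \emph{sum} of node embeddings. I would handle this by the injective pair-sum encoding above, approximating $\sigma$ uniformly on a compact set. This requires the random features to be bounded with high probability, which follows from Gaussian tail bounds.
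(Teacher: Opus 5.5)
Modulo two hypotheses discussed below, your argument is sound. It shares the paper's skeleton: fix a reference policy with marginal $R/Z$, approximate it state by state via the RNI universality theorem of Abboud et al., and propagate the per-transition error to $m_{F}^{(\gamma)}$. The propagation step, however, is genuinely different.

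The paper works additively on probabilities. It asks for $\mathrm{TV}(p(s,\cdot),p^{\star}(s,\cdot))\le\epsilon'$, applies the telescoping bound $|\prod_i a_i-\prod_i b_i|\le n\max_i|a_i-b_i|$ along each trajectory, sums over the at most $T$ trajectories reaching $x$, and union-bounds over $x\in\mathcal{X}$. This forces $\epsilon'=\epsilon/(2|\mathcal{X}|TL)$.

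You instead approximate log-probabilities uniformly. Every transition, trajectory and marginal is then perturbed by a factor in $[e^{-2L\eta},e^{2L\eta}]$, so $\mathrm{TV}\le e^{2L\eta}-1$, with no dependence on $T$ (factorial in the number of edges here) or on $|\mathcal{X}|$. The price is needing $p_{F}^{\star}$ bounded away from zero, which holds automatically: with $R>0$ and uniform $p_B$, $p_{F}^{\star}(s,s')=F(s')p_B(s',s)/F(s)>0$ on every edge of the state graph, so your large-negative-constant patch is never triggered.

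You are also more careful than the paper on two points its one-line appeal to Abboud et al.\ skips. First, that theorem concerns invariant graph-level functions, so the reference policy must be equivariant. Choosing $p_B$ yourself is legitimate because the conclusion only involves $m_{F}^{(\gamma)}$, whereas the paper's $p^{\star}$ for an arbitrary fixed $p_B$ need not be equivariant. Second, the edge logit must factor through $h(s,n_1)+h(s,n_2)$, which your pair-injective sum encoding addresses.

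Two hypotheses should be stated explicitly, though; the paper's proof needs them as well.

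\textbf{Invariance of $R$.} Equivariance of $p_{F}^{\star}$ requires $R$ itself to be isomorphism-invariant, since the flows $F$ inherit symmetry from $R$ and not only from $p_B$. This is unavoidable. Because $h$ is equivariant and the node features are i.i.d., $m_{F}^{(\gamma)}(\pi x)$ and $m_{F}^{(\gamma)}(x)$ have the same law for every node permutation $\pi$. Hence for $\delta<1/2$ the conclusion forces $|R(\pi x)-R(x)|\le 2\epsilon Z$.

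\textbf{Global readout in $h$.} Your claim that $h(s,n)$ can injectively encode the whole graph needs $h$ to contain a global readout (or virtual-node) term. A purely local message-passing GNN only sees the connected component of $n$. Early states of an edge-addition process started from the empty graph are disconnected. The edge logit $\sigma(h(s,n_1)+h(s,n_2))$ has no readout of its own, so it could not depend on the other components, whereas the target ratios $F(s\cup e)/F(s\cup e')$ generally do.
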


With this in mind, a thorough investigation of the benefits of RF GFlowNets is left for future research. 
An extensive analysis of how to usefully define $p_{M}$ is of particular importance.
\looseness=-1 


\pp{RFs \& Stability.} 
Besides improving the expressivity of GNN-based GFlowNets, we also demonstrate RFs impose an implicit Tikhonov regularization \citep{hastie2009elements} on the learning objective that mitigate the instability of $p_{B}$ observed in \Cref{fig:hypergridpbsa} in \Cref{sec:modelsa}. 
To understand this, we recall that, if $L(\theta)$ is a loss function associated with parameters $\theta$ and $\theta^{\star}$ is a local minimum of $L$, 
\begin{equation*}
    L(\theta) = L(\theta^{\star}) + \frac{1}{2} (\theta - \theta^{\star})^{\top} H(\theta^{\star}) (\theta - \theta^{\star}) + \mathcal{O}(\|\theta - \theta^{\star}\|^{3})
\end{equation*}
with $H(\theta^{\star}) = \nabla_{\theta}^{2} L(\theta^{\star})$ representing $L$'s Hessian at $\theta^{\star}$. 
From this, we observe that $H(\theta^{\star})$ characterizes the local geometry of $L$ around $\theta^{\star}$; when $H(\theta^{\star})$ is nearly singular, the loss surface around $\theta^{\star}$ is almost flat, a configuration that first-order gradient-based methods struggle to navigate. 
We next show that incorporating RFs into the policies of a GFlowNet has the effect of increasing the eigenvalues of the loss function's Hessian, reducing the geometry's flatness and---as we will empirically illustrate---reducing the functional variability of the learned $(p_{F}, p_{B})$ originating solely from parameter initialization. 
We start with the following lemma. 

\begin{lemma}[Local Approximation for RF-augmented Loss] \label{lemma:localsa} 
    We define $\gfn_{\Gamma} = \{p_{F}^{\smash[t]{(\gamma)}}, p_{B}^{\smash[t]{(\gamma)}}\}_{\gamma \in \Gamma}$ as a GFlowNet augmented with RFs $\gamma$. 
    We assume $\gamma$ is drawn from a zero-centered distribution $p_{M}$ with covariance matrix $\sigma^{2} I$ for some $\sigma > 0$. 
    Then, \looseness=-1 
    \begin{equation} \label{eq:localsa} 
        \mathbb{E}_{\gamma \sim p_{M}} \left[ \mathcal{L}_{\mathrm{TB}}(\tau, \gamma ; \gfn_{\Gamma}) \right] = \mathcal{L}_{\mathrm{TB}}(\tau, 0 ; \gfn_{\Gamma}) + \frac{1}{2} \sigma^{2} \cdot \mathrm{tr} \left( H_{\gamma}(\tau, 0) \right) + \mathcal{O}(\sigma^{4}),  
    \end{equation}
    with $\mathrm{tr}$ being the trace operator, $H_{\gamma}$ representing the Hessian of $\mathcal{L}_{\mathrm{TB}}(\tau, \gamma ; \gfn_{\Gamma})$ with respect to $\gamma$, and $\mathcal{L}_{\mathrm{TB}}$ as in Definition~\ref{def:lossesa}. 
\end{lemma}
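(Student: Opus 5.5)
The plan is a second-order Taylor expansion of the map $\gamma \mapsto \mathcal{L}_{\mathrm{TB}}(\tau, \gamma ; \gfn_{\Gamma})$ around $\gamma = 0$, followed by taking expectations under $p_{M}$. Since $q_{M}(\gamma|x) = p_{M}(\gamma)$ in the RF setting, the ratio $Z(\gamma)/R(x,\gamma) = Z/R(x)$ does not depend on $\gamma$. The loss is therefore
\begin{equation*}
\ell(\gamma) \coloneqq \Bigl( \log Z - \log R(x) + \log p_{F}^{(\gamma)}(s_{o}, \tau) - \log p_{B}^{(\gamma)}(x, \tau) \Bigr)^{2}.
\end{equation*}
The $\gamma$-dependence enters only through the neural policies, which take $\gamma$ as an extra input. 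I will assume they are smooth (e.g.\ $C^{4}$) in $\gamma$, with derivatives up to fourth order controlled so that the remainder is integrable against $p_{M}$. This holds for standard architectures with smooth activations and softmax outputs, whose log-probabilities are bounded on the relevant region.

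The expansion reads
\begin{equation*}
\ell(\gamma) = \ell(0) + \nabla_{\gamma}\ell(0)^{\top}\gamma + \tfrac{1}{2}\gamma^{\top} H_{\gamma}(\tau,0)\gamma + \tfrac{1}{6} D^{3}\ell(0)[\gamma,\gamma,\gamma] + R_{4}(\gamma),
\end{equation*}
with $|R_{4}(\gamma)| \le C\|\gamma\|^{4}$. Taking expectations term by term gives the following.
\begin{enumerate}
\item The linear term vanishes, since $\mathbb{E}[\gamma] = 0$.
\item The quadratic term becomes $\tfrac{1}{2}\mathrm{tr}(H_{\gamma}(\tau,0)\,\mathbb{E}[\gamma\gamma^{\top}]) = \tfrac{1}{2}\sigma^{2}\mathrm{tr}(H_{\gamma}(\tau,0))$, using $\mathbb{E}[\gamma\gamma^{\top}] = \sigma^{2} I$.
\item The cubic term involves third moments. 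These vanish if $p_{M}$ is symmetric, e.g.\ Gaussian, which is the paper's running choice. I would state symmetry, or vanishing third moments, as an implicit assumption needed for the $\mathcal{O}(\sigma^{4})$ rate. Without it, the cubic term only gives $\mathcal{O}(\sigma^{3})$.
\item The remainder is bounded by $C\,\mathbb{E}\|\gamma\|^{4}$. For a scale family $\gamma = \sigma \eta$ with $\eta$ of unit covariance and finite fourth moment, this is $C\sigma^{4}\mathbb{E}\|\eta\|^{4} = \mathcal{O}(\sigma^{4})$.
\end{enumerate}

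The main obstacle is justifying the remainder step uniformly, since $\gamma$ has unbounded support. The Taylor bound with a global constant $C$ requires the fourth derivatives of $\ell$ to be bounded on all of $\Gamma$. Alternatively, one needs a growth condition such as polynomial growth of $D^{4}\ell$ together with sufficiently many moments of $\eta$. I would handle this by splitting the expectation into $\{\|\gamma\| \le r\}$ and its complement. On the ball, the local Taylor bound applies. On the complement, I would use polynomial growth of $\ell$ and sub-Gaussian tails of $\eta$ to show the contribution is $o(\sigma^{4})$.

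It would also be worth noting the expression of $\mathrm{tr}(H_{\gamma})$ in terms of $\delta \coloneqq \log\frac{Z p_{F}}{R p_{B}}$:
\begin{equation*}
H_{\gamma} = 2\nabla_{\gamma}\delta\,\nabla_{\gamma}\delta^{\top} + 2\delta\nabla_{\gamma}^{2}\delta.
\end{equation*}
Near a TB solution $\delta \approx 0$, so the trace term is approximately $\sigma^{2}\|\nabla_{\gamma}\delta\|^{2} \ge 0$. This is the Tikhonov-type penalty motivating the lemma, and it sets up the subsequent spectral-shift result.
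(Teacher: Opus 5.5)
Your proposal is correct and follows the same route as the paper's proof. Both Taylor-expand $\gamma \mapsto \mathcal{L}_{\mathrm{TB}}(\tau, \gamma ; \gfn_{\Gamma})$ around $\gamma = 0$, remove the linear term via $\mathbb{E}[\gamma] = 0$, turn the quadratic term into $\tfrac{1}{2}\sigma^{2}\,\mathrm{tr}(H_{\gamma}(\tau, 0))$ via $\mathbb{E}[\gamma^{\top} A \gamma] = \sigma^{2}\,\mathrm{tr}(A)$, and drop the cubic term because odd moments vanish. Your explicit assumptions are exactly what the paper uses implicitly when it says ``any odd moment of $\gamma$ nulls out'' and quotes an $\mathcal{O}(\sigma^{3})$ remainder. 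These are symmetry of $p_{M}$ (the stated zero-mean, covariance-$\sigma^{2} I$ hypothesis alone does not kill third moments), a scale family $\gamma = \sigma\eta$ with finite fourth moments, and growth control on the remainder. They make the same argument rigorous rather than change it.
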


Lemma~\ref{lemma:localsa} follows from a Taylor approximation of the loss function around $\gamma = 0$ and the fact that the odd moments of $\gamma$ null out. 
Notably, the quantity $\mathcal{L}_{\mathrm{TB}}(\tau, 0 ; \gfn_{\Gamma})$ corresponds to the loss of an unaugmented GFlowNet. 
By averaging $\tau$ out of \Cref{eq:localsa},  we show that the trace term within it acts as a regularizer. 
\looseness=-1

\begin{proposition}[RFs as Regularizers] \label{prop:regularizationsa} 
    Let $\gfn_{\Gamma}$ be as in Lemma~\ref{lemma:localsa} and $\gfn_{o}$ be the GFlowNet corresponding to $\gamma = 0$. 
    We define 
    \begin{equation*}
        \mathcal{L}_{\mathrm{TB}}(\gfn_{\Gamma}) = \mathbb{E}_{\tau \sim \rho} \mathbb{E}_{\gamma \sim p_{M}} \left[ \mathcal{L}_{\mathrm{TB}}(\tau, \gamma ; \gfn_{\Gamma}) \right], \, \mathcal{L}_{\mathrm{TB}}(\gfn_{o}) = \mathbb{E}_{\tau \sim \rho} \left[ \mathcal{L}_{\mathrm{TB}}(\tau, 0 ; \gfn_{\Gamma}) \right],   
    \end{equation*}
    and $\mathcal{L}_{\mathrm{Reg}}(\gfn_{\Gamma}) = \mathbb{E}_{\tau \sim \rho} \left[ \mathrm{tr} ( H_{\gamma} (\tau, 0) ) \right]$ for an exploratory policy $\rho$. 
    Also, let $\theta$ be the parameters of 
    $\gfn_{\Gamma}$, and 
    let $H_{\mathrm{TB}} = \nabla_{\theta}^{2} \mathcal{L}_{\mathrm{TB}}(\gfn_{\Gamma})$, $H_{o} = \nabla_{\theta}^{2} \mathcal{L}_{\mathrm{TB}}(\gfn_{o})$, and $H_{\mathrm{Reg}} = \nabla_{\theta}^{2} \mathcal{L}_{\mathrm{Reg}}(\gfn_{\Gamma})$. 
    In this setting, 
    \begin{equation} \label{eq:regularizationsa} 
        H_{\mathrm{TB}} = H_{o} + \frac{\sigma^{2}}{2} \cdot H_{\mathrm{Reg}} + \mathcal{O}(\sigma^{4}). 
    \end{equation}
\end{proposition}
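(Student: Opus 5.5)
The plan is to take Lemma~\ref{lemma:localsa} pointwise in $\tau$, average over $\tau \sim \rho$, and then differentiate twice in $\theta$. Averaging \Cref{eq:localsa} gives
\begin{equation*}
    \mathcal{L}_{\mathrm{TB}}(\gfn_{\Gamma}) = \mathcal{L}_{\mathrm{TB}}(\gfn_{o}) + \frac{\sigma^{2}}{2} \mathcal{L}_{\mathrm{Reg}}(\gfn_{\Gamma}) + \mathbb{E}_{\tau \sim \rho}\left[ r(\tau, \theta, \sigma) \right].
\end{equation*}
Here $r(\tau,\theta,\sigma)$ is the $\mathcal{O}(\sigma^{4})$ remainder of the Taylor expansion in $\gamma$ around $0$. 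Since the state graph is finite, the expectation over $\tau$ is a finite weighted sum, so it commutes with any differentiation.

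Next, apply $\nabla_{\theta}^{2}$ to both sides. By linearity, the first two terms give $H_{o} + \frac{\sigma^{2}}{2} H_{\mathrm{Reg}}$. One caveat matters here: $\rho$ is treated as fixed with respect to $\theta$ when the Hessian is taken, i.e., a stop-gradient on the sampling distribution, as is standard in TB training. It should be stated explicitly as the convention; otherwise extra score-function terms appear.

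The real work is showing that $\nabla_{\theta}^{2} \mathbb{E}_{\tau}[r] = \mathcal{O}(\sigma^{4})$, since a bound on $r$ says nothing directly about its derivatives. I would write the remainder in integral (Lagrange) form:
\begin{equation*}
    r = \mathbb{E}_{\gamma}\left[ \int_{0}^{1} \frac{(1-t)^{3}}{3!} \, D^{4}_{\gamma} \mathcal{L}_{\mathrm{TB}}(\tau, t\gamma)[\gamma,\gamma,\gamma,\gamma] \, \mathrm{d}t \right].
\end{equation*}
The third-order term vanishes because the odd moments of $p_{M}$ are zero, as in the lemma. This step assumes $\mathcal{L}_{\mathrm{TB}}$ is jointly $C^{6}$ in $(\theta,\gamma)$, which holds for smooth networks composed with $\log$-softmax policies, and that $p_{M}$ has finite fourth moments that scale as $\sigma^{4}$ (for example, $\gamma = \sigma \eta$ with $\eta$ fixed).

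Then differentiate under the integral and the expectation, justified by dominated convergence on a compact $\theta$-neighbourhood. This gives
\begin{equation*}
    \nabla_{\theta}^{2} r = \mathbb{E}\left[\int_{0}^{1} \frac{(1-t)^{3}}{3!} \, \nabla^{2}_{\theta} D^{4}_{\gamma}\mathcal{L}_{\mathrm{TB}}(\tau, t\gamma)[\gamma^{\otimes 4}] \, \mathrm{d}t\right],
\end{equation*}
which is bounded by $C\, \mathbb{E}\|\gamma\|^{4} = \mathcal{O}(\sigma^{4})$ uniformly in $\tau$.

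The main obstacle is obtaining that uniform bound on the mixed derivatives when $\gamma$ has unbounded support, such as a Gaussian. I would first try a polynomial-growth assumption: $\|\nabla_{\theta}^{2} D^{4}_{\gamma}\mathcal{L}_{\mathrm{TB}}\| \le C(1+\|\gamma\|^{m})$. Finite moments of $p_{M}$ then still give an $\mathcal{O}(\sigma^{4})$ bound for $\sigma \le 1$. If that is unavailable, I would fall back to compactly supported $p_{M}$.

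Summing over the finitely many $\tau$ weighted by $\rho$ then yields \Cref{eq:regularizationsa}.
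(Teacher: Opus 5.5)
Your proof is correct and follows the paper's skeleton. Average Lemma~\ref{lemma:localsa} over $\tau \sim \rho$ (a finite sum), apply $\nabla_{\theta}^{2}$, and use linearity on the $\mathcal{L}_{\mathrm{TB}}(\gfn_{o})$ and $\frac{\sigma^{2}}{2}\mathcal{L}_{\mathrm{Reg}}(\gfn_{\Gamma})$ terms.

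The difference is the remainder. The paper handles it in one line: since differentiation is linear, $f \in \mathcal{O}(\sigma^{4})$ implies $\nabla_{\theta}^{2} f \in \mathcal{O}(\sigma^{4})$. That implication is false in general; for example, $f = \sigma^{4}\sin(\theta/\sigma)$ has $\partial_{\theta}^{2} f$ of order $\sigma^{2}$. This is exactly the issue you flag. Your integral-form Taylor remainder, differentiated under the integral with $\gamma = \sigma\eta$ and a polynomial-growth bound on $\nabla_{\theta}^{2} D_{\gamma}^{4}\mathcal{L}_{\mathrm{TB}}$, is what actually proves the statement. The price is making the hypotheses explicit: joint $C^{6}$ smoothness, $\sigma$-scaled moments, and vanishing third moments.

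On the third-moment condition, zero mean and covariance $\sigma^{2} I$ (all the lemma states) do not imply it. You and the paper both need $p_{M}$ to be symmetric, such as the Gaussian used in the experiments; otherwise an $\mathcal{O}(\sigma^{3})$ term survives.

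One caveat you add is unnecessary: the stop-gradient on $\rho$. All three losses use the same $\rho$, and all three Hessians are total derivatives. A twice-differentiable $\theta$-dependent $\rho$ therefore only adds product-rule terms to $H_{\mathrm{TB}} - H_{o} - \frac{\sigma^{2}}{2}H_{\mathrm{Reg}}$: namely $(\nabla_{\theta}^{2}\rho)\, r$ and cross terms between $\nabla_{\theta}\rho$ and $\nabla_{\theta} r$. These are $\mathcal{O}(\sigma^{4})$ by your same remainder bound applied to $r$ and $\nabla_{\theta} r$.
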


\begin{wrapfigure}[14]{r}{.5\linewidth}
    \includegraphics[width=\linewidth]{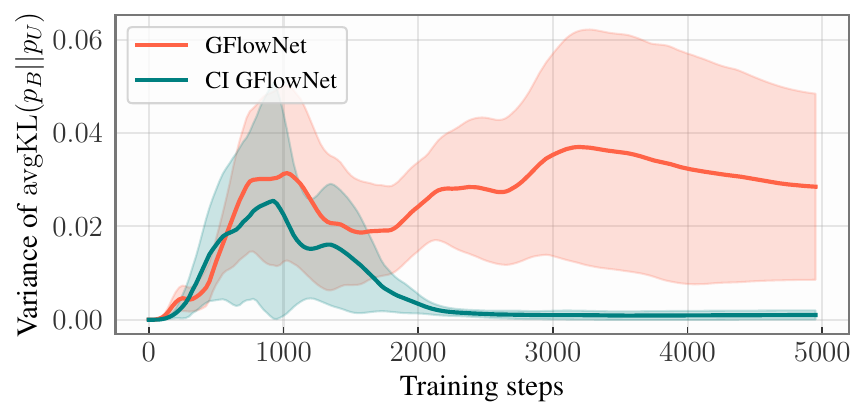}
    \caption{Random features act as a regularizer for GFlowNet training.} 
    \label{fig:hvariances} 
\end{wrapfigure}
Importantly, \Cref{eq:regularizationsa} suggests $\sigma$ should be set to a small positive number to ensure the reminder term in $\mathcal{O}(\sigma^{4})$ is negligible. 
To measure the effects of 
our method in practice, we calculate 
the variance of $\mathrm{avgKL}(p_{B} || p_{U}) = \frac{1}{|\mathcal{S}|} \sum_{s \in \mathcal{S}} \mathrm{KL}[p_{B}(s, \cdot) || p_{U}(s, \cdot)]$, 
with $p_{U}$ being an uniform backward policy. 
We train a continuously indexed (CI) GFlowNet with $\Gamma = \mathbb{R}^{8}$ and $\gamma \sim \mathcal{N}(0, 0.15^{2} \cdot I)$ and a conventional GFlowNet for the \textsc{Hypergrid} environment (see \Cref{sec:experimentsa}). 
As in \Cref{fig:hypergridpbsa}, we fix both the architecture and the parameter initialization of the neural networks underlying $p_{F}^{\smash[t]{(\gamma)}}$ and $p_{B}^{\smash[t]{(\gamma)}}$, and compute the per-step variance of the KL divergence by solely modifying the initial pseudorandom seed of the exploratory policy. 
We repeat this process across 5 independent runs. 
In conformity with Proposition~\ref{prop:regularizationsa}, we observe in \Cref{fig:hvariances} that RFs have a stabilizing effect on the learning problem, smoothing the loss surface and increasing the model's robustness to the randomness of sample generation---as emphasized by the reduced variance of $\mathrm{avgKL}(p_{B} || p_{U})$. 

\section{Discretely Indexed GFlowNets}

We complement \Cref{sec:discretesa} by introducing further examples of how to partition the state space, besides the abstract modulo-based technique described in Remark~\ref{remark:modulosa}.

\begin{example}[Length-based $\mathrm{TC}$] \label{example:lengthsa} 
    Let $d(x)$ be the trajectory-size, or \emph{description-length}, for generating a $x \in \mathcal{X}$. 
    Common GFlowNet applications such as sequence design \citep{laajil2025curriculumaugmentedgflownetsmrnasequence}, variable selection, drug discovery \citep{bengio2021}, and structure learning \citep{deleu2022bayesian} often have $d(x) \in [d_{\mathrm{min}}, d_{\mathrm{max}}]$ with both $d_{\mathrm{min}}$ and $d_{\mathrm{max}}$ known. 
    Under this condition, we can define $\mathrm{TC}(\mathcal{X}, K) = \{\mathcal{X}_{k}\}_{k=1}^{K}$ for $K \le D \coloneqq d_{\mathrm{max}} - d_{\mathrm{min}} + 1$ as 
    \begin{equation*}
        \mathcal{X}_{k} = \left\{ x \in \mathcal{X} \colon d_{\mathrm{min}}^{\smash[t]{(k)}} \coloneqq d_{\mathrm{min}} + \frac{k - 1}{K} \cdot D \le d(x) < d_{\mathrm{max}}^{\smash[t]{(k)}} \coloneqq d_{\mathrm{min}} + \frac{k}{K} \cdot D \right\}.   
    \end{equation*}
    The policy $p_{F}^{\smash[t]{(k)}}$ can be simply implemented by masking out terminating transitions leading to $x$ for which $d(x) \notin [d_{\mathrm{min}}^{\smash[t]{(k)}}, d_{\mathrm{max}}^{\smash[t]{(k)}})$. 
\end{example}

\begin{example}[Hamming weight-based $\mathrm{TC}$] \label{example:hammingsa} 
    Let $\mathcal{X} = \{1, 0\}^{d}$, which is often the cases in applications such as bit sequences \citep{malkin2022trajectory} and Ising model simulation \citep{liu2024generativemarginalizationmodels}. 
    We define $w(x) = \sum_{1 \le i \le d} x_{i}$ as the \emph{Hamming weight} of $x$ and $\mathrm{TC}(\mathcal{X}, K) = \{\mathcal{X}_{k}\}_{k=1}^{K}$ as \looseness=-1 
    \begin{equation*}
        \mathcal{X}_{k} = \{ x \in \mathcal{X} \colon w(x) \in [a_{k}, b_{k}) \} 
    \end{equation*}
    with $a_{1} = 0$, $b_{K} = d + 1$, and $\bigcup_{k=1}^{K} [a_{k}, b_{k}) = [0, d + 1)$. 
    We implement $p_{F}^{\smash[t]{(k)}}$ by pruning transitions resulting in states for which the condition $w(x) \in [a_{k}, b_{k})$ is unattainable.  \looseness=-1 
\end{example}

The tasks above can be easily seen to fall into the symmetric-junta category we outlined in Proposition~\ref{prop:npsaa}. 
Example~\ref{example:lengthsa} straightforwardly follows from picking $\psi(s)$ to be equal to the number of components defining $s$.
Example~\ref{example:hammingsa}, on the other hand, follows from letting $s$ be represented as a subset of $\{1, 0\} \times [d]$ and defining $\psi(s) = \# (s \cap r)$ with the reference set $r = \{(1, 1), \dots, (1, d)\}$ corresponding to a sequence containing only $1$s.
In either case, a tractable implementation for SC GFlowNets is admissible. 

\pp{Implementation of SC GFlowNets.} In addition, to further clarify the implementation of SC GFlowNets, we provide a high-level description of its training and inference in the pseudocode in Algorithms~\ref{alg:trainingsa} and~\ref{alg:inferencesa}, respectively, alongside detailed and efficient computer code for reproducibility.

\begin{minipage}{.5\textwidth} 
\begin{algorithm}[H]
    \caption{Training of SC GFlowNets} 
    \label{alg:trainingsa} 
    \begin{algorithmic}
        \Require $K$, $\mathbf{Z} = (Z_{k})_{k \in [K]}$ 
        \Require $\mathrm{trainingIterations}$, $\epsilon > 0, \, \mathrm{batchSize}$ 
        \Require SC GFlowNet $\gfn_{[K]}$ 
        \Require $\mathrm{getPolicies} \colon k, \gfn_{[K]} \mapsto \{p_{F}^{(k)}, p_{B}^{(k)}\}$
        \For{$\_$ in $\mathrm{range(trainingIterations)}$}
            \State $\mathrm{Samples} \gets \{\}$
            \ParFor{$\_$ in $\mathrm{range(batchSize)}$}
                \State $\mathbf{p}_Z \gets (1 - \epsilon) \cdot \frac{\mathbf{Z}}{\mathbf{1}_{K}^{T} \mathbf{Z}} + \epsilon \cdot \frac{\mathbf{1}_{K}}{K} $
                \State $k \sim \mathrm{Categorical}\left( \mathbf{p}_{Z} \right)$
                \State $p_{F}^{(k)}, p_{B}^{(k)} \leftarrow \mathrm{getPolicies}(k, \gfn_{[K]})$ 
                \State \textcolor{gray}{$\#$ Sample trajectory from $p_{F}^{(k)}$}  
                \State $\tau \sim \epsilon\text{-greedy of} \, p_{F}^{(k)}(s_{o}, \cdot)$ 
                \State $\mathrm{Samples} \gets \mathrm{Samples} \cup \{ (\tau, k) \}$
            \EndParFor 
            \State $\mathrm{loss} \gets \mathrm{lossFunction}(\mathrm{Samples}, \gfn_{[K]}, \mathbf{Z})$
            \State $\gfn_{[K]}, \mathbf{Z} \gets \mathrm{GradientStep}(\mathrm{loss}, \gfn_{[K]}, \mathbf{Z})$
        \EndFor
    \end{algorithmic}
\end{algorithm}
\end{minipage} 
\hspace{6pt} 
\begin{minipage}{.48\textwidth}
    \begin{algorithm}[H]
        \caption{Inference with SC GFlowNets}
        \label{alg:inferencesa} 
        \begin{algorithmic}
            \Require $K$, $\mathbf{Z} = (Z_{k})_{k \in [K]}$
            \Require SC GFlowNet $\gfn_{[K]}$
            \Require $\mathrm{getPolicies} \colon k, \gfn_{[K]} \mapsto \{p_{F}^{(k)}, p_{B}^{(k)}\}$ 
            \State $k \sim \mathrm{Categorical}\left(\frac{\mathbf{Z}}{\mathbf{1}_{K}^{T}\mathbf{Z}} \right)$
            \State $p_{F}^{(k)}, p_{B}^{(k)} \gets \mathrm{getPolicies}(k, \gfn_{[K]})$ 
            \State \textcolor{gray}{$\#$ Sample trajectory}
            \State $\tau \sim p_{F}^{(k)}(s_{o}, \cdot)$
            \State \textcolor{gray}{$\#$ Get terminal state from $\tau$} 
            \State $x \gets \mathrm{TerminalState}(\tau)$
            \State $\mathbf{return}$ $x$ 
        \end{algorithmic}
    \end{algorithm}
\end{minipage}

\section{Revisiting Subgraph Asynchronous Learning (SAL)} 

This section and the next highlight how our framework can be instantiated to accommodate prior algorithms for learning GFlowNets. 
We start with SAL \citep{silva2025generalization}. 
Similarly to SC GFlowNets, SAL defines a fixed-horizon covering (FHC)\footnote{\citet{silva2025generalization} called it a fixed-horizon \emph{partition} (FHP); however, the underlying sets need not be disjoint, as the word \emph{partition} suggests.} of $\mathcal{X}$, referred to as $\mathrm{FHC}(\mathcal{X}, K)$.

\begin{definition}[Fixed-Horizon Covering] \label{def:fhcsa} 
    We call $\mathrm{FHC}(\mathcal{X}, K) = \mathcal{X}_{o} \cup \{\mathcal{X}_{k}\}_{k=1}^{K}$ a \emph{fixed-horizon covering} (FHC) if the following conditions are satisfied. 
    \begin{enumerate}
        \item There are disjoint subsets $\{\mathcal{S}_{k}\}_{k=1}^{K}$ of $\mathcal{S}$ such that $\mathcal{X}_{k}$ contains all states in $\mathcal{X}$ reachable from some $s \in \mathcal{S}_{k}$. 
        \item Each $s \in \mathcal{S}_{k}$ is within a fixed-distance $d$ to $s_{o}$, and all states within distance $d$ of $s_{o}$ are contained in some $\mathcal{S}_{k}$ (\text{fixed-horizon}). 
        \item $\mathcal{X}_{o}$ is the subset of $\mathcal{X}$ whose elements' to $s_{o}$ to $x$ is smaller than $d$. 
    \end{enumerate}
    We call \emph{distance from $s$ to $s_{o}$} the length of the shortest-path from $s_{o}$ and $s$. 
\end{definition}

Drawing on Definition~\ref{def:fhcsa}, each partition enforces a so-called Amortized Trajectory Balance (ATB), defined as 
\begin{equation} \label{eq:atbsa} 
    F(s, k) p_{F}^{\smash[t]{(k)}}(s, \tau) = R(x) p_{B}(x, \tau) \text{ for all } k \in [K], \, s \in \mathcal{S}_{k}, \, x \in \mathcal{X}_{k}, \text{ and } \tau \in s \rightsquigarrow x,
\end{equation}
and then $Z \cdot p_{F}^{\smash[t]{(o)}}(s, \tau') = F(s, k) p_{B}(s, \tau')$ and $Z \cdot p_{F}^{\smash[t]{(o)}}(s, \tau') = R(x) p_{B}(s, \tau')$ for $k \in [K]$, $s \in \mathcal{S}_{k}$, $x \in \mathcal{X}_{o}$, and $\tau \in s_{o} \rightsquigarrow s$ and $\tau' \in s_{o} \rightsquigarrow x$.
In other words, each model first enforces a balance condition within its own partition according to a shared (uniform) backward policy $p_{B}$, and their learned flow functions $F(\cdot, k)$ are later used as the target distribution of the centralized model's ($p_{F}^{\smash[t]{(o)}}$) own learning objective.  

We next show that a solution to SAL's amortized TB can be interpreted as a DI GFlowNet satisfying its own MTB in Definition~\ref{def:mtbsa}.     
We do this by introducing a Doob $h$-transform (see \Cref{sec:doobsa}) that maps $p_{F}^{\smash[t]{(o)}}$ and $p_{F}^{\smash[t]{(k)}}$ into a policy function that strictly goes through $\mathcal{S}_{k}$ when sampling from $\mathcal{X}_{k}$.
With this in mind, we let
\begin{equation*}
    T_{k}^{\smash[t]{(f)}} = \min \{ t \colon S_{t}^{\smash[t]{(f)}} \in \mathcal{S}_{k} \} \text{ and } T_{k}^{\smash[t]{(b)}} = \min \{t \colon S_{t}^{\smash[t]{(b)}} \in \mathcal{S}_{k}\} 
\end{equation*}
be the hitting times on $\mathcal{S}_{k}$ of the Markov chains introduced in Proposition~\ref{prop:markovsa}, and 
\begin{equation*}
    h_{k}(s) = \mathbb{P}_{\kappa_{F}} \left[ T_{k}^{\smash[t]{(f)}} = \min_{k' \in [K]} T_{k'}^{\smash[t]{(f)}} < \infty \big| s \right] \text{ and } g_{k}(s) = \mathbb{P}_{\kappa_{B}}\left[T_{k}^{\smash[t]{(b)}} = \min_{k' \in [K]} T_{k'}^{\smash[t]{(b)}} < \infty \big| s \right] 
\end{equation*}
be the corresponding probabilities that a Markov chain starting at $s \in \bar{\mathcal{S}}$ and following $\kappa_{F}$ (resp. $\kappa_{B}$) arrives at $\mathcal{S}_{k}$ prior to arriving at any other $\mathcal{S}_{k'}$ with $k' \neq k$. 
We first show that both $h_{k}$ and $g_{k}$ are harmonic functions. 

\begin{lemma} \label{lemma:harmonicsaa} 
    Both $h_{k}$ and $g_{k}$ are harmonic functions on $\bar{\mathcal{S}} \setminus \mathcal{S}_{k}$ for $k \in [K]$. 
\end{lemma}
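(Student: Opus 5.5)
The plan is a first-step (one-step) analysis for each chain, carried out separately for $h_{k}$ under $\kappa_{F}$ and for $g_{k}$ under $\kappa_{B}$. Both chains are absorbing by Proposition~\ref{prop:markovsa}: the forward chain is absorbed in $\mathcal{X}$ and the backward chain in $\{s_{o}\}$. Because the state graph is a finite DAG, every trajectory of either chain reaches its absorbing set in finitely many steps, so all hitting times $T_{k'}^{(f)}$ and $T_{k'}^{(b)}$ are well defined (possibly $+\infty$ when a set is never visited). The functions $h_{k}$ and $g_{k}$ are therefore absorption-type probabilities for the events $E_{k}^{(f)} = \{T_{k}^{(f)} = \min_{k'} T_{k'}^{(f)} < \infty\}$ and the analogous $E_{k}^{(b)}$. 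Since the $\mathcal{S}_{k'}$ are disjoint, for a fixed path at most one index attains the minimum, so these events are well posed.

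\textbf{Key step (Markov property).} Fix $s \in \bar{\mathcal{S}} \setminus \mathcal{S}_{k}$ and condition on the first transition $S_{2}^{(f)} = s'$ with $s' \sim \kappa_{F}(s, \cdot)$. Two cases need checking.
\begin{enumerate}[leftmargin=18pt]
    \item If $s \notin \bigcup_{k'} \mathcal{S}_{k'}$, no hitting time equals the current time. Every $T_{k'}^{(f)}$ for the chain started at $s$ is then one plus the corresponding hitting time of the chain started at $s'$, with $+\infty$ preserved. Hence the indicator of $E_{k}^{(f)}$ is a function of the shifted path, and the Markov property gives $h_{k}(s) = \sum_{s'} \kappa_{F}(s, s') h_{k}(s')$.
    \item If $s \in \mathcal{S}_{k'}$ for some $k' \neq k$, then $T_{k'}^{(f)}$ equals the initial time, so $h_{k}(s) = 0$. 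For harmonicity to hold here, one must show that every successor $s'$ also has $h_{k}(s') = 0$.
\end{enumerate}

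\textbf{Main obstacle.} Case 2 is where I expect the difficulty, and I will resolve it with the fixed-horizon structure of Definition~\ref{def:fhcsa}. Every $s \in \mathcal{S}_{k'}$ lies at distance exactly $d$ from $s_{o}$, and the forward chain strictly increases the distance from $s_{o}$ along each edge, or at least never returns to the depth-$d$ layer once it has passed it. Consequently, a forward chain started below depth $d$ cannot visit any $\mathcal{S}_{k}$ again, so $h_{k}(s') = 0$ for all successors and the identity $0 = \sum_{s'} \kappa_{F}(s, s') \cdot 0$ holds. If the DAG admits paths along which the shortest-path distance does not strictly increase, I would fall back on interpreting "harmonic on $\bar{\mathcal{S}} \setminus \mathcal{S}_{k}$" as harmonic off $\bigcup_{k'} \mathcal{S}_{k'}$, with boundary values $0$ on the other strata. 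This is the form the subsequent Doob-transform argument actually needs.

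\textbf{Absorbing states and the backward chain.} At absorbing states, harmonicity is trivial: for example $\kappa_{F}(x, \cdot) = \delta_{x}$ gives $h_{k}(x) = h_{k}(x)$. For $g_{k}$ I repeat the argument verbatim with $\kappa_{B}$, using that the backward chain moves toward $s_{o}$ and is absorbed there with probability one by Proposition~\ref{prop:markovsa}. Once the backward chain has passed the depth-$d$ layer toward $s_{o}$, it cannot revisit any $\mathcal{S}_{k'}$, so the analogue of Case 2 again yields zero on both sides.
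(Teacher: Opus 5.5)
Your proposal is correct and is essentially the paper's argument. The paper uses the fixed-horizon property (a trajectory can meet at most one $\mathcal{S}_{k'}$) to rewrite $h_{k}(s)$ as the plain hitting probability $\mathbb{P}_{\kappa_{F}}[T_{k}^{(f)} < \infty \mid s]$, applies the first-step identity for hitting probabilities to cover both of your cases at once, and says the same works for $g_{k}$ under $\kappa_{B}$; the caveat you flag is real but shared by the paper, whose proof likewise tacitly needs a graded state graph (depth strictly increasing along edges) so that no path joins two distinct strata, and under that assumption your main line suffices without the weakened fallback.
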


A core property required by the next Proposition~\ref{prop:salsa} is that both $h_{k}(s_{o})$ and $g_{k}(x)$ for $x \in \mathcal{X}$ can be interpreted as probability distributions over $[K]$. 
As we emphasize next, this intuitively follows from the fact that $\{\mathcal{S}_{k}\}_{k=1}^{K}$ forms a boundary between $s_{o}$ and the set $\mathcal{X}$ of terminal states.

\begin{lemma} \label{lemma:harmonicsaaa} 
    The functions $h_{k}$ and $g_{k}$ in Lemma~\ref{lemma:harmonicsaa} satisfy 
    \begin{equation*}
        \sum_{0 \le k \le K} h_{k}(s_{o}) = 1 \text{ and } \sum_{0 \le k \le K} g_{k}(x) = 1 \text{ for } x \in \mathcal{X} 
    \end{equation*}
    with $h_{o}(s_{o}) = \mathbb{P}_{\kappa_{F}}\left[T_{k}^{(f)} = \infty \forall k \in [K] \bigg| s_{o}\right]$ and $g_{o}(x) = \mathbb{P}_{\kappa_{B}} \left[ T_{k}^{(b)} = \infty \, \forall k \in [K]  \bigg| x \right]$ referring to the case when 
    the generative process 
    terminates on $\mathcal{X}_{o}$ (see Definition~\ref{def:fhcsa}). \looseness=-1 
\end{lemma}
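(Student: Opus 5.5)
The plan is to show that the events indexed by $k \in \{0, 1, \dots, K\}$ form a partition of the sample space, up to a null set, for each chain. For the forward chain started at $s_{o}$, I would define the events $E_{k} = \{T_{k}^{(f)} = \min_{k'} T_{k'}^{(f)} < \infty\}$ for $k \in [K]$ and $E_{0} = \{T_{k}^{(f)} = \infty \ \forall k \in [K]\}$. By construction $h_{k}(s_{o}) = \mathbb{P}_{\kappa_{F}}[E_{k} \mid s_{o}]$, so it suffices to prove that the $E_{k}$ are pairwise disjoint and cover the sample space almost surely. The same argument applies verbatim to the backward chain started at $x \in \mathcal{X}$ with $\kappa_{B}$ and $g_{k}$.

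\textbf{Exhaustiveness.} I would note that either some $T_{k}^{(f)}$ is finite or none is. If none is, we are in $E_{0}$. Otherwise the minimum $\min_{k'} T_{k'}^{(f)}$ is attained at a finite time by some index $k$, and then $E_{k}$ holds. This gives $\bigcup_{k=0}^{K} E_{k} = \Omega$. Proposition~\ref{prop:markovsa} is not strictly needed for this step. It does guarantee that the forward chain is absorbed in $\mathcal{X}$ and the backward chain at $s_{o}$ almost surely, which rules out pathological non-terminating paths and matches the interpretation that $E_{0}$ means ``terminating on $\mathcal{X}_{o}$''.

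\textbf{Disjointness.} $E_{0}$ is clearly disjoint from each $E_{k}$ with $k \ge 1$. For $k \ne k'$ both in $[K]$, both events would require $T_{k}^{(f)} = T_{k'}^{(f)} = t < \infty$, i.e.\ $S_{t}^{(f)} \in \mathcal{S}_{k} \cap \mathcal{S}_{k'}$. This is impossible because Definition~\ref{def:fhcsa} takes the $\mathcal{S}_{k}$ to be disjoint. Hence the $h_{k}(s_{o})$ sum to one. For the backward statement, I would use the same decomposition under $\mathbb{P}_{\kappa_{B}}[\cdot \mid x]$.

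\textbf{Main obstacle.} I expect the only delicate point to be the identification of $E_{0}$ with ``terminating on $\mathcal{X}_{o}$''. This rests on the fixed-horizon property: every path from $s_{o}$ to $x \notin \mathcal{X}_{o}$ passes through distance exactly $d$, hence through some $\mathcal{S}_{k}$. The identity itself, however, only needs the partition argument above.
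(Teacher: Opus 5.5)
Your proof is correct, but it uses a different decomposition from the paper's.

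The paper partitions the sample space into the events $\{T_{k}^{(f)} < \infty\}$ for $k \in [K]$, together with $\{T_{k}^{(f)} = \infty \ \forall k\}$. It gets disjointness from the fixed-horizon structure of Definition~\ref{def:fhcsa}: because every $\mathcal{S}_{k}$ sits at distance $d$ from $s_{o}$, it asserts that $T_{k}^{(f)} < \infty$ forces $T_{k'}^{(f)} = \infty$ for $k' \neq k$. It argues the same way for $\kappa_{B}$.

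You instead partition by the index of the \emph{first} stratum hit, which is exactly the event in the min-based definition of $h_{k}$ and $g_{k}$. Disjointness then needs only $\mathcal{S}_{k} \cap \mathcal{S}_{k'} = \emptyset$. Exhaustiveness is immediate, since a minimum over finitely many values in $\mathbb{N} \cup \{\infty\}$ is attained. Your route is more general and more robust: the normalization holds for any family of pairwise disjoint sets. It also sidesteps an implicit assumption in the paper, namely that no trajectory passes through two distinct states at distance $d$ from $s_{o}$. That assumption holds automatically when all paths from $s_{o}$ to a given state have equal length. It can fail in an arbitrary DAG, where the shortest-path distance may drop along an edge: with edges $s_{o} \to a \to c$ and $s_{o} \to c$, both $a$ and $c$ lie at distance $1$.

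What the paper's route buys, when that assumption holds, is the stronger fact $h_{k}(s) = \mathbb{P}_{\kappa_{F}}[T_{k}^{(f)} < \infty \mid s]$, i.e., that the minimum in the definition is superfluous. The paper relies on this in Lemma~\ref{lemma:harmonicsaa} and Proposition~\ref{prop:salsa}.

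Your side remark is also sound: every path from $s_{o}$ to $x \notin \mathcal{X}_{o}$ meets distance exactly $d$, because $\mathrm{dist}(s_{o}, s') \le \mathrm{dist}(s_{o}, s) + 1$ along each edge $s \to s'$. This gives one inclusion for the interpretation of the $k = 0$ term, which, as you note, the identity itself does not need.
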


Building on Lemmas~\ref{lemma:harmonicsaa} and~\ref{lemma:harmonicsaaa}, we demonstrate in Proposition~\ref{prop:salsa} that SAL can be viewed as an instance of a DI GFlowNet by showing that a solution to \Cref{eq:atbsa} can be uniquely represented as a solution to the mixture learning problem in Definition~\ref{def:mtbsa} with a specific choice of $Z_{k}$ and $q(k | x)$. 
This suggests yet another form of designing a DI GFlowNet. 
It should nonetheless be noticed that, contrarily to SC GFlowNets, SAL is an inherently distributed algorithm---as neither $h_{k}$ nor $g_{k}$ in Lemma~\ref{lemma:harmonicsaa} can be tractably computed in general. 

\begin{proposition}[SAL as a DI GFlowNet] \label{prop:salsa} 
    Let $\{p_{F}^{\smash[t]{(k)}}\}_{k=0}^{K}$ and $p_{B}$ be the solutions to SAL's balance equations having $d$ as the distance from $s_{o}$ to $\{\mathcal{S}_{k}\}_{k=1}^{K}$. 
    We define 
    \begin{equation*}
        \tilde{p}_{F}^{\smash[t]{(k)}}(s, s') = 
        \begin{cases}
            p_{F}^{\smash[t]{(o)}}(s, s') \cdot \frac{h_{k}(s')}{h_{k}(s)} \text{ if } \mathrm{dist}(s_{o}, s') \le d, \\ 
            p_{F}^{\smash[t]{(k)}}(s, s') \text{ otherwise}, 
        \end{cases}
    \end{equation*}
    with $\mathrm{dist}$ representing the shortest-path distance in the underlying state graph, and 
    \begin{equation*}
        \tilde{p}_{B}^{\smash[t]{(k)}}(s, s') =
        \begin{cases}
            p_{B}(s, s') \text{ if } \mathrm{dist}(s_{o}, s') \le d, \\ 
            p_{B}(s, s') \cdot \frac{g_{k}(s')}{g_{k}(s)} \text{ otherwise},  
        \end{cases}
    \end{equation*}
    for $k \ge 1$ and $\tilde{p}_{B}^{\smash[t]{(o)}}(s, s') = p_{B}(s, s')$ for $k = 0$. 
    Then, for $k \in \{0, \dots, K\}$ and each trajectory $\tau$ from $s_{o}$ to $x \in \mathcal{X}_{k}$ going through $\mathcal{S}_{k}$, 
    \begin{equation*}
        Z_{k} \tilde{p}_{F}^{\smash[t]{(k)}}(s_{o}, \tau) = R(x) q(k|x) \tilde{p}_{B}^{\smash[t]{(k)}}(x, \tau) 
    \end{equation*}
    with $Z = \sum_{x \in \mathcal{X}} R(x)$ and $Z_{k} = Z \cdot h_{k}(s_{o})$ and $q(k|x) = g_{k}(x)$. 
\end{proposition}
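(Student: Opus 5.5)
The plan is to prove the identity by splitting each trajectory $\tau \in s_{o} \rightsquigarrow x$, for $x \in \mathcal{X}_{k}$ and $k \ge 1$, at the first state $s^{\star} \in \mathcal{S}_{k}$ it visits. Since $s^{\star}$ is at distance exactly $d$ from $s_{o}$, we write $\tau = \tau_{1} \cup \tau_{2}$ with $\tau_{1} \in s_{o} \rightsquigarrow s^{\star}$ and $\tau_{2} \in s^{\star} \rightsquigarrow x$. On $\tau_{1}$ every transition ends at a state within distance $d$, so the first branch of $\tilde{p}_{F}^{(k)}$ and of $\tilde{p}_{B}^{(k)}$ applies. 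On $\tau_{2}$ every transition ends beyond distance $d$, so the second branches apply. Some care is needed at the boundary edge entering $s^{\star}$: in the backward direction that edge goes from $s^{\star}$ to a state at distance $d-1$, hence falls in the first case. I would fix this convention explicitly at the start.

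\textbf{Forward side.} Along $\tau_{1}$ the Doob factors telescope:
\[
\tilde{p}_{F}^{(k)}(s_{o}, \tau_{1}) = p_{F}^{(o)}(s_{o}, \tau_{1}) \frac{h_{k}(s^{\star})}{h_{k}(s_{o})}.
\]
Moreover $h_{k}(s^{\star}) = 1$, since the chain started in $\mathcal{S}_{k}$ hits $\mathcal{S}_{k}$ at time zero, and the $\mathcal{S}_{k'}$ are disjoint. Multiplying by $Z_{k} = Z h_{k}(s_{o})$ therefore gives
\[
Z_{k}\, \tilde{p}_{F}^{(k)}(s_{o}, \tau_{1}) = Z\, p_{F}^{(o)}(s_{o}, \tau_{1}).
\]
SAL's balance equation for the centralized model then turns this into $F(s^{\star}, k)\, p_{B}(s^{\star}, \tau_{1})$. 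On $\tau_{2}$, ATB (\Cref{eq:atbsa}) gives
\[
F(s^{\star}, k)\, p_{F}^{(k)}(s^{\star}, \tau_{2}) = R(x)\, p_{B}(x, \tau_{2}).
\]
Combining the two pieces yields
\[
Z_{k}\, \tilde{p}_{F}^{(k)}(s_{o}, \tau) = R(x)\, p_{B}(x, \tau_{2})\, p_{B}(s^{\star}, \tau_{1}).
\]

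\textbf{Backward side.} Along $\tau_{2}$, read backward from $x$ to $s^{\star}$, the factors $g_{k}(s')/g_{k}(s)$ telescope to $g_{k}(s^{\star})/g_{k}(x)$. Since $g_{k}(s^{\star}) = 1$, this leaves
\[
\tilde{p}_{B}^{(k)}(x, \tau) = \frac{p_{B}(x, \tau_{2})\, p_{B}(s^{\star}, \tau_{1})}{g_{k}(x)}.
\]
Multiplying by $R(x)\, q(k \mid x) = R(x)\, g_{k}(x)$ matches the forward expression exactly. The case $k = 0$ is the direct SAL condition $Z\, p_{F}^{(o)}(s_{o}, \tau') = R(x)\, p_{B}(x, \tau')$ for $x \in \mathcal{X}_{o}$. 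To cast it in the stated form, I would use Lemma~\ref{lemma:harmonicsaaa} to identify $Z_{o} = Z h_{o}(s_{o})$ and $q(0 \mid x) = g_{o}(x)$. The caveat is that on trajectories avoiding every $\mathcal{S}_{k}$ one needs $h_{o} \equiv 1$ along the path and $g_{o}(x) = 1$ for $x \in \mathcal{X}_{o}$. This follows because $\mathcal{X}_{o}$ is reached in fewer than $d$ steps, so no path to it crosses the distance-$d$ layer.

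\textbf{Well-definedness, and the expected main obstacle.} Lemma~\ref{lemma:harmonicsaa} makes $\tilde{p}_{F}^{(k)}$ and $\tilde{p}_{B}^{(k)}$ genuine Markov kernels, because harmonicity off $\mathcal{S}_{k}$ makes the transformed rows sum to one. Lemma~\ref{lemma:harmonicsaaa} shows that $\{Z_{k}\}$ sums to $Z$ and that $q(\cdot \mid x)$ is a distribution, so the result is a legitimate MTB instance. The hard part is the bookkeeping at the distance-$d$ boundary. One must check that $h_{k}(s) > 0$ at every state where it is divided by, which restricts attention to trajectories through $\mathcal{S}_{k}$. One must also confirm that beyond layer $d$, $g_{k}$ is harmonic for $p_{B}$ so that the ratios are consistent. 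Finally, the case split must place the edge into $s^{\star}$ correctly in both the forward and backward definitions. I would settle these by working edge by edge with the distance labels before doing the telescoping.
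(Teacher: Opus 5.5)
For $k \ge 1$ your argument is the paper's. You split $\tau$ at $s^{\star} \in \mathcal{S}_{k}$, telescope the $h_{k}$-factors on $\tau_{1}$ using $h_{k}(s^{\star}) = 1$, chain the centralized balance equation with ATB, and telescope the $g_{k}$-factors on $\tau_{2}$ using $g_{k}(s^{\star}) = 1$. Your case $k = 0$, however, rests on a false claim: it is not true that $h_{o} \equiv 1$ along a path to $x \in \mathcal{X}_{o}$. The value $h_{o}(s)$ is the probability that the forward chain started at $s$ avoids every $\mathcal{S}_{k}$, so it depends on all continuations out of $s$. Your justification (no path \emph{into} $x$ crosses layer $d$) only yields $h_{o}(x) = g_{o}(x) = 1$. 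In fact $h_{o}(s_{o}) = Z_{o}/Z < 1$ as soon as some $\mathcal{S}_{k}$ is reachable, so the SAL equation $Z\, p_{F}^{(o)}(s_{o}, \tau') = R(x)\, p_{B}(x, \tau')$ is not literally the $k = 0$ instance of the claim. What you need, and what the paper uses, is that $\tilde{p}_{F}^{(o)}$ is the $h_{o}$-transform of $p_{F}^{(o)}$. Then $\tilde{p}_{F}^{(o)}(s_{o}, \tau') = p_{F}^{(o)}(s_{o}, \tau')\, h_{o}(x)/h_{o}(s_{o}) = p_{F}^{(o)}(s_{o}, \tau')/h_{o}(s_{o})$. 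The factor $1/h_{o}(s_{o})$ cancels against $Z_{o} = Z\, h_{o}(s_{o})$, and $g_{o}(x) = 1$ finishes the case.

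Your boundary bookkeeping also checks the wrong edge. The edge of $\tau_{1}$ entering $s^{\star}$ is handled correctly, but the edge $(s^{\star}, v_{1})$ of $\tau_{2}$ is not. In $\tilde{p}_{B}^{(k)}(v_{1}, s^{\star})$ the branch is decided by the parent $s^{\star}$, which sits at distance exactly $d$. Under the stated threshold this step therefore falls in the first case and carries no $g_{k}$-factor. So the second branch does not apply on all of $\tau_{2}$, and the backward product telescopes to $g_{k}(v_{1})/g_{k}(x)$ rather than $g_{k}(s^{\star})/g_{k}(x)$. Since $v_{1}$ may also have parents in some $\mathcal{S}_{k'}$ with $k' \neq k$, generally $g_{k}(v_{1}) < 1$, and the identity would be off by exactly this factor. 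The paper's proof makes the same tacit move, so the defect lies in the threshold of the case split rather than in your strategy. The fix is to attach the $g_{k}$-factor to every backward step out of a state beyond distance $d$, i.e., to use $\mathrm{dist}(s_{o}, s') < d$ for the first branch of $\tilde{p}_{B}^{(k)}$. The row of $\tilde{p}_{B}^{(k)}$ at $v_{1}$ still sums to one by harmonicity of $g_{k}$ at $v_{1}$ (Lemma~\ref{lemma:harmonicsaa}). Your telescoping then ends at $g_{k}(s^{\star}) = 1$ as you intended.
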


In simpler terms, Proposition~\ref{prop:salsa} highlights SAL can be cast into our mixture learning framework by reframing $h_{k}(s_{o})$ and $g_{k}(x)$ as our \emph{prior} and \emph{posterior} mixing distributions, an interpretation supported by the normalization property established in Lemma~\ref{lemma:harmonicsaaa}.
Notably, the central statements 
on the correctness of SAL in \citep{silva2025generalization} can be immediately derived from Proposition~\ref{prop:salsa}, highlighting the generality of DI GFlowNets as a theoretical tool for devising GFlowNet training algorithms with proven correctness.
The next section emphasizes this point further. 
\looseness=-1 

\section{Revisiting Boosted GFlowNets} \label{sec:boostingsa} 

This section is separated into two parts. 
The first one shows how Boosted GFlowNets can be understood through the lens of our abstract framework. 
The second one evaluates Boosted GFlowNets against our proposed SC GFlowNets, highlighting the computational amenability of the latter when compared to the former.  

\subsection{Boosted GFlowNets satisfy the Mixed Trajectory Balance}

As with SC GFlowNets and SAL, Boosted GFlowNets learn distinct samplers focusing on nearly-disjoint regions of the state space \citep{locatello18boosting, dallantonia2025boostedgflownetsimprovingexploration}.
However, instead of relying on amortization (as in SC GFlowNets) or parallelization (SAL) over fixed structural primitives, Boosted GFlowNets adaptively learn a \emph{residual target distribution} in a sequential fashion.   
Their connection to DI GFlowNets, which will be developed in this section, will reveal our framework's applicability in the context of \emph{countably infinite} mixtures.
\looseness=-1 

To see this, we briefly recall the algorithmic design of 
Boosted GFlowNets.
Let $\{\gfn_{k}\}_{k \ge 1} \coloneqq \{(Z_{k}, p_{F}^{\smash[t]{(k)}}, p_{B}^{\smash[t]{(k)}})\}_{k \ge 1}$ be 
GFlowNets and, for $\tau \in s_{o} \rightsquigarrow x$, define \looseness=-1 
\begin{equation*}
    R^{\smash[t]{(0)}}(x, \tau) = 0 \text{ and } R^{\smash[t]{(k)}}(x, \tau) = R^{\smash[t]{(k - 1)}}(x, \tau) + \frac{Z^{\smash[t]{(k)}} p_{F}^{\smash[t]{(k)}}(s_{o}, \tau)}{p_{B}^{\smash[b]{(k)}}(x, \tau)}  
\end{equation*}
for $k \ge 2$. 
Intuitively, $R^{\smash[t]{(k)}}(x, \tau)$ represents an estimate of the accumulated probability mass on $x$ by the first $k$ models. 
An implicit assumption of Boosted GFlowNets, which is compatible with the observed empirical behavior of GFlowNets \citep{shen23gflownets}, is that each GFlowNet underestimates the total probability mass in any given region.
\looseness=-1 

\begin{assumption}[Underallocation of probability mass] \label{assumption:ua}
    After trained, we assume assume the first $k$-th GFlowNets satisfy $R^{\smash[t]{(k)}}(x, \tau) \le R(x)$ for all 
    $x$ 
    and $\tau \in s_{o} \rightsquigarrow x$. \looseness=-1 
\end{assumption}

Under Assumption~\ref{assumption:ua}, each GFlowNet is trained by minimizing the \emph{boosted trajectory balance (BTB)} loss function, which replaces $R(x)$ by the \emph{residual target} $R(x) - R^{\smash[t]{(K)}}(x, \tau)$ with $\tau$ drawn from the corresponding backward policy $p_{B}^{\smash[b]{(K)}}(x, \cdot)$. 

\begin{definition}[Boosted Trajectory Balance Condition \& Loss] \label{def:btbsa} 
    Let $\{\gfn_{k}\}_{k=1}^{K}$ be a sequence of GFlowNets with $\{\gfn_{k}\}_{k=1}^{K - 1}$ satisfying Assumption~\ref{assumption:ua}. 
    By defining 
    \begin{equation*}
        R^{\smash[t]{(k)}}(x) = \mathbb{E}_{\tau \sim p_{B}^{\smash[t]{(k)}}(x, \cdot)} \left[ R^{\smash[t]{(k)}}(x, \tau) \right],  
    \end{equation*}
    we call $K$-th \emph{Boosted Trajectory Balance} (BTB) 
    condition
    \begin{equation*}
        Z_{K} \cdot p_{F}^{\smash[t]{(K)}}(s_{o}, \tau) = (R(x) - R^{\smash[t]{(K - 1)}}(x)) \cdot p_{B}^{\smash[t]{(K)}}(x, \tau), 
    \end{equation*}
    which is enforced through the loss function 
    \begin{equation*}
        \mathcal{L}_{\mathrm{BTB}} \left( \tau \big| \{\gfn_{k}\}_{k=1}^{K} \right) = \left( \log \frac{Z_{K} \cdot p_{F}^{\smash[t]{(K)}}(s_{o}, \tau)}{( R(x) - R^{\smash[t]{(K - 1)}}(x)) \cdot p_{B}^{\smash[b]{(K)}}(x, \tau)} \right)^{2}, 
    \end{equation*}
    which we refer to as \emph{BTB loss}.
\end{definition}

The intuition for the BTB loss is that, when the the first $K - 1$ GFlowNets \emph{collapse} onto a subset of the state space, the $K$-th GFlowNet will---through the influence of the residual target---mostly ignore such region. 
In practice, \citep{dallantonia2025boostedgflownetsimprovingexploration} proposes several techniques for stabilizing training when the quantity $R(x) - R^{\smash[t]{(K)}}(x)$ becomes overly small, to which the interested reader is referred to for further details; however, none of them are relevant to our discussion.
It should be clear from Definition~\ref{def:btbsa} and Assumption~\ref{assumption:ua} that Boosted GFlowNets only sample from the target distribution when the trajectory-wise residual target vanishes and that the BTB loss can only be non-biasedly evaluated if we can compute $R^{(K - 1)}(x)$ exactly. 
The latter is satisfied when $R^{(K - 1)}(x, \tau) = R(x)$ for every $\tau$ in the support of $p_{F}$, and $0$ otherwise; equivalently, when the on-policy expectation of the BTB loss under $p_{F}^{(K - 1)}$ vanishes.
This was the main condition for \citep[Theorem 1]{dallantonia2025boostedgflownetsimprovingexploration}. 
We explicitly state these behaviors in our next assumption. \looseness=-1 

\begin{assumption}[Convergence of residuals \& Collapse] \label{assumption:crsa} 
    For all $x \in \mathcal{X}$ and $\tau \in s_{o} \rightsquigarrow x$, the series $\{R^{\smash[t]{(k)}}\}_{k \ge 1}$ satisfies $\lim_{k \rightarrow \infty} R^{\smash[t]{(k)}}(x, \tau) = R(x)$. 
    Also, there is a subset $\mathcal{X}_{k}$ of $\mathcal{X}$ such that $R^{(k)}(x, \tau) = R^{(k)}(x)$ for each $\tau \in s_{o} \rightsquigarrow x$ and $x \in \mathcal{X}_{k}$, and $R^{(k)}(x, \tau) = 0$ otherwise.  
\end{assumption}

We now show that Assumptions~\ref{assumption:ua} and~\ref{assumption:crsa} allow us to characterize Boosted GFlowNets as a solution to the MTB equations in Definition~\ref{def:mtbsa} based on a countably infinite mixing distribution.   
Importantly, as with classical topic modelling \citep{blei2003latent, Hoffman2010Online} based on Dirichlet processes, we typically choose a 
$K$ and truncate the prior mixing distribution to be supported on $[K]$. 
\citet{dallantonia2025boostedgflownetsimprovingexploration} sets $K$ to $2$ or $3$ in their experiments. 

\begin{proposition}[Boosted GFlowNets as DI GFlowNets] \label{prop:bgsa} 
    We define $\{\gfn_{k}\}_{k \ge 1} \coloneqq \{(Z_{k}, p_{F}^{\smash[t]{(k)}}, p_{B}^{\smash[t]{(k)}})\}_{k \ge 1}$ as GFlowNets satisfying 
    Assumptions~\ref{assumption:ua} and~\ref{assumption:crsa}.
    By letting 
    \begin{equation*}
        R_{k}(x) = R^{\smash[t]{(k)}}(x) - R^{\smash[t]{(k - 1)}}(x) \text{ and } q(k|x) = \frac{R_{k}(x)}{R(x)},  
    \end{equation*}
    then the following properties are satisfied. 
    \begin{enumerate}[leftmargin=12pt]
        \item $q(k|x)$ is a 
        probability distribution, i.e., $\sum_{k \ge 1} q(k|x) = 1$ for each $x \in \mathcal{X}$, \looseness=-1 
        \item The sequence $\{\gfn_{k}\}_{k \ge 1}$ satisfies a MTB condition
        \begin{equation*}
            Z_{k} \cdot p_{F}^{\smash[t]{(k)}}(s_{o}, \tau) = R(x) q(k|x) p_{B}^{\smash[t]{(k)}}(x, \tau) 
        \end{equation*}
        for $k \ge 1$, with $\sum_{k \ge 1} Z_{k} = Z$. 
    \end{enumerate}
\end{proposition}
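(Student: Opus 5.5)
The plan is to telescope the residuals for the first claim and to combine the BTB condition with the collapse part of Assumption~\ref{assumption:crsa} for the second. Throughout I set $R^{(0)}(x)=0$, consistent with $R^{(0)}(x,\tau)=0$.

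\textbf{Step 1: normalization of $q$.} By telescoping, $\sum_{k=1}^{n} R_k(x) = R^{(n)}(x)$, and $R^{(n)}(x)$ is the $p_B^{(n)}$-average of $R^{(n)}(x,\tau)$. Assumption~\ref{assumption:crsa} gives the pointwise limit $R^{(n)}(x,\tau)\to R(x)$, and Assumption~\ref{assumption:ua} bounds $R^{(n)}(x,\tau)$ by $R(x)$. Since $s_o\rightsquigarrow x$ is finite, the average converges. The backward policy changes with $n$, but the averages lie in $[\min_\tau R^{(n)}(x,\tau),\max_\tau R^{(n)}(x,\tau)]$, and both endpoints tend to $R(x)$. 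Hence $R^{(n)}(x)\to R(x)$ and $\sum_{k} q(k|x)=1$.

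Nonnegativity of $q(k|x)$ needs $R_k(x)\ge 0$. Averaging the recursion for $R^{(k)}(x,\tau)$ against $p_B^{(k)}(x,\cdot)$ gives
\[
R^{(k)}(x)=\mathbb{E}_{p_B^{(k)}}\bigl[R^{(k-1)}(x,\tau)\bigr]+Z_k\, m_F^{(k)}(x).
\]
The subtle point is replacing $\mathbb{E}_{p_B^{(k)}}[R^{(k-1)}(x,\tau)]$ by $R^{(k-1)}(x)$. The collapse part of Assumption~\ref{assumption:crsa} makes $R^{(k-1)}(x,\tau)$ independent of $\tau$ (on $\mathcal{X}_{k-1}$, and zero elsewhere), so the expectation under any backward policy equals $R^{(k-1)}(x)$. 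This yields $R_k(x)=Z_k m_F^{(k)}(x)\ge 0$.

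\textbf{Step 2: the MTB identity.} The $k$-th BTB condition reads $Z_k p_F^{(k)}(s_o,\tau) = (R(x)-R^{(k-1)}(x))\,p_B^{(k)}(x,\tau)$. The goal is to show $R(x)-R^{(k-1)}(x)$ can be replaced by $R_k(x)=R(x)q(k|x)$ on the support of the trajectories.

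Take the $k$-th model to be collapsed as well. Then $Z_k p_F^{(k)}(s_o,\tau)/p_B^{(k)}(x,\tau)$ is constant in $\tau$; call it $c_k(x)$. Substituting into the recursion gives $R^{(k)}(x,\tau)=R^{(k-1)}(x)+c_k(x)$, so $R_k(x)=c_k(x)$. Multiplying by $p_B^{(k)}$ gives exactly $Z_k p_F^{(k)}(s_o,\tau)=R_k(x)p_B^{(k)}(x,\tau)=R(x)q(k|x)p_B^{(k)}(x,\tau)$. This is the MTB condition; note it uses only the collapse, not the BTB condition per se.

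Summing over $x$ and $\tau$ gives $Z_k=\sum_x R_k(x)$. Summing over $k$, using Step 1 and Tonelli for the nonnegative terms, gives $\sum_k Z_k=\sum_x R(x)=Z$. This matches Corollary~\ref{col:pmassa} and Proposition~\ref{prop:relsa}.

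\textbf{Main obstacle.} The hard part is justifying the $\tau$-independence used in both steps. Assumption~\ref{assumption:crsa} states it for $R^{(k)}(x,\tau)$. Deducing that $Z_kp_F^{(k)}/p_B^{(k)}$ is itself $\tau$-independent requires taking the difference $R^{(k)}(x,\tau)-R^{(k-1)}(x,\tau)$, with both terms collapsed. On $\mathcal{X}_k\setminus\mathcal{X}_{k-1}$, or vice versa, this needs care with the ``zero otherwise'' clause. I would handle that case split explicitly and, if necessary, read the assumption as the sets $\mathcal{X}_k$ being nested.
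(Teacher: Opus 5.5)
Your proposal is correct and follows essentially the same route as the paper: telescoping plus Assumption~\ref{assumption:crsa} for the normalization of $q(\cdot|x)$, and the collapse clause to replace $R^{(k)}(x,\tau)$ and $R^{(k-1)}(x,\tau)$ by their $\tau$-independent values in the boosting recursion, which turns it directly into the MTB identity. Your derivation of $q(k|x)\ge 0$ from $R_k(x)=Z_k\, m_F^{(k)}(x)$, and your min/max argument for the limit under a changing $p_B^{(n)}$, are more careful than the paper's one-line appeals to the assumptions. The case split you flag as the main obstacle is vacuous: for $x\notin\mathcal{X}_k$ the clause gives $R^{(k)}(x,\tau)=0=R^{(k)}(x)$, so $R^{(k)}(x,\tau)=R^{(k)}(x)$ holds for every $x$ and $\tau$, and no nestedness of the sets $\mathcal{X}_k$ is needed.
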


Once again, the correctness of Boosted GFlowNets can be directly inferred from Corollary~\ref{col:pmassa} and Proposition~\ref{prop:bgsa}.
However, contrarily to SC GFlowNets, which support both amortization and distributed training, Boosted GFlowNets can only be trained via sequential learning.
Also, in contrast to both SC GFlowNets and SAL, Boosted GFlowNets cannot be trained with an near-embarrassingly parallel algorithm, primarily due to the unbounded support of the mixing distribution and the dependence of each model on its predecessors' residuals. 

In addition, we notice SC GFlowNets are the most foundational instantiation of DI GFlowNets when compared to both SAL and Boosted GFlowNets: stratum-conditioning could be used for training each booster in Boosted GFlowNets and each sampler in SAL, but neither of these algorithms could be used to learn SC GFlowNets without sacrificing its monolithicness, which we deem to be one of its fundamental properties---a single model \emph{can} be used for learning the entire mixture.  

\subsection{Experiments}

We now present several experiments to support our claim that (i) Boosted GFlowNets introduce a substantial computational overhead over the base model, which is not observed for SC GFlowNets, and (ii) SC GFlowNets often converge faster than Boosted GFlowNets.   

\begin{figure*}
    \begin{subfigure}{.49\textwidth}
        \includegraphics[width=\linewidth]{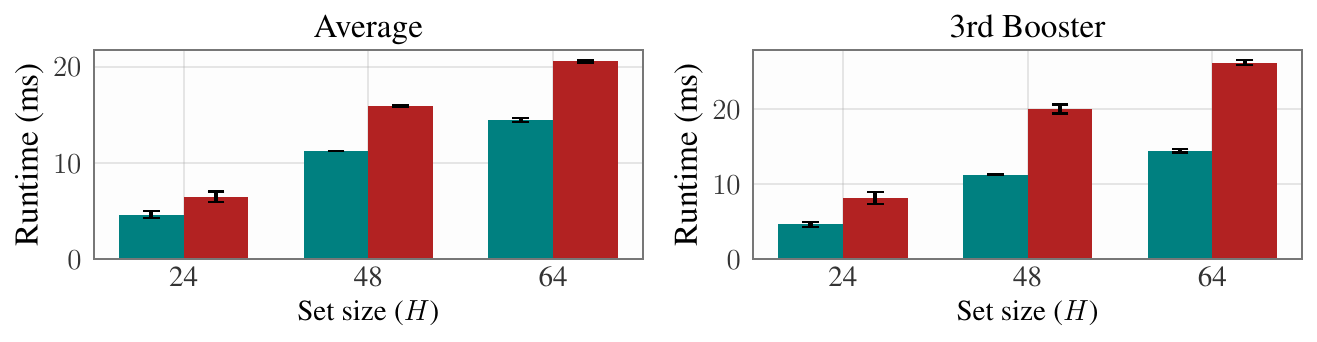}
        \caption{Runtimes per training step.}
    \end{subfigure}
    \begin{subfigure}{.49\textwidth}
        \includegraphics[width=\linewidth]{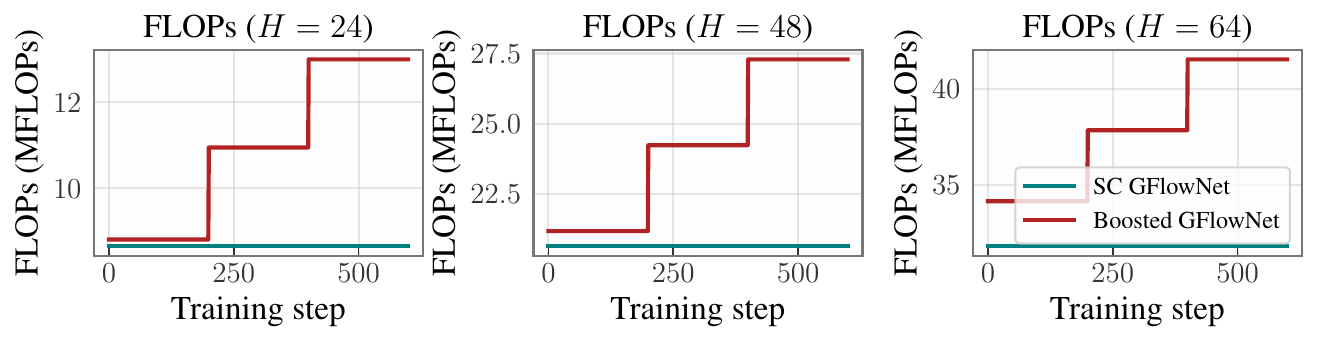}
        \caption{FLOPs per training step.}
    \end{subfigure}
    \caption{
        SC GFlowNets run both faster (left) and more compute-efficiently (right) than Boosted GFlowNets for the \textsc{Set Generation} domain.
        Throughout this section, runtimes represent per-step averages over $3$ independent runs, each with $600$ iterations.
    }
    \label{fig:compssetsa} 
\end{figure*}

\begin{figure*}
    \begin{subfigure}{.49\textwidth}
        \includegraphics[width=\linewidth]{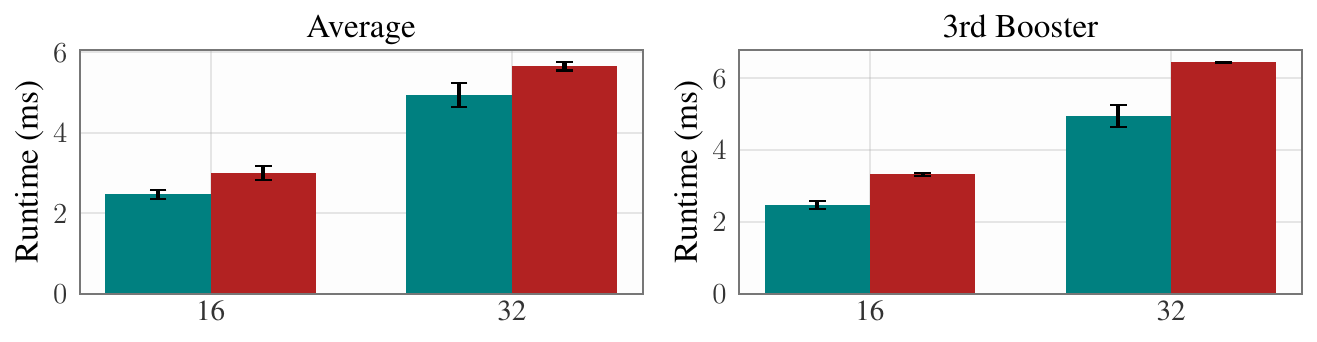}
        \caption{Runtimes per training step.} 
    \end{subfigure}
    \begin{subfigure}{.49\textwidth}
        \includegraphics[width=\linewidth]{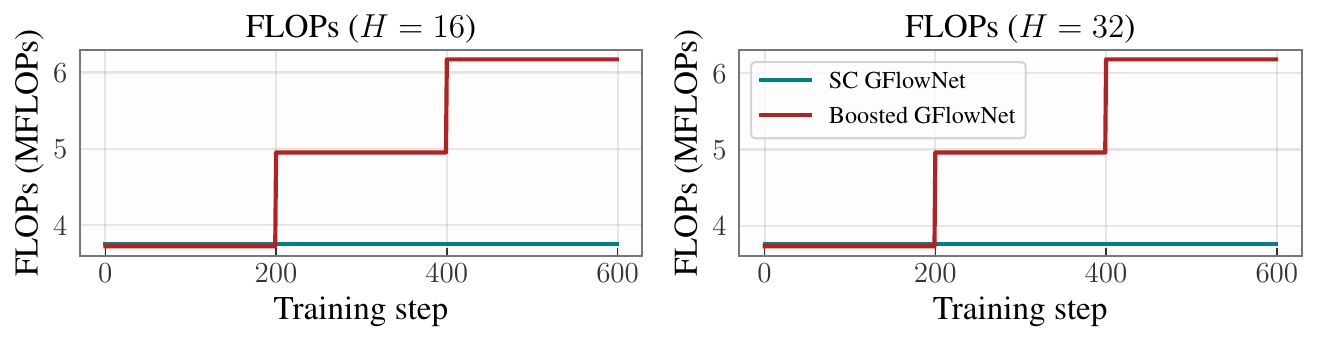}
        \caption{FLOPs per training step.} 
    \end{subfigure}
    \caption{SC GFlowNets are more compute-efficient than Boosted GFlowNets in the \textsc{Hypergrid} domain, requiring substantially fewer FLOPs per gradient step during training. \looseness=-1 }
    \label{fig:compshypergridsa} 
\end{figure*}

\begin{figure*}
    \begin{subfigure}{.49\textwidth}
        \includegraphics[width=\linewidth]{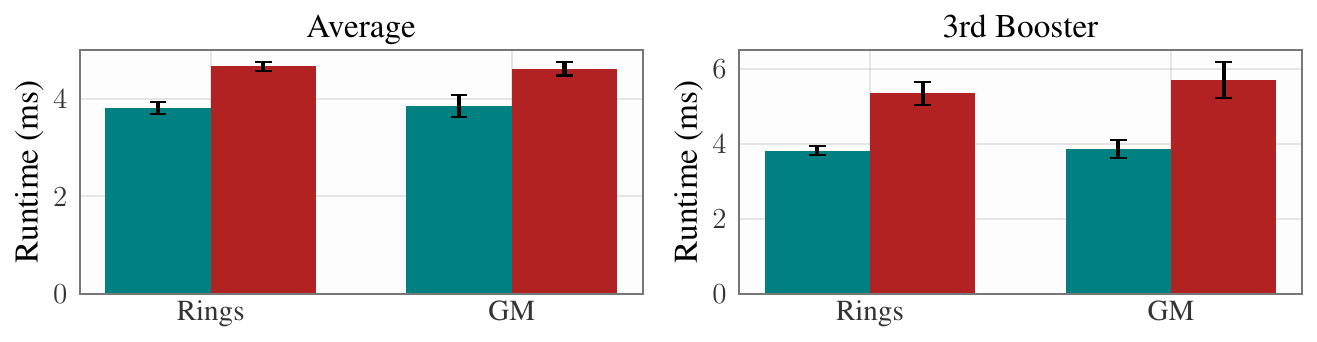}
        \caption{Runtimes per training step.} 
    \end{subfigure}
    \begin{subfigure}{.49\textwidth}
        \includegraphics[width=\linewidth]{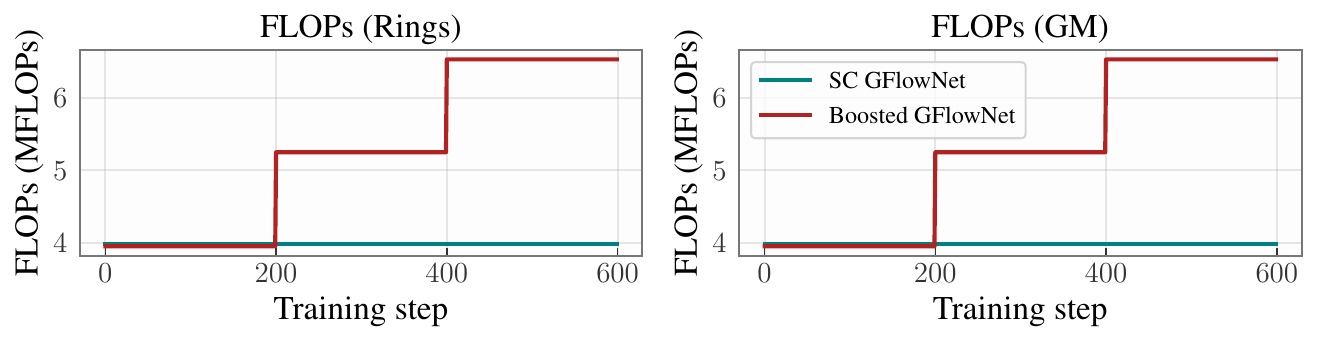}
        \caption{FLOPs per training step.} 
    \end{subfigure}
    \caption{SC GFlowNets is far more efficient than Boosted GFlowNets in the \textsc{Lazy Random Walk} domain (\texttt{Rings} corresponds to the \textsc{Rings} target; \texttt{GM}, to \textsc{Gaussian Mixture}).}
    \label{fig:compsdiffusionsa} 
\end{figure*}

\pp{Computational profile.} 
We start by showcasing the cost of Boosted GFlowNets is substantially larger than of SC GFlowNets in terms of both runtime and FLOPs per training step for the \textsc{Set Generation}, \textsc{Hypergrid}, and \textsc{Lazy Random Walk} domains. 
To evaluate the latter, we use JAX's \texttt{cost\_analysis} method for compiled functions \citep{jax2018github}.  
Results are displayed in \Cref{fig:compssetsa,fig:compshypergridsa,fig:compsdiffusionsa}. 
Notably, SC GFlowNets are up to roughly $1.6\times$ faster than Boosted GFlowNets per step (for \textsc{Set Generation}), and execute up to $50\%$ fewer FLOPs per training step. 
The reasons for this are clear: as discussed before, our method learns an \emph{amortized policy} over the mixture's components, while boosting requires training independent neural networks in a sequential fashion, having a monotonically increasing cost 
in the number of \emph{boosters} (components).
Importantly, during our experiments, we proceeded as suggested by \citet{dallantonia2025boostedgflownetsimprovingexploration} and allocated a prescribed number of iterations equally among a predefined number of boosters for training; automatic sampler elicitation remains an unresolved issue for Boosted GFlowNets.
\looseness=-1

\begin{figure*}
    \includegraphics[width=\linewidth]{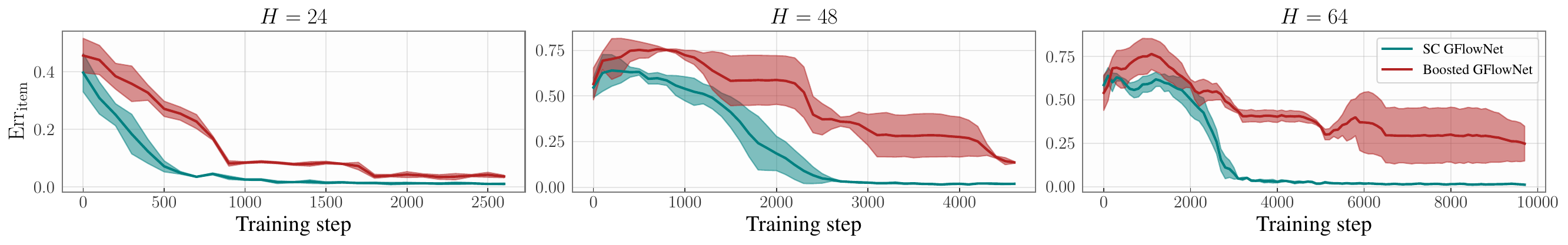}
    \caption{SC GFlowNets converge substantially faster than Boosted GFlowNets in the \textsc{Set Generation} domain, particularly for larger state spaces. See also \Cref{fig:setsconvergences} for $\mathrm{Err}_{\mathrm{item}}$'s definition.}
    \label{fig:setsboostingsa} 
\end{figure*}

\begin{figure*}
    \begin{subfigure}{.49\textwidth}
        \includegraphics[width=\linewidth]{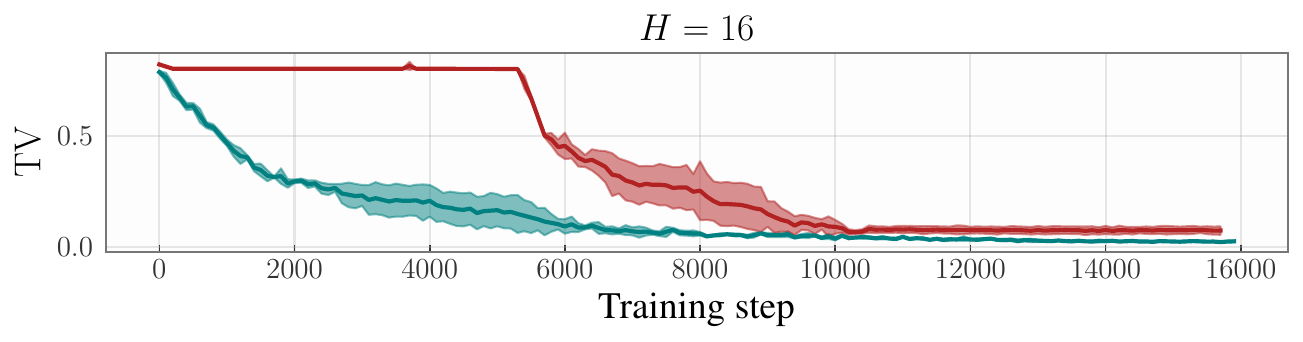}
    \end{subfigure}
    \begin{subfigure}{.49\textwidth}
        \includegraphics[width=\linewidth]{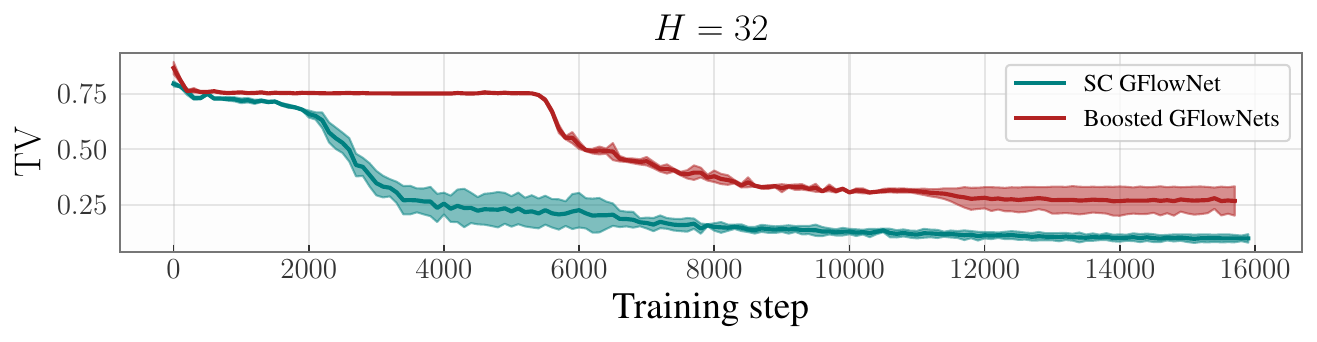}
    \end{subfigure}
    \caption{SC GFlowNets exhibit faster learning convergence when compared to Boosted GFlowNets, while also being far more compute-efficient (\Cref{fig:compshypergridsa}).}
    \label{fig:hypergridsboostingsa} 
\end{figure*}

\begin{figure*}
    \begin{subfigure}{.49\textwidth}
        \includegraphics[width=\linewidth]{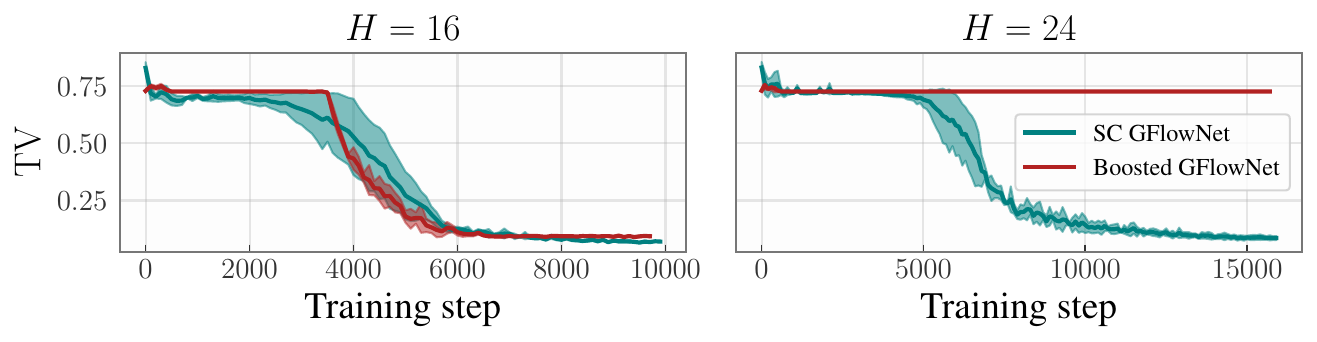}
        \caption{\textsc{Rings}.} 
    \end{subfigure}
    \begin{subfigure}{.49\textwidth}
        \includegraphics[width=\linewidth]{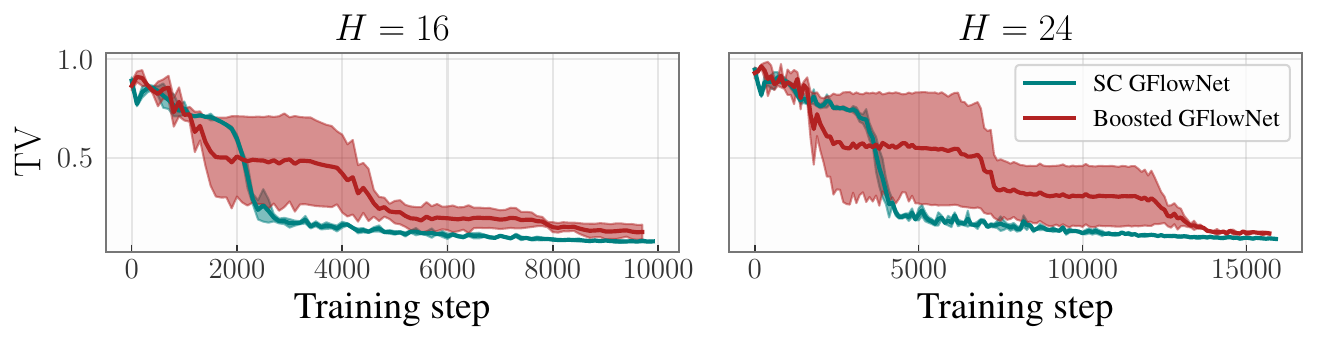}
        \caption{\textsc{Gaussian Mixture}.} 
    \end{subfigure}
    \caption{On top of being more computationally efficient (\Cref{fig:compsdiffusionsa}), SC GFlowNets also learn a more accurate distributional approximation to the target relatively to Boosted GFlowNets in both the \textsc{Rings} and \textsc{Gaussian Mixture} variants of the \textsc{Lazy Random Walk} domain.} 
    \label{fig:diffusionsboostingsa} 
\end{figure*}

\pp{Learning convergence.} 
In addition to being more compute-efficient, SC GFlowNets often lead to faster learning convergence in the considered tasks.
This is highlighted in \Cref{fig:setsboostingsa,fig:hypergridsboostingsa,fig:diffusionsboostingsa}. 
The intuition for this is that, while both Boosted and SC GFlowNets enhance state space exploration by design, hence improving the rate of discovery of high-probability regions, our method does so by stably partitioning the state space, ensuring consistent, non-random exploration of specific subsets of it.
In contrast, boosting is built on a more delicate approach that is is reliant upon accurate training of several independent neural networks, and can be biased---unless Assumptions~\ref{assumption:ua} and~\ref{assumption:crsa} are satisfied.
\looseness=-1 

\section{Experimental Details \& Further Experiments} \label{sec:experimentsaa}



This section extends empirical analysis in \Cref{sec:experimentsa} with additional domains, target distributions, and TC's constructions.
In addition, we provide further experimental details for reproducing our results. 
We also attach the computer code, written using JAX \citep{jax2018github}, we used for training and evaluating each of the described models on the considered tasks. 

\subsection{More Experiments} 

\begin{figure*}
    \begin{subfigure}{.49\textwidth}
        \includegraphics[width=\linewidth]{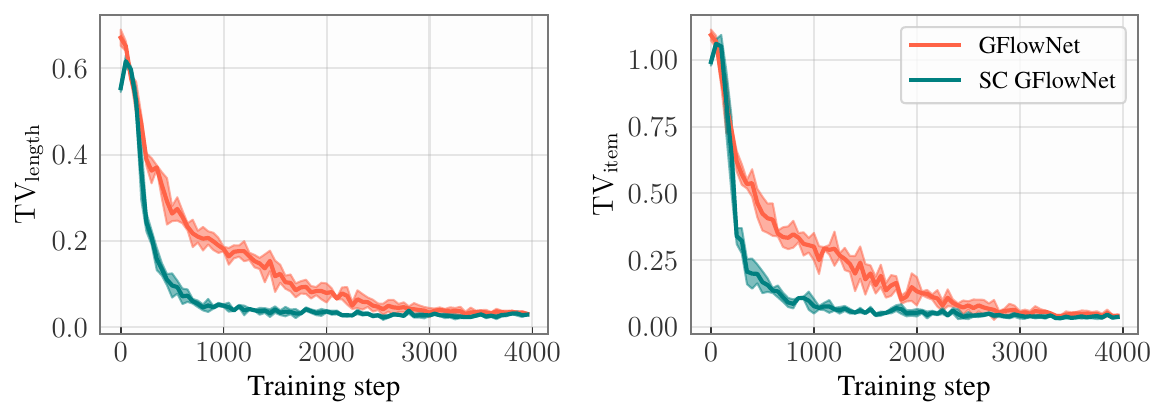}
        \caption{$L = 18$.}
    \end{subfigure}
    \begin{subfigure}{.49\textwidth}
        \includegraphics[width=\linewidth]{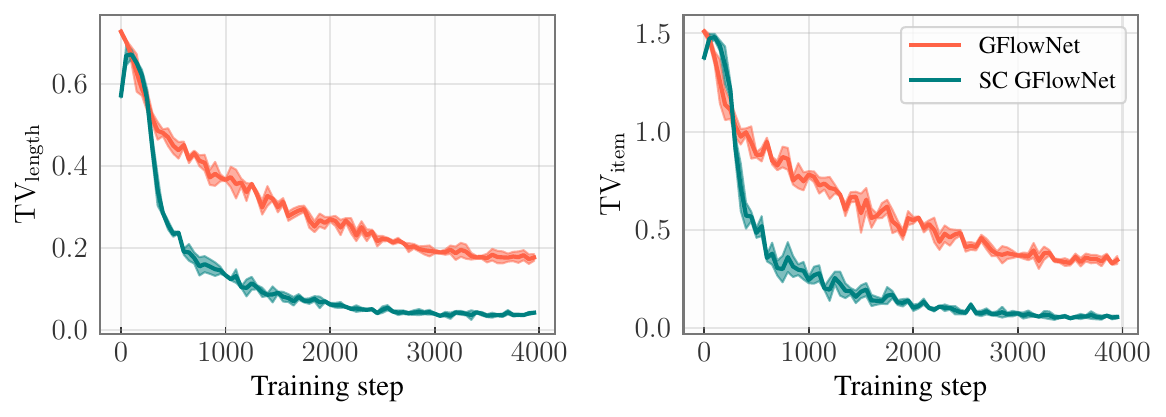}
        \caption{$L = 24$.}
    \end{subfigure}
    \caption{\textbf{SC GFlowNets significantly accelerate learning convergence} when approximating the generative process in \Cref{eq:gsequencesa}, with greater improvements being observed for larger state spaces. $\mathrm{TV}_{\mathrm{item}}$ and $\mathrm{TV}_{\mathrm{length}}$ are defined in \Cref{eq:msequencesa}.}
    \label{fig:sequencesa}  
\end{figure*}

\pp{Autoregressive sequence generation.} 
We train a GFlowNet to reproduce the following generative process 
of token sequences $(t_{1}, \dots, t_{l})$ over a finite vocabulary, 
\begin{equation} \label{eq:gsequencesa} 
    t_{1} \sim p_{\mathrm{init}}, \, l \sim p_{\mathrm{length}}, \, t_{i} | t_{i - 1} \sim p_{\mathrm{kernel}}(t_{i - 1}, \cdot) \text{ for } i \in \{2, \dots l\}.  
\end{equation}
We let $[V] \coloneqq \{1, \dots, V\}$ be the support of $p_{\mathrm{init}}$ (\emph{vocabulary}) and $p_{\mathrm{kernel}}(t, \cdot)$ for $t \in [V]$ and $\{0\} \cup [L]$ be 
that of $p_{\mathrm{length}}$.
The target distribution associated with $\mathbf{t} = (t_{1}, \dots, t_{l})$ is \looseness=-1 
\begin{equation} \label{eq:sequencesa} 
    R(\mathbf{t}) = p_{\mathrm{init}}(t_{1}) p_{\mathrm{length}}(l) \prod_{2 \le i \le l} p_{\mathrm{kernel}}(t_{i - 1}, t_{i}). 
\end{equation}
The GFlowNet is built upon a generative process starting at an empty sequence and iteratively deciding either to add a token from $[V]$ or to stop the generation, and the SC GFlowNet is designed according to Example~\ref{example:lengthsa} with $K = 3$ partitions. 
Due to its simplicity, many quantities of interest from the model in \Cref{eq:sequencesa} can be tractably computed, including the marginal distribution over sequence lengths and the value of $t_{i}$ for $i \in [L]$. 
We denote these by $\pi(l)$ and $\pi(i, v)$ for $i \in [L]$ and $v \in [V]$, respectively, and evaluate our model with 
\begin{equation} \label{eq:msequencesa} 
    \mathrm{TV}_{\mathrm{length}} = \frac{1}{2} \sum_{0 \le l \le L} | \hat{\pi}(l) - \pi(l)| \text{ and } \mathrm{TV}_{\mathrm{item}} = \frac{1}{2} \max_{v \in [V]} \sum_{1 \le i \le L} | \hat{\pi}(i, v) - \pi(i, v)|,  
\end{equation}
with $\hat{\pi}$ being a Monte Carlo estimate of $\pi$ based on the learned sampler. 
We illustrate the evolution of these metrics throughout training in \Cref{fig:sequencesa}.
As we can observe, SC GFlowNets significantly improve learning convergence.   

\begin{figure}
    \centering 
    \begin{subfigure}{.49\textwidth}
        \centering 
        \includegraphics[width=\linewidth]{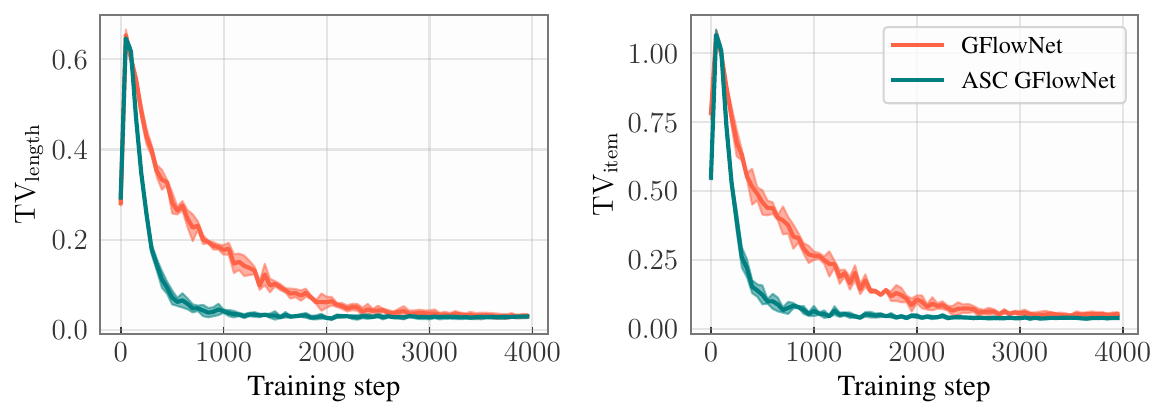}
        \caption{$L = 18$.} 
    \end{subfigure}
    \begin{subfigure}{.49\textwidth}
        \centering
        \includegraphics[width=\linewidth]{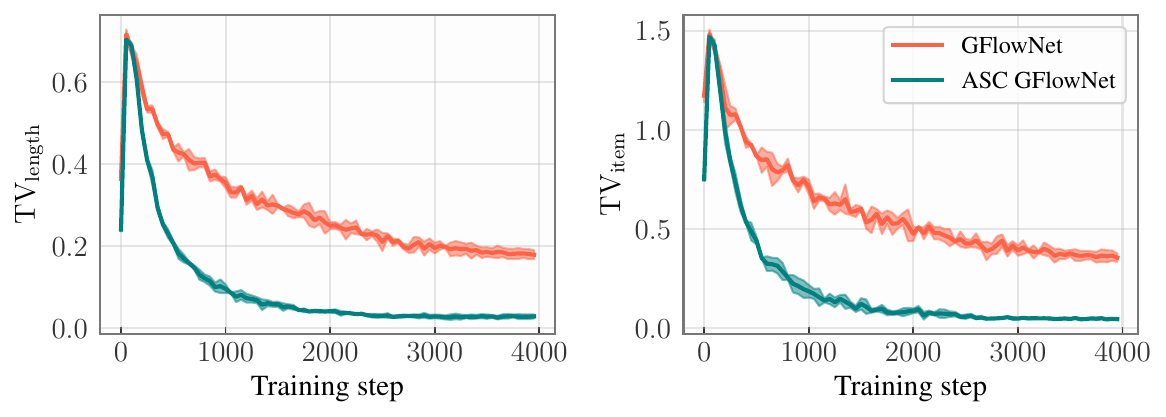}
        \caption{$L = 24$.} 
    \end{subfigure}
    \caption{
        Embarrassingly parallel counterpart of \Cref{fig:sequencesa}, comparing ASC GFlowNet against the best of $K$ randomly initialized and independently trained GFlowNets with matched computational cost.
        ASC GFlowNet accelerates learning convergence.
    }
    \label{fig:psequencesa} 
\end{figure} 

We also evaluate ASC GFlowNets on this domain, training $K = 3$ separate samplers in parallel on distinct subsets of the state space.
We show the results in \Cref{fig:psequencesa}.
Again, we notice our partitioning technique significantly speeds up training.  \looseness=-1 

\begin{wrapfigure}[10]{l}{.5\textwidth}
    \includegraphics[width=\linewidth]{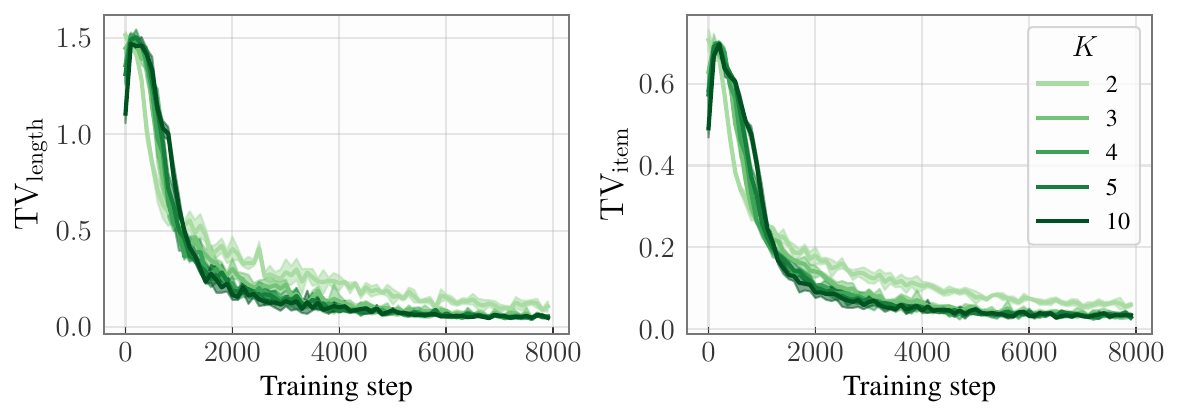}
    \caption{Accuracy of SC GFlowNets as a function of the number of components in the mixture.} 
    \label{fig:sequencesaaa} 
\end{wrapfigure}
Another question that comes up when designing SC GFlowNets is \emph{how to select the number of partitions}. 
\Cref{fig:sequencesaaa} shows that, for this domain, there is no significant difference between choosing $K \in \{2, 3, 4, 5, 10\}$. 
This may not always be the case, however; see \Cref{fig:randomwalksaaa}. 
We discuss this further when describing the \textsc{Lazy Random Walk} domain.

Importantly, the question of whether one should use ASC or SC GFlowNets depends primarily on the available computational resources. 
ASC GFlowNets, being massively parallelizable, have access to larger training sets and greater model capacity than any individual SC GFlowNet, being expected to both improve state space exploration and learning convergence, should the extra processing power be available. 
From an implementational perspective, as the samplers are  architecturally identical, this parallelization can be directly attained by using tools designed to express single-program, multiple-data (SPMD) computational models \citep{darema1988spmd}, such as \texttt{jax.pmap}. 
\looseness=-1 



\pp{Lazy Random Walk.}
The state space for the \textsc{Lazy Random Walk} domain is $\mathcal{S} = \{(\mathbf{x}, t) \colon \mathbf{x} \in \{-H, \dots, H\}^{d} \text{ and } 0 \le t < T\}$ and $\mathcal{X} = \{(\mathbf{x}, T) \colon \mathbf{x} \in \{-H, \dots, H\}^{d}\}$ for given positive integers $H$, $d$, and $T$. 
The generative process starts at $s_{o} = (\mathbf{0}, 0)$, 
and each iteration adds either $1$, $-1$, or $0$ to a chosen coordinate of $\mathbf{x}$---as long as it remains within the boundary of $\mathcal{S}$---and increments $t$ by one until $t = T$.
As in \citep{dallantonia2026avoidknowdivergenttrajectory, silva2026pathdependent}, we consider $d = 2$, $H = 16$, and $T = 32$.
Our target distributions, \textsc{Rings} and \textsc{Gaussian Mixture}, are shown in \Cref{fig:randomwalksa}.

Notably, as the trajectory size is the same for each $(\mathbf{x}, T)$, the length-based partitioning of Example~\ref{example:lengthsa} cannot be used for \textsc{Lazy Random Walk}. 
Instead, we propose a uniform \emph{distance}-based partitioning using the function $\psi(\mathbf{x}, T) = \|\mathbf{x}\|_{1}$ in Remark~\ref{remark:modulosa} with $K = 3$. 
We compute the TV distance between the learned and target distributions to measure a sampler's goodness-of-fit. 

\begin{figure}

    \begin{subfigure}{.49\textwidth}
        \centering 
        \includegraphics[width=\linewidth]{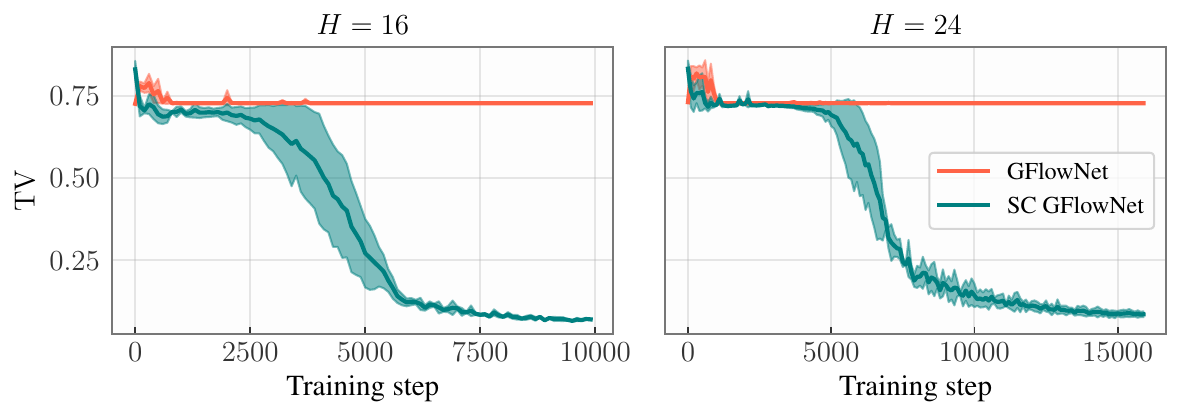}
    \end{subfigure}
    \begin{subfigure}{.49\textwidth}
        \centering 
        \includegraphics[width=\linewidth]{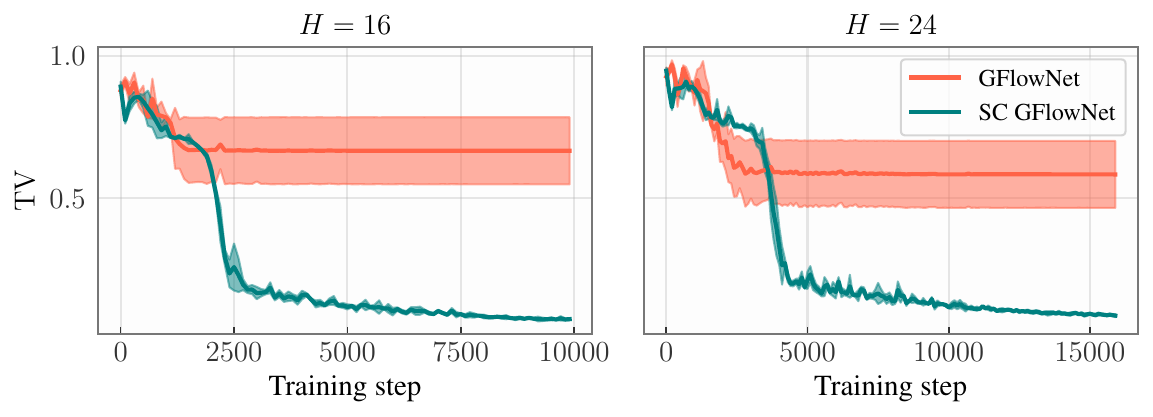}
    \end{subfigure}

    \begin{subfigure}{.49\textwidth}
        \raggedleft 
        \includegraphics[width=.94\linewidth]{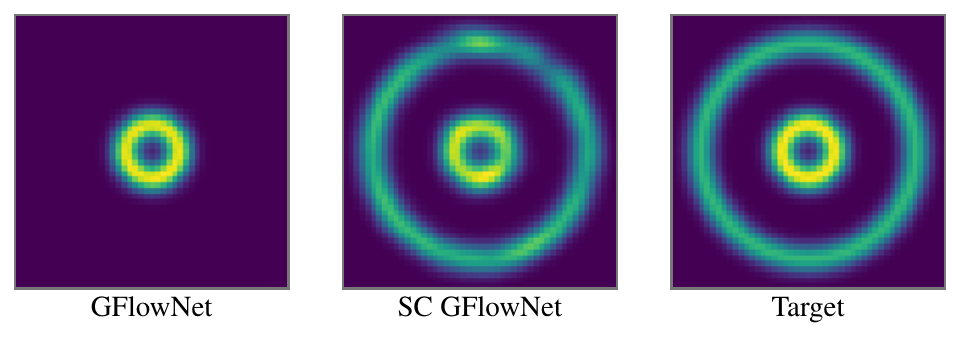}
        \caption{\textsc{Rings}.} 
    \end{subfigure}
    \begin{subfigure}{.49\textwidth}
        \raggedleft 
        \includegraphics[width=.94\linewidth]{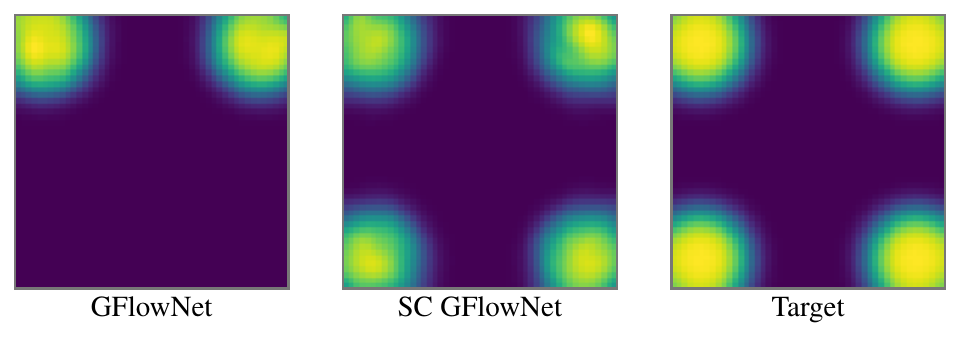}
        \caption{\textsc{Gaussian Mixture}.} 
    \end{subfigure}

     \caption{\textbf{SC GFlowNets improve goodness-of-fit} for the \textsc{Lazy Random Walk} task, exhaustively covering the target distribution's high probability regions even for an imperfect state space partitioning (as for the \textsc{Gaussian Mixture}; see \Cref{fig:prandomwalksa}).}
     \label{fig:randomwalksa}
\end{figure}

\begin{figure}
    \begin{subfigure}{.49\textwidth}
        \centering 
        \includegraphics[width=\linewidth]{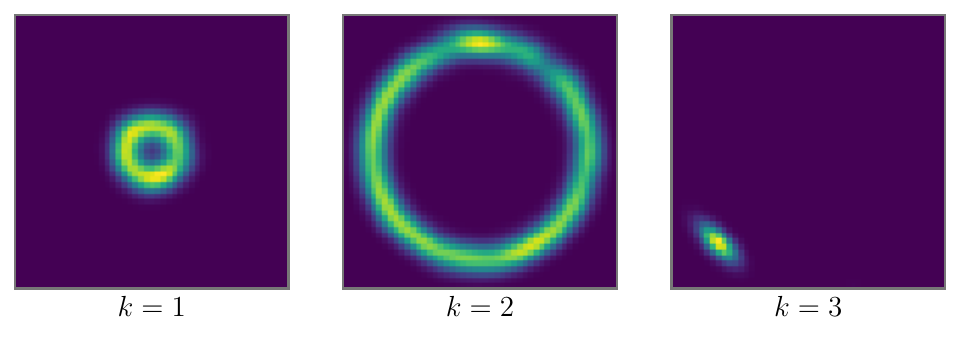}
        \caption{\textsc{Rings}.}
    \end{subfigure}
    \begin{tikzpicture}
        \draw[dashed] (0,0) -- (0, 2.8); 
    \end{tikzpicture}
    \begin{subfigure}{.49\textwidth}
        \includegraphics[width=\linewidth]{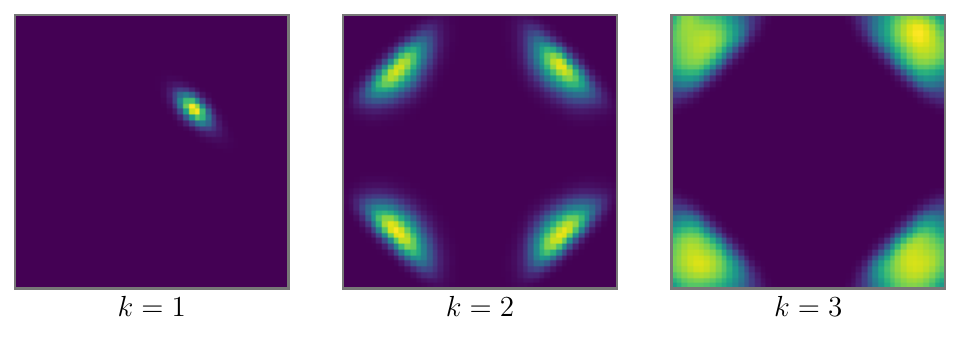}
        \caption{\textsc{Gaussian Mixture}.}
    \end{subfigure}
    \caption{
        SC GFlowNet's marginal $m_{F}^{\smash[t]{(k)}}(s_{o}, x)$ (recall \Cref{eq:marginalsa}) in each $x \in \mathcal{X}$ for both the (a) \textsc{Rings} and (b) \textsc{Gaussian Mixture} target distributions of the \textsc{Lazy Random Walk} domain.
        \looseness=-1 
    }
    \label{fig:prandomwalksa} 
\end{figure}

\Cref{fig:randomwalksa} shows both the model's accuracy throughout training and a snapshot of the learned distribution for both SC and a traditional GFlowNet.
In addition, we present in \Cref{fig:prandomwalksa} each component $p_{F}^{\smash[t]{(k)}}$ for $k \in \{1, 2, 3\}$ of the SC GFlowNet.
We notice that our mixture modelling framework both accelerates learning convergence and improves fitness---even when the chosen partitioning does not separate the target distribution's high probability regions, as 
in the \textsc{Gaussian Mixture} example. 
The intuition for this is that the sampler will naturally avoid being pulled back to and bogged in previously visited modes during training when we constraint the future value of $\|\mathbf{x}\|_{1}$ at $t = T$. 
We also compare ASC to SC GFlowNets in \Cref{fig:pprandomwalksa} for both examples; 
as expected, ASC GFlowNets find a distinctly better fit to the target---particularly in the case of \textsc{Rings}, for which each component receives an easier target with connected high-probability regions.

\begin{figure}
    \begin{subfigure}{.24\textwidth}
        \centering 
        \includegraphics[width=\linewidth]{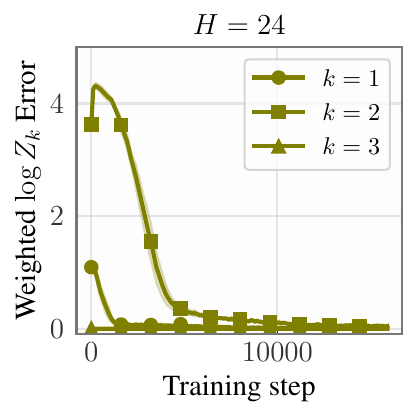}
        \caption{\textsc{Rings}.} 
    \end{subfigure}
    \begin{subfigure}{.24\textwidth}
        \centering 
        \includegraphics[width=.95\linewidth]{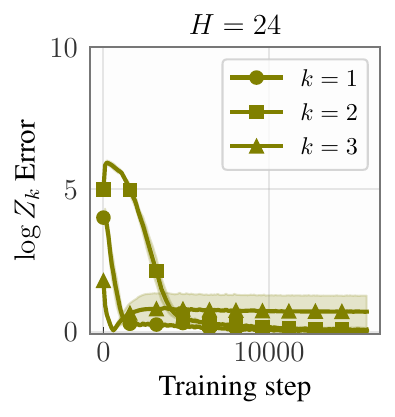}
        \caption{\textsc{Rings}.} 
    \end{subfigure}
    \begin{subfigure}{.24\textwidth}
        \centering 
        \includegraphics[width=\linewidth]{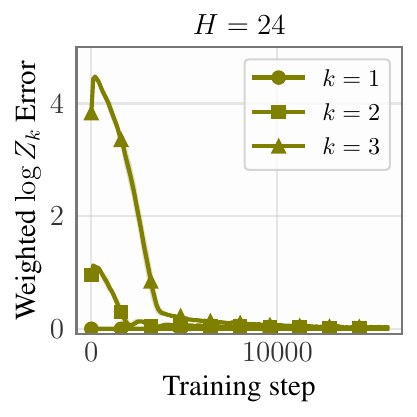}
        \caption{\textsc{Gaussian Mixture}.}
    \end{subfigure}
    \begin{subfigure}{.24\textwidth}
        \centering 
        \includegraphics[width=.95\linewidth]{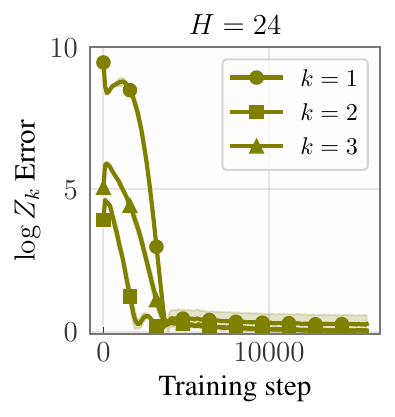}
        \caption{\textsc{Gaussian Mixture}.} 
    \end{subfigure}
    \caption{
        Learned log-partition functions by ASC GFlowNets in the \textsc{Lazy Random Walk} domain. 
        We recall that, for \textsc{Rings}, $k = 3$ is the \emph{starved} component, containing negligible probability mass; for \textsc{Gaussian Mixture}, it is $k = 1$. 
        As expected, ASC GFlowNets learn the correct values 
        (a, c); however, the approximation of $\log Z_{k}$ for regions with near-zero probability 
        is imperfect (b, d), as sizeable deviations in $\log Z_{k}$ in this case have little effect on distributional accuracy. \looseness=-1 
    } 
    \label{fig:partitionsaa} 
\end{figure}

\begin{figure*}
    \begin{subfigure}{.49\linewidth}
        \includegraphics[width=.8\linewidth]{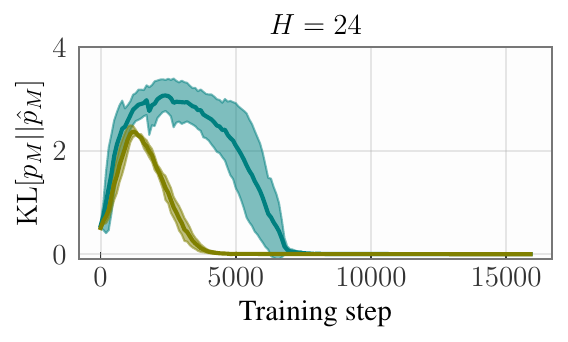}
        \caption{\textsc{Rings}.}
    \end{subfigure}
    \begin{subfigure}{.49\linewidth}
        \includegraphics[width=.8\linewidth]{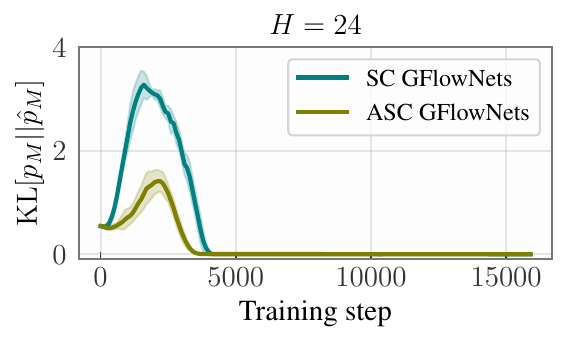}
        \caption{\textsc{Gaussian Mixture}.}
    \end{subfigure}
    \caption{KL between prior mixing distributions.} 
    \label{fig:partitionsaaaa} 
\end{figure*}
Importantly, due to having a tractably enumerable state space, the \textsc{Lazy Random Walk} domain allows us to efficiently evaluate the log-partitions functions. 
This provides another road for evaluating our model. 
To see this, we denote by $\log \hat{Z}_{k}$ and $\log \hat{Z}$ the learned partition functions, and by $\log Z_{k}$ and $\log \hat{Z}$ their ground-truth values for $k \in [K]$.
We consider the metrics \looseness=-1 
\begin{equation*}
    \mathrm{KL}[p_{M} || \hat{p}_{M}] \coloneqq \sum_{1 \le k \le K} \frac{Z_{k}}{Z} \cdot \log \frac{\nicefrac{Z_{k}}{Z}}{\nicefrac{\hat{Z}_{k}}{\hat{Z}}}, \, \left( \frac{Z_{k}}{Z} \cdot \left| \log \hat{Z}_{k} - \log Z_{k} \right| \right), \text{ and } \left( \left| \log \hat{Z}_{k} - \log Z_{k} \right| \right),   
\end{equation*}
for $k \in [K]$, which respectively evaluate the accuracy of the learned prior mixing distribution and the relative and absolute errors of the learned partition functions. 
We separate relative and absolute metrics as, in practice, if a $Z_{k}$ is negligible when compared to other $Z_{k'}$, small discrepancies in ${Z}_{k}$ will result in large deviations for $\log \hat{Z}_{k}$. 
As an example, the distributions with logits $(1, 2, -10)$ and $(1.05, 1.99, -15)$ are effectively the same, despite the large difference in the values of the third component. 
We display these values in \Cref{fig:partitionsaa,fig:partitionsaaaa}, with a comparison between SC and ASC GFlowNets according to them in the latter. 

\begin{figure}
    \centering 
    \begin{subfigure}{.49\textwidth}
        \centering 
        \includegraphics[width=.59\linewidth]{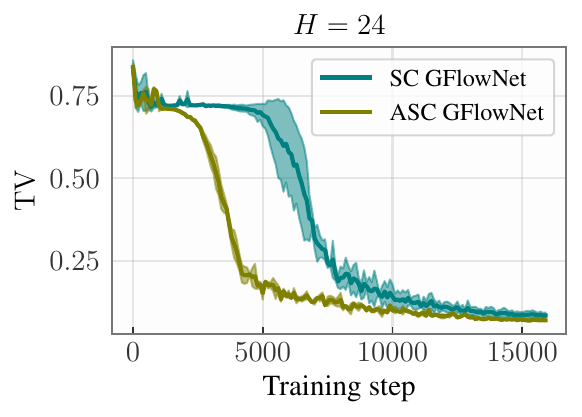}
        \includegraphics[width=.39\linewidth]{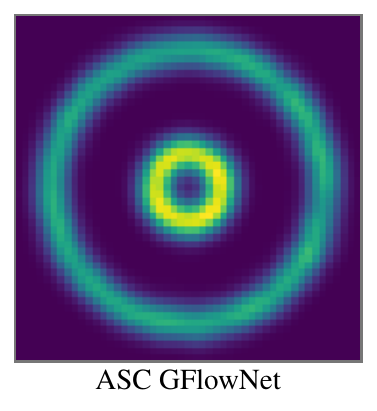}
        \caption{\textsc{Rings}.}
    \end{subfigure}
    \begin{subfigure}{.49\textwidth}
        \includegraphics[width=.59\linewidth]{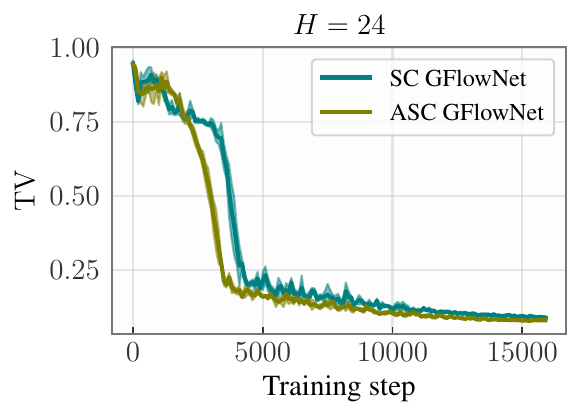}
        \includegraphics[width=.39\linewidth]{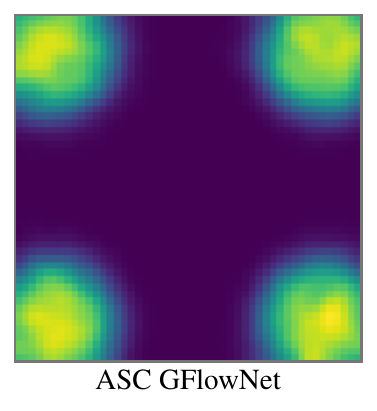}
        \caption{\textsc{Gaussian Mixture}.} 
    \end{subfigure}
    \caption{
        \textbf{ASC GFlowNets converge faster than their synchronous counterparts} in the \textsc{Lazy Random Walk} task, achieving a more accurate 
        approximation in fewer training iterations.
        \looseness=-1 
    } 
    \label{fig:pprandomwalksa} 
\end{figure}

\begin{figure}
    \centering 
    \begin{subfigure}{.47\textwidth}
        \centering 
        \includegraphics[width=\linewidth]{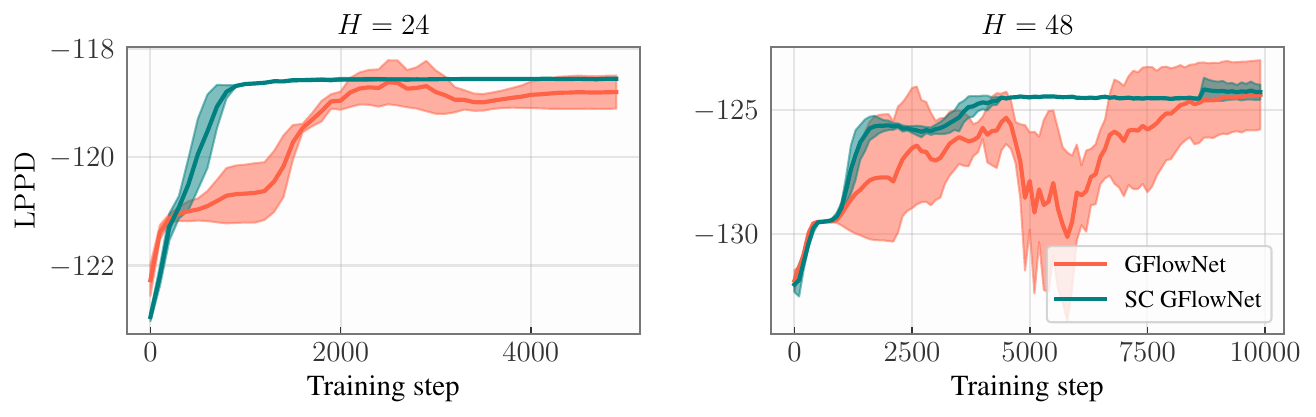}
        \caption{Log-predictive posterior density of the observed data (LPPD).}
    \end{subfigure}
    \hspace{12pt} 
    \begin{subfigure}{.47\textwidth}
        \centering 
        \includegraphics[width=\linewidth]{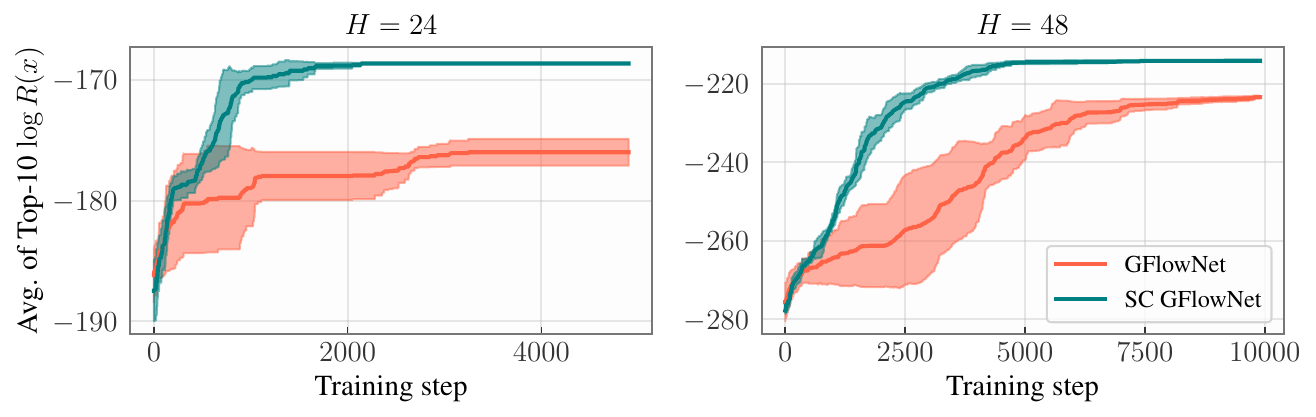}
        \caption{Average unnormalized target density of the 10 most likely encountered states.}
    \end{subfigure}
    \caption{
        \textbf{SC GFlowNets improve learning convergence} (left) and accelerate the discovery of high-probabiltiy states (right) for the Bayesian \textsc{variable selection} task; $H$ represents the number of variables (columns) in the dataset.
    }
    \label{fig:varselsa} 
\end{figure}

\begin{wrapfigure}[12]{l}{.5\textwidth}
    \vspace{-12pt} 
    \begin{subfigure}{.49\linewidth}
        \includegraphics[width=\linewidth]{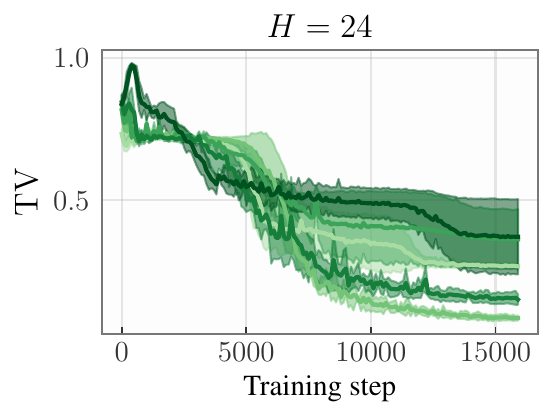}
        \caption{\textsc{Rings}.} 
    \end{subfigure}
    \begin{subfigure}{.49\linewidth}
        \includegraphics[width=\linewidth]{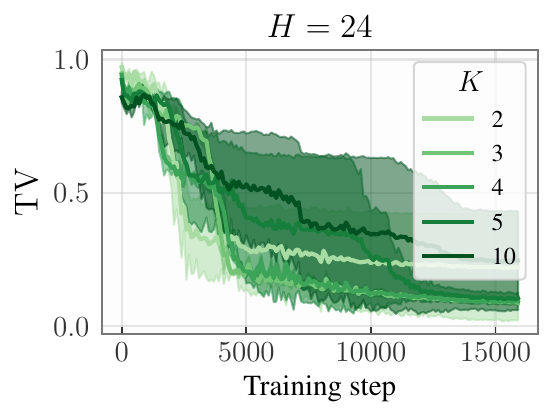}
        \caption{\textsc{Four Gaussians}.}
    \end{subfigure}
    \caption{SC GFlowNets' distributional accuracy for distinct numbers of partitions.}  
    \label{fig:randomwalksaaa}
\end{wrapfigure}
We once again consider the problem of choosing $K$. 
\Cref{fig:randomwalksaaa} highlights the dependence of SC GFlowNets's accuracy on $K$ for the \textsc{Lazy Random Walk} domain. 
Although SC GFlowNets improve upon a monolithic sampler for all evaluated choices of $K \ge 2$, there is also a noticeable difference between them. 
In view of this, we provide two separate suggestions for selecting this hyperparameter. 
The first is to always let $K = 3$ components, should this choice be compatible with the chosen TC; despite its simplicity, we found setting $K = 3$ to consistently achieve an accurate approximation to the target distributions we considered. 
The second is to rely upon standard hyperparameter selection protocols, such as carrying out short training runs and measuring the sampler's performance afterwards, to pick $K$.
With this in mind, future research should investigate principled approaches for algorithmic picking of $K$.  
Nonetheless, 
a useful intuition to have is that the complexity of the function learned by our component-wise amortized policy increases with $K$, due to which we advise against setting too large a value for the number of components in SC GFlowNets. 
\looseness=-1

The main takeaway from these experiments, however, is that designing an effective covering for SC GFlowNets \emph{does not require} fine-grained prior knowledge about the shape of the target distribution.



\begin{figure}
    \begin{subfigure}{.42\textwidth}
        \centering 
        \includegraphics[width=.9\linewidth]{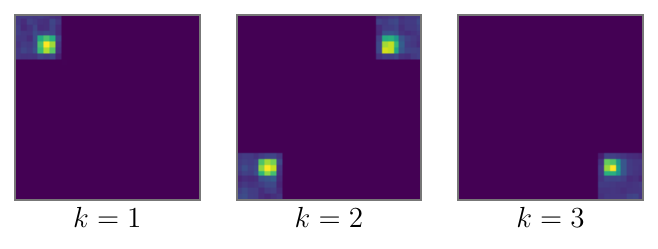}
        \caption{Length-based TC.}
    \end{subfigure}
    \begin{subfigure}{.55\textwidth}
        \centering 
        \includegraphics[width=.9\linewidth]{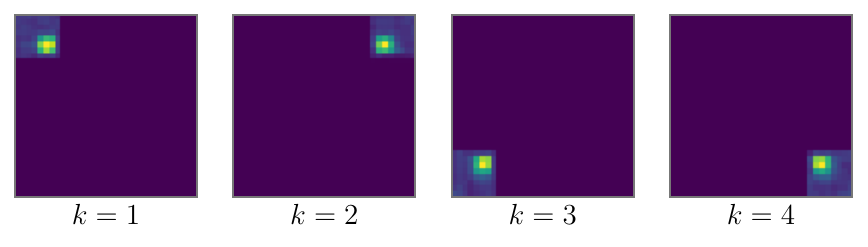}
        \caption{Octant-based TC.} 
    \end{subfigure}
    \caption{Component-wise learned distributions for both considered SC GFlowNets in \Cref{fig:hypergridsaa}.} 
    \label{fig:hypergridsaaa} 
    \vspace{-6pt} 
\end{figure}

\pp{Variable Selection.} 
We consider the task of Bayesian variable selection \citep{george1993variable}. 
Given a data set $\mathcal{D} = \{(x^{\smash[t]{(i)}}, y^{\smash[t]{(i)}})\}_{i=1}^{n}$ with $x^{\smash[t]{(i)}} \in \mathbb{R}^{H}$ and $y^{\smash[t]{(i)}} \in \mathbb{R}$, we let $p(y|x, S, \beta) = \mathcal{N}(y | \beta_{S}^{T} x_{S}, \sigma^{2})$ be defined according to a linear model with observational noise $\sigma$ and coefficients $\beta \in \mathbb{R}^{H}$ and features $S \subseteq [H]$.
We here denote by $x_{S}$ the vector $(x^{\smash[t]{(i)}})_{i \in S}$ composed of the indices $i$ in $S$, with a similar definitiong for $\beta_{S}$. 
In this setting, we train a sampler to draw from the marginal posterior distribution over $S$, i.e., 
\begin{equation*}
    R(S)  \coloneqq p(S | \mathcal{D}) = \int_{\mathbb{R}^{H}} p(S, \beta | \mathcal{D}) \mathrm{d}\beta, 
\end{equation*}
with $p(S, \beta | \mathcal{D})$ being the usual Bayesian posterior for $S$ and $\beta$.
We notice $p(S | \mathcal{D})$ can be computed in closed form up to a normalizing constant independent of $S$ for a linear model. 
After training, we evaluate the samplers according to the predictive posterior distribution of the observed data, as it is usually done in Bayesian modelling, and to the average unnormalized log-posterior of the 10 most likely variable sets, a common approach for assessing exploration effectiveness in the GFlowNet literature, e.g., \citep{kim2024learningscalelogitstemperatureconditional, kim2023local, kim2025adaptive}.   
We show both metrics in \Cref{fig:varselsa}, with the SC GFlowNet being designed as in the \textsc{Set Generation} domain. 
Notably, our mixture model improves both learning and the rate at which modes are encountered during training. \looseness=-1  






\subsection{Experimental Details} 

We provide further details for how each domain was implemented, including model architecture, mixture-encoding details, optimizer, training iterations, and batch size per gradient step. 
We ran all our experiments in a MacBook Pro M4 with 16 GB of RAM, and we implemented our models using Jax's Flax library \citep{jax2018github, flax2020github}. 
All reported results, including error bars, represent averages and standard deviations computed  over three independent runs. 

In every experiment, we used the Muon optimizer \citep{jordan2024muon} for the neural network's parameters, and Adam \citep{kingma2014adam} for the $\log Z$ and $\{\log Z_{k}\}_{k \in [K]}$ parameters. 
Unless stated otherwise, we parameterized the policies $p_{F}$ and $p_{B}$ of the GFlowNet using an MLP with two 64-dimensional layers, 
\begin{equation*}
    p_{F}(s, \cdot) = \mathrm{Softmax}\left( \mathrm{MLP}(\iota(s)) \odot \mathbf{m}(s) \right),  
\end{equation*}
with $\iota(s) \colon \mathcal{S} \rightarrow \mathbb{R}^{64}$ being a learned vector embedding for the state, and $\mathbf{m}(s) \in \{1, -\infty\}^{f}$ being masking function that ensures the sampler is constrained to the target region (either $\mathcal{X}$ or $\mathcal{X}_{k}$, in the case of SC GFlowNets); $f$ is the maximum number of children a given state $s$ has in the underlying state graph. 
We implement 
a similar 
representation for $p_{B}$, 
using a shared encoder and simply changing the output layer and the mask. 
In its centralized form, SC GFlowNets are implemented as \looseness=-1  
\begin{equation} \label{eq:modelsaa} 
    p_{F}^{(\gamma)}(s, \cdot) = \mathrm{Softmax}(\mathrm{MLP}(\iota(s) \oplus \iota_{\gamma}(\gamma)) \odot \mathbf{m}(s, \gamma)). 
\end{equation}
Again, $\mathbf{m}(s, \gamma)$ is a masking function, which depends on $\gamma$ in the case of SC GFlowNets but not in the case of RF GFlowNets; $\oplus$ represents concatenation. 
To ensure matching computational costs and similar parameter counts for both SC GFlowNets and GFlowNets, we let $\iota \colon \mathcal{S} \rightarrow \mathbb{R}^{64 - d_{\gamma}}$ and $\iota_{\gamma} \colon \Gamma \rightarrow \mathbb{R}^{d_{\gamma}}$, with $d_{\gamma}$ being the dimension of the learned representation for $\gamma$. 
For SC GFlowNets, $\iota_{\gamma}$ is a learned dictionary from $[K]$ to $\mathbb{R}^{d_{\gamma}}$. 
For RF GFlowNets, $\iota_{\gamma}$ is a linear layer. 
We set $d_{\gamma} = 8$ in all experiments, which we found to be consistently effective. 
ASC GFlowNets, on the other hand, was trained with $K$ identical copies of \Cref{eq:modelsaa}; as each model learns 
to sample from a distinct $\mathcal{X}_{k}$, information about $k$ was not required to be explicitly encoded into the neural network. 

\pp{Set generation.} 
We considered sets of size $\{24, 48, 64\}$, and trained each sampler for $3000$, $5000$, $10000$ iterations, respectively, using a batch size of $64$ trajectories.

\pp{Hypergrid.}
We trained the models for $H \in \{16, 32\}$, both of which over $16000$ iterations and a batch size of $64$ trajectories per step. The reward function is the same as in \citep{malkin2022trajectory}, i.e., \looseness=-1 
\begin{equation}
    \log R(\mathbf{x}) = \log \left( 
        \frac{1}{2} \prod_{i=1}^{d} \mathbbm{1}(a_i > 0.5) 
        + 2 \prod_{i=1}^{d} \mathbb{I}(0.6 < a_i < 0.8) 
        + \rho 
    \right), 
\end{equation}
with $\rho = 10^{-3}$ and $a_{i} = 2 \cdot \mathbf{x}_{i} \cdot (H - 1)^{-1} - 1$. 

\pp{Ancestral Graphs.} 
We considered two distinct tasks under the \textsc{Ancestral Graphs} domain. 
The first, considered also in \citep{silva2026causal}, considered graphs with $H \in \{15, 20\}$ nodes and used the BIC for scoring each AG, 
\begin{equation} \label{eq:bicsa}  
    R(x) = -\mathrm{BIC}(x) = 2 \log \hat{p}(\mathcal{D} | x) - |x|_{E} \log n - 2 |x|_{E} \log |x|_{N}, 
\end{equation}
in which $\hat{p}$ is the maximum likelihood of $\mathcal{D}$ given $x$, computed via \citet{drton2004iterative}'s Iteratively Reweighted Least Squares (IRLS) algorithm, $|x|_{N}$ is the number of nodes ($H$) in $x$, $|x|_{E}$, the number of edges, and $\mathcal{D}$ was drawn from a randomly parameterized linear Gaussian structural equation model with $H$ variables.
We used the same data generative process as in \citep{silva2026causal}. 
\looseness=-1 

The second task consisted of simulating the Erd\H{o}s Renyi model, whose unnormalized density is defined as 
\begin{equation*}
    R(x) = \alpha^{|x_{E}|}. 
\end{equation*}
We restrict the GFlowNet's generative process to directed acyclic graphs (DAGs) by masking out bidirected edges.
Under this condition, both the partition function $Z_{H}(\alpha)$ of $R(x)$ and the probability that there is an edge between each pair of nodes (the same for all of them, due to symmetry) can be computed in closed form and compared against that learned by our sampler. 
We notice that these quantities are actually related, and both of them can be computed efficiently by adapting Robinson's formula \citep{robinson1973counting}, which we quickly review. 
Fix any pair $(a, b)$ of nodes. 
Let $\mathcal{X}_{H}$ be the space of all DAGs with $H$ nodes. 
The probability that there is an edge from $a$ to $b$ in the graph is 
\begin{equation*}
    E_{H}(\alpha) = \frac{\sum_{x \in \mathcal{X}, (a \rightarrow b ) \text{ in } x} \alpha^{|x_{E}|}}{Z_{H}(\alpha)} \text{ with } Z_{H}(\alpha) = \sum_{x \in \mathcal{X}_{H}} \alpha^{|x_{E}|}.  
\end{equation*}
To compute the latter, pick any subset of size $d$ of nodes, which will be deemed roots of the DAG, for $d \in \{1, \dots, H\}$; in a DAG, there is always at least one.
There are $\binom{H}{d}$ ways of choosing them. 
From them, we may pick each of the $H - d$ remaining nodes to decide whether there is an edge between them and the root nodes, totaling a maximum of $d(H - d)$ edges. 
Given this choice, the remaining $H - d$ nodes may form any DAG. 
In our summation, the corresponding term is 
\begin{equation*}
    m(d, \alpha) = \binom{H}{d} \cdot Z_{H - d}(\alpha) \cdot \sum_{0 \le k \le d(H - d)}  \binom{d (H - d)}{k} \alpha^{k} = \binom{H}{d} \cdot Z_{H - d}(\alpha) \cdot (1 + \alpha)^{d(H - d)}.  
\end{equation*}
However, summing this expression over $d \in \{1, \dots, H\}$ over-counts certain graphs; e.g., the edgeless graph is counted several times when $k = 0$ for each $d$. 
More comprehensively, let $R_{H, d}$ be the set of all DAGs having exactly $d$ nodes as roots, and let $c(d, \alpha) = \sum_{x \in R_{H, d}} \alpha^{|x|_{E}}$; our objective is to compute $\sum_{1 \le d \le H} c(d, \alpha)$. 
However, the expression above evaluates 
\begin{equation*}
    m(d, \alpha) = \sum_{d \le j \le H} \binom{j}{d} \cdot c(j, \alpha), 
\end{equation*}
since $R_{H, j}$ appears $\binom{j}{d}$ times when evaluating $m(d, \alpha)$; to see this, fix a set $\mathrm{n} = \{n_{1}, \dots, n_{j}\}$ of $j$ nodes, and notice that our computation above considers the case in which every $n \in \mathrm{n}$ is root each time we pick any $d$-sized subset of $\mathrm{n}$, i.e., $\binom{j}{d}$ times, for $d \le j$, as our computation considers DAGs with \emph{at least} $d$ roots. 
We can circumvent this issue by instead computing  
\begin{equation*}
    \begin{aligned} 
        \sum_{1 \le d \le H} (-1)^{d + 1} m(d, \alpha) &= \sum_{1 \le d \le H} \sum_{d \le j \le H} (-1)^{d + 1} \cdot \binom{j}{d} \cdot c(j, \alpha) \\ 
        &= \sum_{1 \le j \le H} c(j, \alpha) \cdot \sum_{1 \le d \le j} (-1)^{d + 1} \cdot \binom{j}{d} \\ 
        &= \sum_{1 \le j \le H} c(j, \alpha) \cdot \left(1 - \sum_{0 \le d \le j} (-1)^{d} \cdot \binom{j}{d}\right) \\ 
        &=\sum_{1 \le j \le H} c(j, \alpha) \left(1 - (1 - 1)^{j}\right) = \sum_{1 \le j \le H} c(j, \alpha).
    \end{aligned} 
\end{equation*}
In conclusion, 
\begin{equation*}
    Z_{H}(\alpha) = \sum_{1 \le d \le H} (-1)^{d + 1} \cdot \binom{H}{d} \cdot Z_{H - d}(\alpha) \cdot (1 + \alpha)^{d (H - d)}. 
\end{equation*}
This can be computed recursively via the boundary condition $Z_{1}(\alpha) = 1$. 
To evaluate $E_{H}(\alpha)$, we notice the probability of existing an edge from $a$ to $b$ is equal, by symmetry, to the expected number of edges divided by the total number of edges.
On the other hand, the number of edges follows the exponential family $|x|_{E} \mapsto \exp\left\{ |x|_{E} \log \alpha - \log Z_{H}(\alpha) \right\}$ with cumulant $\log Z_{H}(\alpha)$; as a consequence, its expected value can be computed as the derivative of the cumulant, 
\begin{equation*}
    \mathbb{E}_{x}[|x|_{E}] = \frac{\mathrm{d} \log Z_{H}(\alpha)}{\mathrm{d}\log \alpha} = \alpha \cdot \frac{\mathrm{d}\log Z_{H}(\alpha)}{\mathrm{d}\alpha}, 
\end{equation*}
which can be computed efficiently through our previously discussed recursive formula.
The probability of there being an edge between any pair of nodes under Erd\H{o}s-Renyi's model is then 
\begin{equation*}
    E_{H}(\alpha) = \frac{1}{H(H - 1)} \cdot \alpha \cdot \frac{\mathrm{d}\log Z_{H}(\alpha)}{\mathrm{d}\alpha}. 
\end{equation*}
During training, we compare $E_{H}(\alpha)$ with the edge-wise probabilities learned by our sampler, estimated via Monte Carlo with $2048$ samples; the average $L_{1}$ between both quantities is the value reported in \Cref{fig:graphsaa}, called $\mathrm{Marginal \, Edge \, Error}$ and evaluated as  
\begin{equation*}
    \mathrm{Marginal \, Edge \, Error} = \frac{2}{H(H - 1)} \sum_{n_{1}, n_{2} \in [H], n_{1} \neq n_{2}}  | E_{H}(\alpha) - \hat{\pi}(n_{1}, n_{2})| 
\end{equation*}
with $\hat{\pi}(n_{1}, n_{2})$ being the referred Monte Carlo estimate. 
\looseness=-1 

\pp{Autoregressive sequence generation.}
We set $p_{\mathrm{length}}$ to an uniform distribution and both $p_{\mathrm{init}}$ and $p_{\mathrm{kernel}}$ to randomly sampled distribution with logits drawn according a standard multivariate normal model of dimension $L \in \{18, 24\}$.
We train the samplers for $8000$ iterations in both cases. 

\pp{Lazy Random Walk.} 
We consider the follow distributions for the \textsc{Rings} and \textsc{Gaussian Mixture} targets, respectively, 
\begin{equation*}
    \log R_{\text{Rings}}(\mathbf{x}) = \log \left( \sum_{k=1}^{K} w_k \exp \left( -\frac{1}{2} \left( \frac{\|\mathbf{x}\|_2 - r_k}{s} \right)^{\!2} \right) \right), 
\end{equation*}
with $K = 2$, $r_{1} = 0.2$, $r_{2} = 0.8$, $w_{1} = 0.6$, $w_{2} = 0.4$, and $s = 0.05$, and 
\begin{equation*}
    \log R_{\textsc{GaussianMixture}}(\mathbf{x}) =  \left[ \log \left( \sum_{j=1}^{4} \exp \left( -\frac{1}{2} \left( \frac{\|\mathbf{x} - \boldsymbol{\mu}_j\|_2^2}{\sigma} \right)^{\!2} \right) \right) - \log \left( 4\pi \sigma \sqrt{2\pi} \right) \right], 
\end{equation*}
with $\sigma = 0.1$ and $\mu_{1} = (-0.8, -0.8)$, $\mu_{2} = (-0.8, 0.8)$, $\mu_{3} = (0.8, -0.8)$, and $\mu_{4} = (0.8, 0.8)$.  
In both cases, we normalize $\mathbf{x}$ to $[-1, 1]$ through the transformation $\mathbf{x} \mapsto H^{-1} \mathbf{x}$. 

\pp{Variable Selection.} 
We consider $H \in \{24, 48\}$ and train both models for $5000$ and $10000$ iterations, respectively, and we used a batch size of $64$ trajectories per training step.

\section{Proofs} \label{sec:psaa}

We provide detailed proofs for each theoretical result in both the main text and the supplement.

\subsection{Proof of Proposition~\ref{prop:relsa}}

As each $\gfn_{\gamma}$ abides by the MTB condition, we observe that 
\begin{equation*}
    Z(\mathrm{d}\gamma) \cdot p_{F}^{\smash[t]{(\gamma)}}(s_{o}, \tau) = R(x, \mathrm{d}\gamma) \cdot p_{B}^{\smash[t]{(\gamma)}}(x, \tau)  
\end{equation*}
for each $\gamma \in \Gamma$ and trajectory $\tau$ from $s_{o}$ to $x$. 
We then infer that, by marginalizing $\tau \in s_{o} \rightsquigarrow x$, 
\begin{equation*}
    Z(\mathrm{d}\gamma) \sum_{\tau \in s_{o} \rightsquigarrow x} p_{F}(s_{o}, \tau) = R(x, \mathrm{d}\gamma) \underbrace{\sum_{\tau \in s_{o} \rightsquigarrow x} p_{B}^{\smash[t]{(\gamma)}}(x, \tau)}_{= 1} = R(x, \mathrm{d}\gamma). 
\end{equation*}
In addition, the left-hand side of the above equation is exactly $m_{F}^{\smash[t]{(\gamma)}}(x)$ (by definition). Hence, 
\begin{equation*}
    Z(\mathrm{d}\gamma) m_{F}^{\smash[t]{(\gamma)}}(x) = R(x, \mathrm{d}\gamma). 
\end{equation*}
This shows the second equality in Proposition~\ref{prop:relsa}.
By summing the above equation over $x \in \mathcal{X}$ and noticing that $m_{F}^{\smash[t]{(\gamma)}}$ is a probability measure on $\mathcal{X}$, we obtain 
\begin{equation*}
    Z(\mathrm{d}\gamma) = Z(\mathrm{d}\gamma) \underbrace{\sum_{x \in \mathcal{X}} m_{F}^{\smash[t]{(\gamma)}}(x)}_{= 1} = \sum_{x \in \mathcal{X}} R(x, \mathrm{d}\gamma). 
\end{equation*}
This demonstrates the first equality in Proposition~\ref{prop:relsa}. 

\subsection{Proof of Corollary~\ref{col:pmassa}}

Under the notation of Proposition~\ref{prop:relsa}, we observe that 
\begin{equation*}
    R_{k}(x) = R(x, \{k\}) = R(x) q_{k}(x) = R(x) N(x)^{-1}  
\end{equation*}
if $x \in \mathcal{X}_{k}$, and $R(x, \{k\}) = 0$ otherwise. 
This proves the second equality in Corollary~\ref{col:pmassa}. 
The first equality can be derived by recalling that $Z_{k} \coloneqq \sum_{x \in \mathcal{X}_{k}} R_{k}(x)$ and hence 
\begin{equation*}
    Z_{k} = \sum_{x \in \mathcal{X}_{k}} \frac{R(x)}{N(x)}. 
\end{equation*}
This is exactly the first equality in Corollary~\ref{col:pmassa}. 

\subsection{Proof of Proposition~\ref{prop:npsa}}

We demonstrate there are numbers $\{a_{k}, b_{k}\}_{k \in [K]}$ and a function $\psi$ for which the problem 
\begin{equation*}
    \exists s' \in 2^{\mathcal{C} \setminus s} \text{s.t. } s \cup s' \in \mathcal{X} \text{ and } \psi(s \cup s') \in [a_{k}, b_{k})
\end{equation*}
is equivalent to \texttt{SubsetSum}. 
We assume $\mathcal{C} = [C]$ for some $C \in \mathbb{N}$. 
We first notice that since $\psi$ is modular, i.e., 
\begin{equation*}
    \psi(s \cup \{c\}) = \psi(s) + \psi(\{c\})  
\end{equation*}
for each $c \in \mathcal{C} \setminus s$, we may write 
\begin{equation*}
    \psi(s) = \psi(\emptyset) + \sum_{c \in s} \psi(\{c\}). 
\end{equation*}
As such, we define $\eta_{c} = \psi(\{c\}) \ge 0$ for conciseness and assume $\psi(\emptyset) = 0$ without loss of generality. 
In this case, $\psi(s)$ can be computed in linear time in the size of $s$. 
We also assume $\eta_{c}$ are positive integers and fix $s = \emptyset$. 
Under these conditions, our decision problem in \Cref{eq:decisionsa} becomes 
\begin{equation*}
    \exists s \in 2^{\mathcal{C}} \colon \sum_{c \in s} \eta_{c} \in [a_{k}, b_{k}). 
\end{equation*}
Let there be a $k$ for which $b_{k} = a_{k} + 1$ with both $a_{k}$ and $b_{k}$ positive integers; as $\eta_{c}$ are also integers, the above sum can only be satisfied when 
\begin{equation*}
    \sum_{c \in s} \eta_{c} = a_{k} 
\end{equation*}
for some subset $s$ of $\mathcal{C}$. 
This is exactly the subset sum problem (with positive inputs), which is NP-complete \citep{clrs2009}.
In particular, deciding upon the reachability of $\mathcal{X}_{k}$ from a given $s \in \mathcal{S}$ is NP-complete in general. 

\subsection{Proof of Proposition~\ref{prop:npsaa}}

We show the decision problem in \Cref{eq:decisionsa} can be efficiently solved when $\psi$ is a symmetric junta by relying upon the fact that, since $\mathcal{C}$ is finite, there are only so many values realizable by $\psi$, all of which can be quickly verified in polynomial time.

Fix any $s \in \mathcal{S}$ with $n = \#(s \cap r)$. 
We notice $n \le \#( \mathcal{C} \cap r )$. 
Also, $\psi(s) = \xi(n)$ by definition.  
Our decision problem then becomes 
\begin{equation*}
    \exists s' \in 2^{\mathcal{C} \setminus s} \colon \psi(s \cup s') \in [a_{k}, b_{k}). 
\end{equation*}
As $s$ and $s'$ are disjoint, $\# ( (s \cup s') \cap r) = \# (s \cap r) + \# (s' \cap r)$. 
Similarly, $\# (s' \cap r) \le \# (\mathcal{C} \cap r) - n$. 
And not only this, but $\#(s' \cap r)$ can also assume \emph{any value} within $\{0, \dots, \# (\mathcal{C} \cap r) - n\}$ as $s' \in 2^{\mathcal{C} \setminus s}$.
As such, deciding whether the above problem is solvable is equivalent to deciding whether there is a $m \in \{0, \dots, \# (\mathcal{C} \cap r) - n\}$ for which 
\begin{equation*}
    \xi(n + m) \in [a_{k}, b_{k}) 
\end{equation*}
which can (by definition) be executed efficiently as $\psi$ is assumed to be efficiently evaluable in polynomial time.

To see that the result remains true if we replace $2^{\mathcal{C}}$ by the set of bounded multisubsets of $\mathcal{C}$, we define 
\begin{equation*}
    \mathcal{M}(B) = \{ \{\!\{c_{1}, \dots, c_{k}\}\!\} \colon c_{1}, \dots, c_{k} \in \mathcal{C} \text{ and } k \le B\}   
\end{equation*}
as the set of multisubsets of $\mathcal{C}$ of size up to $B$. 
We emphasize the operator $s \mapsto \#s$ counts repeated entries as distinct in $s$. 
We assume $B$ is polynomial on $\#\mathcal{C}$. 
In this case, the above argument applies by replacing $2^{\mathcal{C}}$ with $\mathcal{M}(B)$ and noticing that it is enough to verify whether, given $s \in \mathcal{C}$, there is a $m \in \{0, \dots, B - \# (s \cap r)\}$ for which $\xi(\#(s \cap r) + m) \in [a_{k}, b_{k})$, which can again be done efficiently. 

\subsection{Proof of Proposition~\ref{prop:markovsa}} 

Let $\{S_{t}^{\smash[t]{(f)}}\}_{t \ge 1}$ be the Markov chain defined by $\kappa_{F}$, and let $L$ be the maximum trajectory length in the GFlowNet's state graph.
For conciseness, we will denote by $\mathbb{P}_{\kappa_{F}}[\cdot]$ the probability measure corresponding to the case in which the forward Markov chain starts $s_{o}$, i.e.,   $\mathbb{P}_{\kappa_{F}}[\cdot | s_{o}]$. 
It is clear that $T_{\mathcal{X}} \le L$ and, hence, $\mathbb{P}_{\kappa_{F}}[T_{\mathcal{X}} < \infty | s] = 1$ for any $s$. 
We now proceed to show that $S_{T_{\mathcal{X}}}^{\smash[t]{(f)}}$ is distributed according to the target $R(x)$. 
We start by noticing by 
\begin{equation*}
    \mathbb{P}_{\kappa_{F}}[ S_{T_{\mathcal{X}}}^{\smash[t]{(f)}} = x] = \sum_{1 \le t \le L} \mathbb{P}_{\kappa_{F}}[T_{\mathcal{X}} = t] \mathbb{P}_{\kappa_{F}}\left[ S_{T_{\mathcal{X}}}^{\smash[t]{(f)}}  = x | T_{\mathcal{X}} = t \right]. 
\end{equation*}
and, by the definition of conditional probabilities, 
\begin{equation*}
    \mathbb{P}_{\kappa_{F}}\left[ S_{T_{\mathcal{X}}}^{\smash[t]{(f)}} = x | T_{\mathcal{X}} = t \right] = \frac{ \sum_{(s_{1}, \dots, s_{t - 1}, x) \in \bar{S}^{t}} \kappa_{F}(s_{t - 1}, x) \cdot \prod_{1 \le i \le t - 2} \kappa_{F}(s_{i}, s_{i + 1}) } { \sum_{x' \in \mathcal{X}} \sum_{(s_{1}, \dots, s_{t - 1}, x') \in \bar{S}^{t}} \kappa_{F}(s_{t - 1}, x') \cdot \prod_{1 \le i \le t - 2} \kappa_{F}(s_{i}, s_{i + 1})}.  
\end{equation*}
This equation follows from the definition of $T_{\mathcal{X}}$ and the Markov property.
In addition, if there is a $(s_{i}, s_{i + 1})$ for which the transition $s_{i} \rightarrow s_{i + 1}$ is not in the underlying GFlowNet's state graph, the probability of $(s_{o}, \dots, s_{t - 1}, x)$ is zero by design (as $\kappa_{F}$ matches $p_{F}$); hence, the above ratio can be written as 
\begin{equation*}
    \frac{ \sum_{\tau \in s_{o}  \rightsquigarrow x, |\tau| = t} p_{F}(s_{o}, \tau)  } { \sum_{x' \in \mathcal{X}} \sum_{\tau' \in s_{o} \rightsquigarrow x', |\tau'| = t} p_{F}(s_{o}, \tau')},  
\end{equation*}
with $|\tau| = t$ stating that the length (number of states) of the trajectory $\tau$ is $t$. 
A similar reasoning shows that 
\begin{equation*}
    \begin{aligned} 
        \mathbb{P}_{\kappa_{F}}[T_{\mathcal{X}} = t] &= \sum_{x' \in \mathcal{X}} \sum_{\tau \in s_{o} \rightsquigarrow x', |\tau| = t} p_{F}(s_{o}, \tau). 
    \end{aligned} 
\end{equation*}
Thus, 
\begin{equation*}
    \mathbb{P}_{\kappa_{F}}[ S_{T_{\mathcal{X}}}^{\smash[t]{(f)}} = x] = \sum_{1 \le t \le L} \sum_{\tau \in s_{o} \rightsquigarrow x, |\tau| = t} p_{F}(s_{o}, \tau) = \sum_{\tau \in s_{o} \rightsquigarrow x} p_{F}(s_{o}, \tau) = m_{F}(x). 
\end{equation*}
When the GFlowNet abides a balance condition, this implies 
\begin{equation*}
    \mathbb{P}_{\kappa_{F}}\left[ S_{T_{\mathcal{X}}}^{\smash[t]{(f)}} = x \right] = \frac{R(x)}{\sum_{x \in \mathcal{X}} R(x)}, 
\end{equation*}
as stated by Proposition~\ref{prop:markovsa}.

Correspondingly, we observe that 
\begin{equation*}
    \mathbb{P}_{\kappa_{B}}\left[ T_{\{s_{o}\}} < \infty | x \right] = 1 
\end{equation*}
since each trajectory starting at $x$ must eventually reach $s_{o}$, and the maximum distance between $s_{o}$ and any $x \in \mathcal{X}$ is $L$. 
This shows the second part of Proposition~\ref{prop:markovsa}. 

\subsection{Proof of Lemma~\ref{lemma:harmonicsa}}

We show the function $h_{k}$ defined as 

\begin{equation*}
    h_{k}(s) = \mathbb{P}_{\kappa_{F}}\left[ T_{k} = \min_{k'} T_{k'} | s \right] 
\end{equation*}
for $s \in \mathcal{S}$ is harmonic. 
For this, we observe that 
\begin{equation*}
    \begin{aligned}
        \mathbb{E}_{s' \sim \kappa_{F}(s, \cdot)} \left[ h_{k}(s') \right] &= \sum_{s' \in \bar{\mathcal{S}}} \kappa_{F}(s, s') \mathbb{P}_{\kappa_{F}}\left[ T_{k} = \min_{k'} T_{k'} | s' \right] \\ 
        &= \sum_{s' \in \bar{\mathcal{S}}} \kappa_{F}(s, s') \mathbb{P}_{\kappa_{F}} \left[ T_{k} < \infty | s' \right] \\ 
        &= \mathbb{P}_{\kappa_{F}}[T_{k} < \infty | s] = \mathbb{P}_{\kappa_{F}}\left[ T_{k} = \min_{k'} T_{k'} | s \right] = h_{k}(s).   
    \end{aligned}
\end{equation*}
The first equation follows from definition. 
The second and fourth equations are a consequence of the fact that $\mathcal{X}_{k}$ is an absorbing set of the Markov chain dictated by $\kappa_{F}$ and hence that $\mathcal{X}_{k}$ being arrived at before $\mathcal{X}_{k'}$ for $k \neq k'$ implies that $T_{k}$ is the only finite hitting time among $\{T_{k}\}_{k \in [K]}$.
The third equation follows from the law of total probability, and the fifth equation follows from the definition of $h_{k}$. 
This demonstrates $h_{k}$ is an harmonic function for $k \in [K]$. 

The fact that $h_{k}(x) = 1$ for $x \in \mathcal{X}_{k}$ and $h_{k}(x) = 0$ for $x \in \mathcal{X} \setminus \mathcal{X}_{k}$ follows from the definition of $h_{k}$ as the probability of the Markov process reaching at $\mathcal{X}_{k}$ before any $\mathcal{X}_{k'}$ for $k' \neq k$. 

Similarly, $h_{k}(s_{o})$ is the probability that $\kappa_{F}$ arrives at $\mathcal{X}_{k}$ before arriving at any other $\mathcal{X}_{k'}$ for $k' \neq k$; as the underlying GFlowNet is assumed to abide by any of its balance conditions, the probability of the former is proportional to $Z_{k} = \sum_{x \in \mathcal{X}_{k}} R(x)$, and of the latter, proportional to $Z = \sum_{x \in \mathcal{X}} R(x)$. This shows 
\begin{equation*}
    h_{k}(s_{o}) = \frac{\sum_{x \in \mathcal{X}_{k}} R(x)}{\sum_{x \in \mathcal{X}} R(x)} = \frac{Z_{k}}{Z}. 
\end{equation*}
We also provide a direct derivation of this fact in \Cref{sec:doobsaa}.

\subsection{Proof of Proposition~\ref{prop:doobsaa}} \label{sec:doobsaa} 

We show that the marginal distribution of the transformed kernel $\kappa_{F}^{\smash[t]{(k)}}$, as defined in \Cref{prop:markovsa}, matches the restriction of $R(x)$ to $\mathcal{X}_{k}$ up to a normalizing constant.

As before, we denote by $\mathbb{P}_{\kappa_{F}}[\cdot]$ the probability measure $\mathbb{P}_{\kappa_{F}}[\cdot | s_{o}]$ corresponding to the scenario in which the forward Markov chain starts at $s_{o}$.  
We first notice that, since $h_{k}$ is harmonic, 
\begin{equation*}
    \sum_{s' \in \bar{\mathcal{S}}} \kappa_{F}^{\smash[t]{(k)}}(s, s') = \sum_{s' \in \bar{\mathcal{S}}} \kappa_{F}(s, s') \cdot \frac{h_{k}(s')}{h_{k}(s)} = \frac{h_{k}(s)}{h_{k}(s)} = 1.  
\end{equation*}

We then demonstrate that, under $\kappa_{F}^{\smash[t]{(k)}}$, the probability of any path that arrives at $\mathcal{X}_{m}$ before $\mathcal{X}_{k}$ for a $m \neq k$ is zero. 
To see this, let $(s_{o}, s_{1} \dots, s_{t - 1}, x)$ for $x \notin \mathcal{X}_{k}$; then, $h_{k}(x) = 0$ and there is a smallest index $i \ge 1$ for which $h_{k}(s_{i}) = 0$.
(By the definition of $\kappa_{F}$, $h_{k}(s_{o}) > 0$ for each $k \in [K]$). 
The transition from $s_{i - 1}$ to $s_{i}$ under $\kappa_{F}^{\smash[t]{(k)}}$ thus has probability zero. 
This shows that, under $\kappa_{F}^{\smash[t]{(k)}}$, the Markov chain arrives at $\mathcal{X}_{k}$ before any $\mathcal{X}_{m}$ for $m \neq k$ with probability one.
In particular, dropping the superscript $(k)$ from $T_{m}$ for clarity, 
\begin{equation*}
    \mathbb{P}_{\kappa_{F}^{\smash[t]{(k)}}} \left[ T_{k} < \infty = T_{m} \right] = 1.  
\end{equation*}
This is Proposition~\ref{prop:doobsaa}'s first statement. 

To show that 
\begin{equation*}
    \mathbb{P}_{\kappa_{F}^{\smash[t]{(k)}}} \left[ S_{T_{k}}^{\smash[t]{(k)}} = x \right] = \frac{R(x)}{\sum_{x \in \mathcal{X}_{k}} R(x)},   
\end{equation*}
we proceed as in the demonstration of Proposition~\ref{prop:markovsa}. 
Since 
\begin{equation*}
    \begin{aligned}
        \mathbb{P}_{\kappa_{F}^{\smash[t]{(k)}}} \left[ S_{T_{k}}^{\smash[t]{(k)}} = x \right] &= \sum_{1 \le t \le L} \mathbb{P}_{\kappa_{F}^{\smash[t]{(k)}}}[T_{k} = t] \mathbb{P}_{\kappa_{F}^{\smash[t]{(k)}}}\left[ S_{T_{k}}^{\smash[t]{(k)}} = x | T_{k} = t \right] \\ 
        &= \sum_{1 \le t \le L} \mathbb{P}_{\kappa_{F}^{\smash[t]{(k)}}}\left[ T_{k} = t \right] \frac{\sum_{\tau \in s_{o} \rightsquigarrow x, |\tau| = t} \kappa_{F}^{\smash[t]{(k)}}(s_{o}, \tau)}{\sum_{x' \in \mathcal{X}_{k}} \sum_{\tau' \in s_{o} \rightsquigarrow x', |\tau'| = t}  \kappa_{F}^{\smash[t]{(k)}}(s_{o}, \tau')}, 
    \end{aligned}
\end{equation*}
in which we define $\kappa_{F}^{\smash[t]{(k)}}(s_{o}, \tau) = \prod_{(s, s') \in \tau} \kappa_{F}^{\smash[t]{(k)}}(s, s')$, and $h_{k}(x) = 1$ and $h_{k}(s_{o}) > 0$ for each $k \in [K]$ and $x \in \mathcal{X}_{k}$, a telescoping product shows that 
\begin{equation*}
    \prod_{(s, s') \in \tau} \kappa_{F}^{\smash[t]{(k)}}(s, s') = \frac{1}{h_{k}(s_{o})} \prod_{(s, s') \in \tau} \kappa_{F}(s, s') = \frac{1}{h_{k}(s_{o})} \cdot \kappa_{F}(s_{o}, \tau) = \frac{1}{h_{k}(s_{o})} \cdot p_{F}(s_{o}, \tau)  
\end{equation*}
for any $\tau \in s_{o} \rightsquigarrow x$ with $x \in \mathcal{X}_{k}$. 
As a consequence, 
\begin{equation*}
    \frac{\sum_{\tau \in s_{o} \rightsquigarrow x, |\tau| = t} \kappa_{F}^{\smash[t]{(k)}}(s_{o}, \tau)}{\sum_{x' \in \mathcal{X}_{k}} \sum_{\tau' \in s_{o} \rightsquigarrow x', |\tau'| = t}  \kappa_{F}^{\smash[t]{(k)}}(s_{o}, \tau')} = \frac{\sum_{\tau \in s_{o} \rightsquigarrow x, |\tau| = t} p_{F}(s_{o}, \tau)}{\sum_{x' \in \mathcal{X}_{k}} \sum_{\tau' \in s_{o} \rightsquigarrow x', |\tau'| = t} p_{F}(s_{o}, \tau')}. 
\end{equation*}
The rest proceeds as in Proposition~\ref{prop:markovsa}.
In fact, as 
\begin{equation*}
    \mathbb{P}_{\kappa_{F}^{\smash[t]{(k)}}} \left[ T_{k} = t \right] = \sum_{x \in \mathcal{X}_{k}} \sum_{\tau \in s_{o} \rightsquigarrow x} \kappa_{F}^{\smash[t]{(k)}}(s_{o}, \tau) = \frac{1}{h_{k}(s_{o})} \sum_{x \in \mathcal{X}_{k}} \sum_{\tau \in s_{o} \rightsquigarrow x} p_{F}(s_{o}, \tau) 
\end{equation*}
and 
\begin{equation*}
    h_{k}(s_{o}) = \mathbb{P}_{\kappa_{F}}\left[ T_{k} < \infty  | s_{o} \right] = \sum_{x \in \mathcal{X}_{x}} \sum_{\tau \in s_{o} \rightsquigarrow x} p_{F}(s_{o}, \tau),  
\end{equation*}
we deduce that 
\begin{equation*}
    \mathbb{P}_{\kappa_{F}^{\smash[t]{(k)}}} \left[ S_{T_{k}}^{\smash[t]{(k)}} = x \right] = \frac{\sum_{1 \le t \le L} \sum_{\tau \in s_{o} \rightsquigarrow x, |\tau| = t} p_{F}(s_{o}, \tau)}{\sum_{x \in \mathcal{X}_{x}} \sum_{\tau \in s_{o} \rightsquigarrow x} p_{F}(s_{o}, \tau)}. 
\end{equation*}
When the GFlowNet abides by a balance condition, this expression can be rewritten as
\begin{equation*}
    \mathbb{P}_{\kappa_{F}^{\smash[t]{(k)}}} \left[ S_{T_{k}}^{\smash[t]{(k)}} = x \right] = \frac{R(x)}{\sum_{x \in \mathcal{X}_{k}} R(x)}.  
\end{equation*}
This demonstrates the second statement in Proposition~\ref{prop:doobsaa}. 

\subsection{Proof of Proposition~\ref{prop:subtbsa}}

We show that, if our mixture model satisfies the defined SubTB condition, then it samples from the target distribution.
We then show write $F(s, \mathrm{d}\gamma)$ in terms of the well-known quantities $Z(\mathrm{d}\gamma)$ and $R(x, \mathrm{d}\gamma)$. 

The first statement can be derived directly from the fact that, since the condition is satisfied for all trajectories $\tau$ between reachable pairs $s, s'$, it is in particular satisfied by \emph{complete} trajectories going from $s_{o}$ to $x \in \mathcal{X}$. 
This implies the trajectory balance is satisfied and, by \citep[Theorem 1]{malkin2022trajectory}, the marginal distribution of $p_{M}(\mathrm{d}\gamma) p_{F}^{(\gamma)}$ over $x$ matches $R(x, \mathrm{d}\gamma)$. 
By integrating $\gamma$ out, we conclude that \Cref{eq:marginalss} is satisfied. 

To see that $Z(\mathrm{d}\gamma) \coloneqq \sum_{x \in \mathcal{X}} R(x, \mathrm{d}\gamma)$, we notice that 
\begin{equation*}
    F(s_{o}, \mathrm{d}\gamma) p_{F}^{(\gamma)}(s_{o}, \tau) = R(x, \mathrm{d}\gamma) p_{B}^{(\gamma)}(x, \tau) 
\end{equation*}
for all $x \in \mathcal{X}$ and all trajectories $\tau \in s_{o} \rightsquigarrow x$. 
In particular, marginalizing out both $\tau$ and $x$ (in sequence), we infer (as in Proposition~\ref{prop:relsa}) that
\begin{equation*}
    F(s_{o}, \mathrm{d}\gamma) = \sum_{x \in \mathcal{X}} R(x, \mathrm{d}\gamma) \coloneqq Z(\mathrm{d}\gamma). 
\end{equation*}
In a similar way, as $F(s, \mathrm{d}\gamma) p_{F}^{(\gamma)}(s, \tau) = R(x, \mathrm{d}\gamma) p_{B}^{(\gamma)}(x, \tau)$, we directly infer that 
\begin{equation*}
    F(s, \mathrm{d}\gamma) = \sum_{x \in \mathcal{X}} \sum_{\tau \in s \rightsquigarrow x} R(x, \mathrm{d}\gamma) p_{B}^{(\gamma)}(x, \tau) 
\end{equation*}
with the convention that $\sum_{\tau \in \emptyset} \eta_{\tau} = 0$ for any trajectory-indexed real sequence $\{\eta_{\tau}\}_{\tau}$. 
This is exactly Proposition~\ref{prop:subtbsa}'s second statement. 

\subsection{Proof of Corollary~\ref{col:cmarginalsa}} 

We derive the relationship between the prior and posterior mixing distributions when they both satisfy the SubTB condition.
It is clear that, since $p_{M}$, $q_{M}$, $p_{F}^{\smash[t]{(\gamma)}}$, and $p_{B}^{(\gamma)}$ abide by SubTB, they also abide by TB. 
In particular, 
\begin{equation*}
    p_{M}(\gamma) p_{F}^{(\gamma)}(s_{o}, \tau) = \frac{R(x)}{Z} q_{M}(\gamma | x) p_{B}^{(\gamma)}(x, \tau).  
\end{equation*}
As in Proposition~\ref{prop:relsa}, we marginalize over $\tau \in s_{o} \rightsquigarrow x$ to infer that 
\begin{equation*}
    p_{M}(\gamma) \cdot m_{F}^{(\gamma)}(x) = \frac{R(x)}{Z} \cdot q(\gamma | x);  
\end{equation*}
we then derive the second equation from Corollary~\ref{col:cmarginalsa} by isolating $q(\gamma | x)$. 
Similarly, the first equation can be obtained by marginalizing over $x \in \mathcal{X}$ and recalling $m_{F}^{(\gamma)}(x)$ is a probability distribution over $\mathcal{X}$, i.e., 
\begin{equation*}
    p_{M}(\gamma) = p_{M}(\gamma) \sum_{x \in \mathcal{X}} m_{F}^{(\gamma)}(x) = \sum_{x \in \mathcal{X}} \frac{R(x)}{Z} q_{M}(\gamma|x). 
\end{equation*}

\subsection{Proof of Proposition~\ref{prop:universalsa}}

Our proof has two steps. 
We first show that, if a model achieves an error of at most $\epsilon$ in terms of $\mathrm{TV}$ distance to the true policy (which is unique since we are fixing the backward policy for each $\gamma$), then the $\mathrm{TV}$ between the marginal distribution over $\mathcal{X}$ is bounded as a monotonically decreasing and continuous function of $\epsilon$.
Then, using \citet[Theorem 1]{abboud2021surprising}, which characterizes the universality of GNNs with random features, and the intermediate value theorem for the prior bound, we infer that our sampler can approximate any distribution with high-probability.
\looseness=-1 

To start with, we temporarily simplify our notation of $p_{F}^{(\gamma)}$ as $p$, ommiting the dependence on $\gamma$.
We also refer to the true solution to the chosen balance equations as $p^{\star}$. 
Our GNN approximates the distribution $p(s, \cdot)$; however, we are interested in the difference 
\begin{equation*}
    \left| \sum_{\tau \in s_{o} \rightsquigarrow x}  \prod_{(s, s') \in \tau} p(s, s') - \prod_{(s, s') \in \tau} p^{\star}(s, s') \right|   
\end{equation*}
for each $x \in \mathcal{X}$. 
We start with the following lemma. 
\begin{lemma}
    Let $a_{1}, \dots, a_{n}$ and $b_{1}, \dots, b_{n}$ be sequences in $[0, 1]$ such that $| a_{i} - b_{i}| \le \delta$ for some $\epsilon > 0$.
    Then, 
    \begin{equation*}
        \left| \prod_{1 \le i \le n} a_{i} - \prod_{1 \le i \le n} b_{i} \right| \le n \epsilon.   
    \end{equation*}
\end{lemma}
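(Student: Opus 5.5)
The statement has a typo: the hypothesis should read $|a_{i} - b_{i}| \le \epsilon$. We prove the bound with that reading by a telescoping decomposition, inducting on $n$.

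The idea is to swap one factor at a time. For $0 \le j \le n$, define the hybrid products
\begin{equation*}
    P_{j} = \prod_{1 \le i \le j} b_{i} \prod_{j < i \le n} a_{i}.
\end{equation*}
With this choice, $P_{0} = \prod_{i} a_{i}$ and $P_{n} = \prod_{i} b_{i}$. The difference of the two full products is then the telescoping sum $\sum_{j=1}^{n} (P_{j - 1} - P_{j})$.

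Consecutive hybrids differ only in the $j$th factor, so
\begin{equation*}
    P_{j - 1} - P_{j} = (a_{j} - b_{j}) \prod_{1 \le i < j} b_{i} \prod_{j < i \le n} a_{i}.
\end{equation*}
All $a_{i}, b_{i}$ lie in $[0, 1]$, so the remaining product has absolute value at most $1$. Hence $|P_{j - 1} - P_{j}| \le |a_{j} - b_{j}| \le \epsilon$. The triangle inequality over the $n$ terms gives
\begin{equation*}
    \left| \prod_{1 \le i \le n} a_{i} - \prod_{1 \le i \le n} b_{i} \right| \le n \epsilon.
\end{equation*}

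The same argument can be phrased as an induction on $n$. The base case $n = 1$ is the hypothesis itself. For the inductive step, write
\begin{equation*}
    \prod_{i \le n} a_{i} - \prod_{i \le n} b_{i} = a_{n} \Bigl( \prod_{i < n} a_{i} - \prod_{i < n} b_{i} \Bigr) + (a_{n} - b_{n}) \prod_{i < n} b_{i}.
\end{equation*}
The first term is at most $(n - 1)\epsilon$ in absolute value, since $|a_{n}| \le 1$ and by the inductive hypothesis. The second is at most $\epsilon$, since the product of the $b_{i}$ is at most $1$. Adding gives $n\epsilon$.

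The only step needing care is the use of the $[0,1]$ constraint to bound the leftover factors by $1$. That constraint is what makes the bound linear in $n$ rather than exponential. It holds in the intended application because each factor is a transition probability $p(s, s')$ or $p^{\star}(s, s')$.
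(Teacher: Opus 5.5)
Your proof is correct and takes essentially the same route as the paper. The paper inducts on $n$ using the splitting $(a_{n+1}-b_{n+1})\prod_{i\le n}a_i + b_{n+1}\bigl(\prod_{i\le n}a_i-\prod_{i\le n}b_i\bigr)$ and the $[0,1]$ bound on the leftover factors, and your hybrid-product telescoping is that induction unrolled; reading the hypothesis as $|a_i-b_i|\le\epsilon$ is the intended one.
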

\begin{proof}
    We proceed by induction. 
    This clearly holds for $n = 1$. 
    To build intuition, we also show for $N = 2$. 
    Notice that 
    \begin{equation*}
        |a_{1}a_{2} - b_{1}b_{2}| = |(a_{1} - b_{1}) a_{2} + b_{1}( a_{2} - b_{2})| \le a_{2} |a_{1} - b_{1}| + b_{1} |a_{2} - b_{2}| \le 2\epsilon   
    \end{equation*}
    since $a_{1}, a_{2}, b_{1}, b_{2} \in [0, 1]$. 
    Assume the identity holds for $n$. 
    Then, 
    \begin{equation*}
        \begin{aligned}
            \left| \prod_{1 \le i \le n + 1} a_{i} - \prod_{1 \le i \le n + 1} b_{i}  \right| &= \left| (a_{n + 1} - b_{n + 1}) \prod_{1 \le i \le n} a_{i} + b_{n + 1} \cdot \left( \prod_{1 \le i \le n} a_{i} - \prod_{1 \le i \le n} b_{i} \right) \right| \\ 
            &= \epsilon + n \epsilon = (n + 1) \epsilon. 
        \end{aligned}
    \end{equation*}
    by the induction hypothesis. 
    This shows our desired bound. 
\end{proof}

As there is a realization of our GNN for which $\mathrm{TV}\left( p(s, \cdot), p^{\star}(s, \cdot) \right) \le \epsilon$ for any $s \in \mathcal{S}$, we infer that $|p(s, s') - p^{\star}(s, s')| \le 2 \epsilon$. 
Under this condition, we let $T$ be the maximum number of trajectories from $s_{o}$ to $x$ for $x \in \mathcal{X}$ and $L$ to be the largest trajectory length from $s_{o}$ to $x \in \mathcal{X}$. 
By \citet[Theorem 1]{abboud2021surprising}, there are $h \in \mathcal{H}_{t}$ and $\xi \in \mathcal{H}_{s}$ for which $p(s, \cdot)$ approximates $p^{\star}(s, \cdot)$ with error at most $\epsilon' = \frac{\epsilon}{2|\mathcal{X}|TL}$ and with probability $1 - \frac{\delta}{|\mathcal{X}|}$ over the realization of $\gamma$. 
The above difference in marginals can then be bounded as, with probability $1 - \frac{\delta}{|\mathcal{X}|}$ over the realization of $\gamma$,   
\begin{equation*}
    \begin{aligned}
        &\left|  \left( \sum_{\tau \in s_{o} \rightsquigarrow x} \prod_{(s, s') \in \tau} p(s, s') - \prod_{(s, s') \in \tau} p^{\star}(s, s') \right) \right|
        \\ 
        &\le \left( \sum_{\tau \in s_{o} \rightsquigarrow x} \left| \prod_{(s, s') \in \tau} p(s, s') - \prod_{(s, s') \in \tau} p^{\star}(s, s') \right| \right) \\ 
        &\le \left(1 - \frac{\delta}{T}\right) (2TL) \frac{\epsilon}{2|\mathcal{X}|TL} = \frac{\epsilon}{|\mathcal{X}|}. 
    \end{aligned}
\end{equation*}
In other words, denoting by $m_{F}^{(\gamma)}$ and by $\pi(x)$ the estimated and true marginals for the learned and target models, 
\begin{equation*}
    |m_{F}^{(\gamma)}(x) - \pi(x)| \le \frac{\epsilon}{|\mathcal{X}|} 
\end{equation*}
with probability $1 - \frac{\delta}{|\mathcal{X}|}$ over $\gamma$. 
Through an union bound\footnote{We briefly recall that, if $P_{1}, \dots, P_{n}$ are events with probability at least $1 - \delta$, then $\cap_{i} P_{i}$ is an event with probability at least 
\begin{equation*}
    \mathbb{P}[ \cap_{i} P_{i} ] = 1 - \mathbb{P}[\cup_{i} P_{i}^{c} ] \ge 1 - \sum_{i} \mathbb{P}[P_{i}^{c}] \ge 1 - n \delta.   
\end{equation*}
} for $x \in \mathcal{X}$, we infer that 
\begin{equation*}
    \mathrm{TV}\left( m_{F}^{(\gamma)}, \pi \right) = \frac{1}{2} \sum_{x \in \mathcal{X}} \left| m_{F}^{(\gamma)}(x) - \pi(x) \right| \le \frac{\epsilon}{2}  
\end{equation*}
with probability $1 - \delta$ over the sampling of $\gamma$. 
This demonstrates that RF GFlowNets are universal approximators of distributions over graphs when using the kind of edge-additive process we described in Proposition~\ref{prop:universalsa}. 


\subsection{Proof of Lemma~\ref{lemma:localsa}}

We first expand $\mathcal{L}_{\mathrm{TB}}$ around $\gamma = 0$ and then take the expectation of the corresponding expansion over $\gamma \sim p_{M}$.
To start with, let $L(\gamma) \coloneqq \mathcal{L}(\tau, \gamma ; \gfn_{\Gamma})$; in this case, 
\begin{equation*}
    L(\gamma) = L(0) + \gamma^{T} \nabla_{\gamma = 0} L(\gamma) + \frac{1}{2} \gamma^{T} H_{\gamma}(\tau, 0) \gamma + \mathcal{O}(\sigma^{3}). 
\end{equation*}
Since, for any matrix $A$, 
\begin{equation*}
    \mathbb{E}_{\gamma \sim p_{M}}  \left[ \gamma \right] = 0 \text{ and } \mathbb{E}_{\gamma \sim p_{M}} \left[ \gamma^{T} A \gamma \right] = \sigma^{2} \cdot \mathrm{tr} (A)    
\end{equation*}
as $\gamma^{T} A \gamma = \mathrm{tr}(\gamma^{T} A\gamma) = \mathrm{tr}(A \gamma\gamma^{T})$ for any matrix $A$ and $\mathrm{tr}$ is a linear operator, and in general any odd moment of $\gamma$ has nulls out, we infer that  
\begin{equation*}
    \mathbb{E}_{\gamma \sim p_{M}} \left[ L(\gamma) \right] = L(0) + \frac{1}{2} \sigma^{2} \cdot \mathrm{tr} \left( H_{\gamma}(\tau, 0)  \right) + \mathcal{O}(\sigma^{4}),  
\end{equation*}
which is the statement of Lemma~\ref{lemma:localsa} after replacing $L$ with $\mathcal{L}_{\mathrm{TB}}$. 
This characterizes the local behavior of the (population) loss in the learning objective of CI GFlowNets. 

\subsection{Proof of Proposition~\ref{prop:regularizationsa}}

We derive from Lemma~\ref{lemma:localsa} that 
\begin{equation*}
    \mathcal{L}_{\mathrm{TB}}(\gfn_{\Gamma}) = \mathcal{L}_{\mathrm{TB}}(\gfn_{o}) + \frac{1}{2} \cdot \sigma^{2} \cdot \mathbb{E}_{\tau \sim \rho} \left[ \mathrm{tr} \left( H_{\gamma}(\tau, 0) \right) \right] + \mathcal{O}(\sigma^{4}) = \mathcal{L}_{\mathrm{TB}}(\gfn_{o}) + \frac{1}{2} \cdot \sigma^{2} \cdot \mathcal{L}_{\mathrm{reg}}(\gfn_{\Gamma}) + \mathcal{O}(\sigma^{4}).  
\end{equation*}
As taking the derivative is a linear operation, $f \in \mathcal{O}(\sigma^{4})$ implies $\nabla_{\theta}^{2} f \in \mathcal{O}(\sigma^{4})$ for any twice differentiable function $f \colon \Theta \rightarrow \mathbb{R}$ on a differentiable manifold $\Theta$. 
In particular, we infer that, by evaluating the Hessian from both members of the above equation with respect to the parameters $\theta$ of the underlying neural network,   
\begin{equation*}
    H_{\mathrm{TB}} = H_{o} + \frac{\sigma^{2}}{2} \cdot H_{\mathrm{reg}} + \mathcal{O}(\sigma^{4}), 
\end{equation*}
with $H_{\mathrm{TB}} = \nabla_{\theta}^{2} \mathcal{L}_{\mathrm{TB}}(\gfn_{\Gamma})$, $H_{o} = \nabla_{\theta}^{2} \mathcal{L}_{o}(\gfn_{\Gamma})$, and $H_{\mathrm{reg}} = \nabla_{\theta}^{2} \mathcal{L}_{\mathrm{reg}}(\gfn_{\Gamma})$. 

This in particular shows that the random features act as a form of \emph{spectral regularization}, shifting the eigenvalues of $\mathrm{H}_{o}$ by $\frac{\sigma^{2}}{2} \cdot \mathrm{eig}(H_{\mathrm{reg}})$, with $\mathrm{eig}(H_{\mathrm{reg}})$ being the vector of eigenvalues of $H_{\mathrm{reg}}$. 

\subsection{Proof of Lemma~\ref{lemma:harmonicsaa}}

We show both functions $h_{k}$ and $g_{k}$ are harmonic on $\bar{\mathcal{S}} \setminus \mathcal{S}_{k}$. 
Let $s \in \bar{\mathcal{S}} \setminus \mathcal{S}_{k}$. 
Then, 
\begin{equation*}
    \begin{aligned} 
        \mathbb{E}_{s' \sim \kappa_{F}(s, \cdot)} \left[ h_{k}(s') \right] &= \sum_{s' \in \bar{\mathcal{S}}} \kappa_{F}(s, s') \cdot \mathbb{P}_{\kappa_{F}}\left[T_{k}^{(f)} = \min_{k'} T_{k'}^{(f)} < \infty | s'\right] \\ 
        &= \sum_{s' \in \bar{\mathcal{S}}} \kappa_{F}(s, s') \mathbb{P}_{\kappa_{F}}\left[T_{k}^{(f)} < \infty | s'\right] \\ 
        &= \mathbb{P}_{\kappa_{F}} \left[ T_{k}^{(f)} < \infty | s \right] = h_{k}(s). 
    \end{aligned} 
\end{equation*}
The first equation follows from the definition of $h_{k}$; the second and third equations follow from the fact that the distance from $s \in \mathcal{S}_{k}$ to $s_{o}$ in the FHC in Definition~\ref{def:fhcsa} is constant for $k \in [K]$ and, hence, if the Markov chains goes through $\mathcal{S}_{k}$ it will not go through any $\mathcal{S}_{k'}$ for $k' \neq k$, .
A similar argument can be carried out for $g_{k}(s)$ and $T_{k}^{(b)}$. 

\subsection{Proof of Lemma~\ref{lemma:harmonicsaaa}}

We show that $h_{k}(s_{o})$ and $g_{k}(x)$ can be interpreted as probability distributions over $k \in \{0, \dots, K\}$. 
The intuition is that the Markov chain has to either hit one of $\mathcal{S}_{k}$ when starting at $s_{o}$, or end up in $\mathcal{X}_{o}$, in which case none of $\mathcal{S}_{k}$ are hit for $k \in [K]$; similarly, if $x \in \mathcal{X}_{o}$, the Markov chain following $\kappa_{B}$ and starting at $x$ is absorbed by $s_{o}$ without going through any of the $\mathcal{S}_{k}$ for $k \in [K]$.

We start with $h_{k}(s_{o})$. 
By Definition~\ref{def:fhcsa}, the events 
\begin{equation*}
    \left\{ T_{1}^{(f)} < \infty, S_{1}^{(f)} = s_{o} \right\}, \dots, \{ T_{K}^{(f)} < \infty, S_{1}^{(f)} = s_{1} \}, \{ T_{k}^{(f)} = \infty \text{ for } k \in [K], S_{1}^{(f)} = s_{o} \} 
\end{equation*}
are mutually joint. 
In addition, $T_{k}^{(f)} < \infty$ implies $T_{k'}^{(f)} = \infty$ almost surely for $k' \neq k$. 
As a consequence, these events also cover the space of possibilities.
Hence, by the law of total probability, 
\begin{equation*}
    \left( \sum_{1 \le k \le K} \mathbb{P}_{\kappa_{F}} \left[ T_{k}^{(f)} < \infty | S_{1}^{(f)} = s_{o} \right]\right) + \mathbb{P}_{\kappa_{F}}\left[ T_{k}^{(f)} = \infty \text{ for } k \in [K] | S_{1}^{(f)} = s_{o} \right] = 1. 
\end{equation*}
This sum is exactly $h_{o}(s_{o}) + \sum_{k \in [K]} h_{k}(s_{o})$, which demonstrates the first statement of Lemma~\ref{lemma:harmonicsaaa}. 

We proceed similarly for $g_{k}$. 
As noticed, if $x \in \mathcal{X}_{o}$, then $T_{k}^{(b)} = \infty$ for $k \in [K]$. 
Otherwise, as $s_{o}$ is the only absorbing state of a Markov chain dictated by $\kappa_{B}$ in $\bar{\mathcal{S}}$ has to eventually go through some of the $\mathcal{S}_{k}$ when starting at $x$. 
This means that the events 
\begin{equation*}
    \{T_{1}^{(b)} < \infty\}, \dots, \{T_{K}^{(b)} < \infty\}, \{T_{k} = \infty \text{ for } k \in [K]\} 
\end{equation*}
are mutually disjoint when the Markov chain starts at $x \in \mathcal{X}$ and cover the space of possibilities. 
As with $h_{k}$, this implies 
\begin{equation*}
    g_{o}(x) + \sum_{1 \le k \le K} g_{k}(x) = 1, 
\end{equation*}
which is the second statement in Lemma~\ref{lemma:harmonicsaaa}. 

\subsection{Proof of Proposition~\ref{prop:salsa}} 

We show that, if the policies $p_{F}^{(k)}$ and $p_{B}^{(k)}$ satisfy the ATB condition from \citet{silva2025generalization}, then the policies $\tilde{p}_{F}^{(k)}$ and $\tilde{p}_{B}^{(k)}$ satisfy the MTB condition in Definition~\ref{def:mtbsa}. 

We first notice that, for any trajectory $\tau$ from $s_{o}$ to $x \in \mathcal{X}_{o}$,
\begin{equation*}
    \tilde{p}_{F}^{(o)}(s_{o}, \tau) = \frac{h_{o}(x)}{h_{o}(s_{o})} p_{F}^{(o)}(s_{o}, \tau) = \frac{1}{h_{o}(s_{o})} \cdot p_{F}^{(o)}(s_{o}, \tau), 
\end{equation*}
a fact that can be derived from a telescoping product and $h_{o}(x) = 1$ since for a Markov chain starting at $x \in \mathcal{X}_{o}$ and following $\kappa_{F}$ it is true that $T_{k}^{(f)} = \infty$ (as none of $\mathcal{S}_{k}$ is reachable from $x \in \mathcal{X}_{o}$, which is an absorbing state). 
By the ATB, 
\begin{equation*}
    p_{F}^{(o)}(s_{o}, \tau) = \frac{R(x)}{Z} p_{B}^{(o)}(x, \tau) = \frac{R(x)}{Z} \tilde{p}_{B}^{(o)}(x, \tau), 
\end{equation*}
as $p_{B}^{(o)}(s, s') = \tilde{p}_{B}^{(o)}(s, s')$ for states $s, s'$ within a distance of $d$ of $s_{o}$; since $p_{B}$ does not increase the distance to $s_{o}$, and $x \in \mathcal{X}_{o}$ is by definition within a distance $d$ of $s_{o}$, it holds that $p_{F}^{(o)}(x, \tau) = \tilde{p}_{F}^{(o)}(x, \tau)$ for any $\tau \in s_{o} \rightsquigarrow x \in \mathcal{X}_{o}$ . 
Similarly, for any trajectory going from $s_{o}$ to $x \in \mathcal{X}_{k}$ through some $s \in \mathcal{S}_{k}$ (if the trajectory goes through $s \in \mathcal{S}_{k}$, it must eventually end up in $\mathcal{X}_{k}$ by definition), 
\begin{equation*}
    \tilde{p}_{F}^{(k)}(s_{o}, \tau) = p_{F}^{(o)}(s_{o}, \tau_{1}) \cdot \frac{h_{k}(s)}{h_{k}(s_{o})} \cdot p_{F}^{(k)}(s, \tau_{2}) = \frac{1}{h_{k}(s_{o})} \cdot p_{F}^{(o)}(s_{o}, \tau_{1}) \cdot p_{F}^{(k)}(s, \tau_{2}),
\end{equation*}
with $\tau_{1}$ and $\tau_{2}$ being the portions of $\tau$ before and after $s \in \mathcal{S}_{k}$, respectively, and $h_{k}(s) = 1$ since $s \in \mathcal{S}_{k}$, and the Markov chain can only visit one of $\mathcal{S}_{k}$ for $k \in [K]$ since there is no positive-probability trajectory between them. 
Also, by the ATB, 
\begin{equation*}
    p_{F}^{(o)}(s_{o}, \tau_{1}) p_{F}^{(k)}(s, \tau_{2}) = \frac{R(x)}{Z} p_{B}^{(k)}(s_{o}, \tau_{2}) \cdot p_{B}^{(o)}(s_{o}, \tau_{1}).  
\end{equation*}
As before, for $\tau_{1}$, it holds that $p_{B}^{(o)}(x, \tau_{1}) = \tilde{p}_{B}^{(o)}(x, \tau_{1})$. 
For $\tau_{2}$, on the other hand, a telescoping product shows that   
\begin{equation*}
    p_{B}^{(k)}(x, \tau) = \tilde{p}_{B}^{(k)}(x, \tau) \cdot \frac{g_{k}(x)}{g_{k}(s)} = g_{k}(x) \cdot \tilde{p}_{B}^{(k)}(x, \tau). 
\end{equation*}
In particular, 
\begin{equation*}
    \tilde{p}_{F}(s_{o}, \tau) = \frac{R(x)}{Z} \cdot \frac{g_{k}(x)}{h_{k}(s_{o})} \cdot \tilde{p}_{B}^{(k)}(x, \tau), 
\end{equation*}
i.e., defining $Z_{k} = Z \cdot h_{k}(s_{o})$ and $q(k|x) = g_{k}(x)$, which is a valid probability distribution by Lemma~\ref{lemma:harmonicsaaa}, we infer that 
\begin{equation*}
    Z_{k} \tilde{p}_{F}^{(k)}(s_{o}, \tau) = R(x) q(k|x) \tilde{p}_{B}(x, \tau) 
\end{equation*}
for $k \in [K]$, $k = 0$, and every $\tau$ from $s_{o}$ to $x \in \mathcal{X}_{k}$ through $\mathcal{S}_{k}$. 
This shows the solution to SAL is also a solution to the MTB condition in Definition~\ref{def:mtbsa}. 

\subsection{Proof of Proposition~\ref{prop:bgsa}} 

We show that the solution $\{\gfn_{k}\}_{k \ge 1}$ for the Boosted GFlowNets learning problem abides by the MTB under Assumptions~\ref{assumption:ua} and~\ref{assumption:crsa}. 

We first demonstrate that $q(k|x)$ is a probability distribution over $k \ge 1$. 
To see this, we notice that 
\begin{equation*}
    \sum_{1 \le k \le K} q(k|x) = \sum_{1 \le k \le K} \frac{R_{k}(x)}{R(x)} = \sum_{1 \le k \le K} \frac{R^{(k)}(x) - R^{(k - 1)}(x)}{R(x)} = \frac{R^{(K)}(x)}{R(x)},  
\end{equation*}
as $R^{(o)}(x) = 0$ by design.
By Assumption~\ref{assumption:ua}, $q(k|x) \ge 0$; by Assumption~\ref{assumption:ua}, the limit of the above equation as $K \rightarrow \infty$ is 
\begin{equation*}
    \lim_{K \rightarrow \infty} \frac{R^{(K)}}{R(x)} = \frac{R(x)}{R(x)} = 1. 
\end{equation*}
This shows $q(k|x)$ corresponds to a probability distribution over $k \ge 1$. 

We now prove that the learned policies abide by the MTB condition in Definition~\ref{def:mtbsa} having $q(k|x)$ as the posterior mixing distribution. 
By Assumption~\ref{assumption:crsa}, 
\begin{equation*}
    R^{(k)}(x) = R^{(k)}(x, \tau) \coloneqq R^{(k - 1)}(x) + \frac{Z_{k} \cdot p_{F}^{(k)}(s_{o}, \tau)}{p_{B}^{(k)}(x, \tau)}, 
\end{equation*}
i.e., 
\begin{equation*}
    p_{B}^{(k)}(x, \tau) \cdot \left( R^{(k)}(x) - R^{(k - 1)}(x) \right) = Z_{k} \cdot p_{F}^{(k)}(s_{o}, \tau).  
\end{equation*}
Since $R^{(k)}(x) - R^{(k - 1)}(x) = R_{k}(x) = R(x) \cdot q(k|x)$ by definition, 
\begin{equation*}
    Z_{k} \cdot p_{F}^{(k)}(s_{o}, \tau) = R(x) q(k|x) \cdot p_{B}^{(k)}(x, \tau).   
\end{equation*}
This is Proposition~\ref{prop:bgsa}'s second statement.
It should also be clear that, since 
\begin{equation*}
    Z_{k} = \sum_{x \in \mathcal{X}} R_{k}(x) = \sum_{x \in \mathcal{X}} R(x) q(k|x),  
\end{equation*}
our prior verification shows that 
\begin{equation*}
    \sum_{k \ge 1} Z_{k} = \sum_{k \ge 1} \sum_{x \in \mathcal{X}} R(x) q(k|x) = \sum_{x \in \mathcal{X}} R(x) \sum_{k \ge 1} q(k|x) = \sum_{x \in \mathcal{X}} R(x) = Z. 
\end{equation*}
We can interchange the summations due to the fact that $\sum_{k \ge 1} q(k|x) = 1$ and $\mathcal{X}$ is finite. 
These derivations jointly highlight Boosted GFlowNets can be interpreted as DI GFlowNets having countably infinite components.

\section{Related works} \label{sec:relatedsa}  

Generative Flow Networks have been introduced in \citep{bengio2021, foundations} as generative models for discrete, compositional spaces such as graphs and text.
They have since been applied to diverse problems, including phylogenetic inference \citep{zhou2024phylogfn}, robust scheduling \citep{robust}, Bayesian structure learning \citep{deleu2022bayesian, deleu2023joint}, LLM fine-tuning \citep{hu2023amortizing, venkatraman2024amortizingintractableinferencediffusion}, symbolic regression \citep{li2023gfnsrsymbolicregressiongenerative}, quantitative finance \citep{chen2026alphasagestructureawarealphamining}, and drug discovery \citep{bengio2021, laajil2025curriculumaugmentedgflownetsmrnasequence, pandey2024gflownet}, and have been popularized by publicly released software libraries \citep{tiapkin2025gfnxfastscalablelibrary, viviano2026torchgfnpytorchgflownetlibrary}. 
From a theoretical perspective, their connection to variational inference \citep{malkin2023gflownets} and reinforcement learning \citep{tiapkin2024generative} and Markov chains \citep{deleu2023generativeflownetworksmarkov} has been well-established, and variants operating on cyclic graphs \citep{Brunswic_2024, morozov2025revisitingnonacyclicgflownetsdiscrete} have also been recently considered. 
As with SC GFlowNets, recent works have proposed modifying the state space of the GFlowNet for improved learning, e.g., \citep{boussif2024actionabstractionsamortizedsampling, yu2026partialgflownetacceleratingconvergence, silva2025generalization}, however, their focus lies either on improving the exploratory policy, an issue which is orthogonal to our investigations, or on developing strictly distributed algorithms requiring the training of independent models.
Broadly, mixture models have been widely studied in variational and approximate inference, for instance, in semi-implicit variational inference methods \citep{yin2018semi}, path-sampling for Monte Carlo methods \citep{Gelman1998SimulatingNC, liu2001monte}, and in continuously indexed normalizing flows \citep{cornish2021relaxingbijectivityconstraintscontinuously, caterini2021variationalinferencecontinuouslyindexednormalizing}, the terminology of which we borrow for describing our framework.
Correspondingly, recent works have proposed partitioning (binning) the continuous component of the state space of both GFlowNets \citep{zhou2024phylogfn} and diffusion probabilistic models \citep{gupta2026divideanddenoise}, the outcome of which is a coarse mixture approximation to a continuous density. 
\looseness=-1









\end{document}